\def\UrlSpecials{\do\~{\kern -.15em\lower .7ex\hbox{~}\kern .04em}} \catcode`~=13 
\newcommand{\calA}{\mathcal{A}}
\newcommand{\calC}{\mathcal{C}}
\newcommand{\calD}{\mathcal{D}}
\newcommand{\calN}{\mathcal{N}}
\newcommand{\calP}{\mathcal{P}}
\newcommand{\calS}{\mathcal{S}}
\newcommand{\calU}{\mathcal{U}}
\newcommand{\calW}{\mathcal{W}}
\newcommand{\calX}{\mathcal{X}}
\newcommand{\ba}{\mathbf{a}}
\newcommand{\bA}{\mathbf{A}}
\newcommand{\bb}{\mathbf{b}}
\newcommand{\bc}{\mathbf{c}}
\newcommand{\bD}{\mathbf{D}}
\newcommand{\bE}{\mathbf{E}}
\newcommand{\bH}{\mathbf{H}}
\newcommand{\bI}{\mathbf{I}}
\newcommand{\br}{\mathbf{r}}
\newcommand{\bs}{\mathbf{s}}
\newcommand{\bt}{\mathbf{t}}
\newcommand{\bu}{\mathbf{u}}
\newcommand{\bU}{\mathbf{U}}
\newcommand{\bv}{\mathbf{v}}
\newcommand{\bV}{\mathbf{V}}
\newcommand{\bw}{\mathbf{w}}
\newcommand{\bW}{\mathbf{W}}
\newcommand{\bx}{\mathbf{x}}
\newcommand{\bX}{\mathbf{X}}
\newcommand{\by}{\mathbf{y}}
\newcommand{\bz}{\mathbf{z}}
\newcommand{\rmd}{\mathrm{d}}
\newcommand{\rmF}{\mathrm{F}}
\newcommand{\bbE}{\mathbb{E}}
\newcommand{\bbN}{\mathbb{N}}
\newcommand{\bbP}{\mathbb{P}}
\newcommand{\bbR}{\mathbb{R}}
\DeclareMathAlphabet{\mathbsf}{OT1}{cmss}{bx}{n}
\DeclareMathAlphabet{\mathssf}{OT1}{cmss}{m}{sl}
\DeclareSymbolFont{bsfletters}{OT1}{cmss}{bx}{n}  
\DeclareSymbolFont{ssfletters}{OT1}{cmss}{m}{n}
\DeclareMathSymbol{\bsfGamma}{0}{bsfletters}{'000}
\DeclareMathSymbol{\ssfGamma}{0}{ssfletters}{'000}
\DeclareMathSymbol{\bsfDelta}{0}{bsfletters}{'001}
\DeclareMathSymbol{\ssfDelta}{0}{ssfletters}{'001}
\DeclareMathSymbol{\bsfTheta}{0}{bsfletters}{'002}
\DeclareMathSymbol{\ssfTheta}{0}{ssfletters}{'002}
\DeclareMathSymbol{\bsfLambda}{0}{bsfletters}{'003}
\DeclareMathSymbol{\ssfLambda}{0}{ssfletters}{'003}
\DeclareMathSymbol{\bsfXi}{0}{bsfletters}{'004}
\DeclareMathSymbol{\ssfXi}{0}{ssfletters}{'004}
\DeclareMathSymbol{\bsfPi}{0}{bsfletters}{'005}
\DeclareMathSymbol{\ssfPi}{0}{ssfletters}{'005}
\DeclareMathSymbol{\bsfSigma}{0}{bsfletters}{'006}
\DeclareMathSymbol{\ssfSigma}{0}{ssfletters}{'006}
\DeclareMathSymbol{\bsfUpsilon}{0}{bsfletters}{'007}
\DeclareMathSymbol{\ssfUpsilon}{0}{ssfletters}{'007}
\DeclareMathSymbol{\bsfPhi}{0}{bsfletters}{'010}
\DeclareMathSymbol{\ssfPhi}{0}{ssfletters}{'010}
\DeclareMathSymbol{\bsfPsi}{0}{bsfletters}{'011}
\DeclareMathSymbol{\ssfPsi}{0}{ssfletters}{'011}
\DeclareMathSymbol{\bsfOmega}{0}{bsfletters}{'012}
\DeclareMathSymbol{\ssfOmega}{0}{ssfletters}{'012}
\newcommand{\balpha}{\bm{\alpha}}
\newcommand{\bSigma	}{\bm{\Sigma}}
\newtheorem{theorem}{Theorem} 
\newtheorem{lemma}{Lemma}
\newtheorem{assumption}{Assumption}
\newtheorem{definition}{Definition} 
\newtheorem{example}{Example} 
\newtheorem{remark}{Remark}
\newcommand{\qednew}{\nobreak \ifvmode \relax \else
      \ifdim\lastskip<1.5em \hskip-\lastskip
      \hskip1.5em plus0em minus0.5em \fi \nobreak
      \vrule height0.75em width0.5em depth0.25em\fi}
\begin{document}
    
\title{Generative Principal Component Analysis}

\author{Zhaoqiang Liu $^{\ast}$\thanks{$^{\ast}$Corresponding authors.}, Jiulong Liu $^{\ast}$, Subhroshekhar Ghosh, Jun Han, Jonathan Scarlett

\thanks{
Z.~Liu is with the Department of Computer Science, National University of Singapore (email: \url{dcslizha@nus.edu.sg}). 

J.~Liu is with the Academy of Mathematics and Systems Science, Chinese Academy of Sciences (email: \url{jiulongliu@lsec.cc.ac.cn}).

S.~Ghosh is with the Department of Mathematics, National University of Singapore (email: \url{subhrowork@gmail.com}). 

J.~Han is with Platform and Content Group, Tencent (email: \url{junhanjh@tencent.com}). 

J.~Scarlett is with the Department of Computer Science and the Department of Mathematics, National University of Singapore (email: \url{scarlett@comp.nus.edu.sg}).

J.S.~was supported by the Singapore National Research Foundation (NRF) under grant R-252-000-A74-281, and S.G.~was supported in part by the MOE grants R-146-000-250-133, R-146-000-312-114, and MOE-T2EP20121-0013.}}

\maketitle

\begin{abstract}
    In this paper, we study the problem of principal component analysis with generative modeling assumptions, adopting a general model for the observed matrix that encompasses notable special cases, including spiked matrix recovery and phase retrieval. The key assumption is that the underlying signal lies near the range of an $L$-Lipschitz continuous generative model with bounded $k$-dimensional inputs. We propose a quadratic estimator, and show that it enjoys a statistical rate of order $\sqrt{\frac{k\log L}{m}}$, where $m$ is the number of samples. We also provide a near-matching algorithm-independent lower bound. Moreover, we provide a variant of the classic power method, which projects the calculated data onto the range of the generative model during each iteration. We show that under suitable conditions, this method converges exponentially fast to a point achieving the above-mentioned statistical rate. We perform experiments on various image datasets for spiked matrix and phase retrieval models, and illustrate performance gains of our method to the classic power method and the truncated power method devised for sparse principal component analysis.
\end{abstract}

\section{Introduction}\label{sec:intro}

Principal component analysis (PCA) is one of the most popular techniques for data processing and dimensionality reduction~\cite{jolliffe1986}, with an abundance of applications such as image recognition~\cite{hancock1996face}, gene expression data analysis~\cite{alter2000singular}, and clustering~\cite{ding2004k,liu2019informativeness}.  
PCA seeks to find the directions that capture maximal variances in vector-valued data. In more detail, letting $\bx_1,\bx_2,\ldots,\bx_m$ be $m$ realizations of a random vector $\bx \in \bbR^n$ with a population covariance matrix $\bar{\bSigma} \in \bbR^{n\times n}$, PCA aims to reconstruct the top principal eigenvectors of $\bar{\bSigma}$. The first principal eigenvector can be computed as follows:
\begin{equation}
 \bu_1 =\arg\max_{\bw \in \bbR^n} \bw^T \bSigma\bw \quad \text{s.t.} \quad  \|\bw\|_2 = 1,
\end{equation}
where the empirical covariance matrix is defined as $\bSigma := \frac{1}{m}\sum_{i=1}^m (\bx_i - \bc)(\bx_i - \bc)^T$, with $\bc := \frac{1}{m}\sum_{i=1}^m \bx_i$. In addition, subsequent principal eigenvectors can be estimated by similar optimization problems subject to being orthogonal to the previous vectors.

PCA is consistent in the conventional setting where the dimension of the data $n$ is relatively small compared to the sample size $m$~\cite{anderson1962introduction}, but leads to rather poor estimates in the high-dimensional setting where $m \ll n$. In particular, it has been shown in various papers that the empirical principal eigenvectors are no longer consistent estimates of their population counterparts~\cite{nadler2008finite,johnstone2009consistency,jung2009pca,birnbaum2013minimax}. In order to tackle the curse of dimensionality, a natural approach is to impose certain structural constraints on the principal eigenvectors. A common assumption is that the principal eigenvectors are sparse, and this gives rise to the problem of sparse principal component analysis (SPCA)~\cite{zou2006sparse}. In particular, for recovering the top principal eigenvector, the optimization problem of SPCA is given by
\begin{align}\label{eq:spca}
  \bu_1 = \arg \max_{ \bw \in \bbR^n } \bw^T \bSigma \bw  \quad \text{s.t.} \quad \|\bw\|_2 = 1, \|\bw\|_0 \le s,
 \end{align}
where $\|\bw\|_0 = |\{i\,:\, w_i \ne 0\}|$ denotes the number of non-zero entries of $\bw$, and $s \in \bbN$ represents the sparsity level. In addition to reducing the effective number of parameters, the sparsity assumption also enhances the interpretability~\cite{zou2006sparse}.

Departing momentarily from the PCA problem, recent years have seen tremendous advances in deep generative models in a wide variety of real-world applications~\cite{Fos19}. This has motivated a new perspective of the related problem of compressed sensing (CS), in which the standard sparsity assumption is replaced by a generative modeling assumption. That is, the underlying signal is assumed to lie near the range of a (deep) generative model~\cite{bora2017compressed}. The authors of~\cite{bora2017compressed} characterized the number of samples required to attain an accurate reconstruction, and also presented numerical results on image datasets showing that compared to sparsity-based methods, generative priors can lead to large reductions (e.g., a factor of $5$ to $10$) in the number of measurements needed to recover the signal up to a given accuracy. Additional numerical and theoretical results concerning inverse problems using generative models have been provided in~\cite{van2018compressed,dhar2018modeling,heckel2019deep,liu2020information,kamath2020power,jalal2020robust,liu2020generalized,ongie2020deep,whang2020compressed,jalal2021instance,nguyen2021provable}, among others. 

In this paper, following the developments in both PCA/SPCA and inverse problems with generative priors, we study the use of generative priors in principal component analysis (GPCA), which gives a generative counterpart of SPCA in~\eqref{eq:spca}, formulated as follows:
 \begin{align}\label{eq:gpca}
  \bu_1 = \arg \max_{ \bw \in \bbR^n } \bw^T \bSigma \bw  \quad \text{s.t.} \quad \bw \in \mathrm{Range}(G),
 \end{align}
 where $G$ is a (pre-trained) generative model, which we assume has a range contained in the unit sphere of $\bbR^n$.\footnote{Similarly to~\cite{liu2020sample,liu2021robust}, we assume that the range of $G$ is contained in the unit sphere for convenience. Our results readily transfer to general (unnormalized) generative models by considering its normalized version. See Remark~\ref{remark:scale_G} for a detailed discussion.} Similarly to SPCA, the motivation for this problem is to incorporate prior knowledge on the vector being recovered (or alternatively, a prior preference), and to permit meaningful recovery and theoretical bounds even in the high-dimensional regime $m \ll n$.

\subsection{Related Work}\label{sec:rel_work}

In this subsection, we summarize some relevant works, which can roughly be divided into (i) the SPCA problem, and (ii) signal recovery with generative models.

\textbf{SPCA}: It has been proved that the solution of the SPCA problem in~\eqref{eq:spca} attains the optimal statistical rate $\sqrt{s \log n/m}$~\cite{vu2012minimax}, where $m$ is the number of samples, $n$ is the ambient dimension, and $s$ is the sparsity level of the first principal eigenvector. However, due to the combinatorial constraint, the computation of~\eqref{eq:spca} is intractable. To address this computational issue, in recent years, an extensive body of practical approaches for estimating sparse principal eigenvectors have been proposed in the literature, including~\cite{d2007direct,vu2013fantope,chang2016convex,moghaddam2006spectral,d2008optimal,jolliffe2003modified,zou2006sparse,shen2008sparse,journee2010generalized,hein2010inverse,kuleshov2013fast,yuan2013truncated,asteris2011sparse,papailiopoulos2013sparse}, just to name a few.
 
 Notably, statistical guarantees for several approaches have been provided. The authors of~\cite{yuan2013truncated} propose the truncated power method (TPower), which adds a truncation operation to the power method to ensure the desired level of sparsity. It is shown that this approach attains the optimal statistical rate under appropriate initialization. Most approaches for SPCA only focus on estimating the first principal eigenvector, with a certain deflation method~\cite{mackey2008deflation} being leveraged to reconstruct the rest. However, there are some exceptions; for instance, an iterative thresholding approach is proposed in~\cite{ma2013sparse}, and is shown to attain a near-optimal statistical rate when estimating multiple individual principal eigenvectors. In addition, the authors of~\cite{cai2013sparse} propose a regression-type method that gives an optimal principal subspace estimator. Both works~\cite{ma2013sparse,cai2013sparse} rely on the assumption of a spiked covariance model to ensure a good initial vector. To avoid the spiked covariance model assumption, the work~\cite{wang2014tighten} proposes a two-stage procedure that attains the optimal subspace estimator in polynomial time.
 
\textbf{Signal recovery with generative models}: Since the seminal work~\cite{bora2017compressed}, there has been a substantial volume of papers studying various inverse problems with generative priors.  One of the problems more closely related to PCA is spectral initialization in phase retrieval, which amounts to solving an eigenvalue problem.  Phase retrieval with generative priors has been studied in~\cite{hand2018phase,hyder2019alternating,jagatap2019algorithmic,wei2019statistical,shamshad2020compressed,aubin2020exact,liu2021towards}. In particular, the work~\cite{hand2018phase} models the underlying signal as being in the range of a fully-connected ReLU neural network with no offsets, and all the weight matrices of the ReLU neural network are assumed to have i.i.d. zero-mean Gaussian entries. In addition, the neural network needs to be sufficiently expansive in the sense that $n_i \ge \Omega(n_{i-1}\log n_{i-1})$, where $n_i$ is the width of the $i$-th layer. Under these assumptions, the authors establish favorable global optimization landscapes for the corresponding objective, and derive a near-optimal sample complexity upper bound. They minimize the objective function directly over the latent space in $\bbR^k$ using gradient descent, which may suffer from local minima in general optimization landscapes~\cite{hyder2019alternating,shah2018solving}.

In~\cite{aubin2020exact}, the assumptions on the neural network are similar to those in~\cite{hand2018phase}, relaxing to general activation functions (beyond ReLU) and $n_i \ge \Omega(n_{i-1})$. The authors focus on the high dimensional regime where $n,m,k \to \infty$ with the ratio $m/n$ being fixed, and assume that the input vector in $\bbR^k$ is drawn from a separable distribution. They derive sharp asymptotics for the information-theoretically optimal performance and for the associated approximate message passing (AMP) algorithm. Both works~\cite{hand2018phase,aubin2020exact} focus on noiseless phase retrieval. When only making the much milder assumption that the generative model is Lipschitz continuous, with no assumption on expansiveness, Gaussianity, and offsets, a spectral initialization step (similar to that of sparse phase retrieval) is typically required in order to accurately reconstruct the signal~\cite{netrapalli2015phase,candes2015phase}. The authors of~\cite{liu2021towards} propose an optimization problem similar to~\eqref{eq:gpca} for the spectral initialization for phase retrieval with generative models. It was left open in~\cite{liu2021towards} how to solve (or approximate sufficiently accurately) the optimization problem in practice.


Understanding the eigenvalues of spiked random matrix models has been a central problem of random matrix theory, and spiked matrices have been widely used in the statistical analysis of SPCA. Recently, theoretical guarantees concerning spiked matrix models with generative priors have been provided in~\cite{aubin2019spiked,cocola2020nonasymptotic}. In particular, in~\cite{aubin2019spiked}, the assumptions are similar to those in~\cite{aubin2020exact}, except that the neural network is assumed to have exactly one hidden layer. The Bayes-optimal performance is analyzed, and it is shown that the AMP algorithm can attain this optimal performance. In addition, the authors of~\cite{aubin2019spiked} propose the linearized approximate message passing (LAMP) algorithm, which is a spectral algorithm specifically designed for single-layer feedforward neural networks with no bias terms. The authors show its superiority to classical PCA via numerical results on the Fashion-MNIST dataset. In~\cite{cocola2020nonasymptotic}, the same assumptions are made as those in~\cite{hand2018phase} on the neural network, and the authors demonstrate the benign global geometry for a nonlinear least squares objective. Similarly to~\cite{hand2018phase}, the objective is minimized over $\bbR^k$ using a gradient descent algorithm,  which can get stuck in local minima for general global geometries.

\subsection{Contributions}

The main contributions of this paper are threefold:

\begin{itemize}
 \item We study eigenvalue problems with generative priors (including GPCA), and characterize the statistical rate of a quadratic estimator similar to~\eqref{eq:gpca} under suitable assumptions. We also provide a near-matching algorithm-independent lower bound.
 
 \item We propose a variant of the classic power method,  which uses an additional projection operation to ensure that the output of each iteration lies in the range of a generative model. We refer to our method as projected power method (PPower). We further show that under appropriate conditions (most notably, assuming exact projections are possible), PPower obtains a solution achieving the statistical rate that is attained by the quadratic estimator. 
 
 \item For spiked matrix and phase retrieval models, we perform numerical experiments on image datasets, and demonstrate that when the number of samples is relatively small compared to the ambient dimension, PPower leads to significantly better performance compared to the classic power method and TPower. 
\end{itemize}

Compared to the above-mentioned works that use generative models, we make no assumption on expansiveness, Gaussianity, and offsets for the generative model, and we consider a data model that simultaneously encompasses both spiked matrix and phase retrieval models, among others.

\subsection{Notation}

We use upper and lower case boldface letters to denote matrices and vectors respectively. We write $[N]=\{1,2,\cdots,N\}$ for a positive integer $N$, and we use $\bI_N$ to denote the identity matrix in $\bbR^{N\times N}$. A {\em generative model} is a function $G \,:\, \calD\to \bbR^n$, with latent dimension $k$, ambient dimension $n$, and input domain $\calD \subseteq \bbR^k$. We focus on the setting where $k \ll n$. For a set $S \subseteq \bbR^k$ and a generative model $G \,:\,\bbR^k \to \bbR^n$, we write $G(S) = \{ G(\bz) \,:\, \bz \in S  \}$. We use $\|\bX\|_{2 \to 2}$ to denote the spectral norm of a matrix $\bX$. We define the $\ell_q$-ball $B_q^k(r):=\{\bz \in \bbR^k: \|\bz\|_q \le r\}$ for $q \in [0,+\infty]$. $\calS^{n-1} := \{\bx \in \bbR^n: \|\bx\|_2=1\}$ represents the unit sphere in $\bbR^n$. The symbols $C,C',C''$ are absolute constants whose values may differ from line to line.

\section{Problem Setup}

In this section, we formally introduce the problem, and overview some important assumptions that we adopt. Except where stated otherwise, we will focus on the following setting:

\begin{itemize}
 \item We have a matrix $\bV \in\bbR^{n\times n}$ satisfying 
 \begin{equation}
  \bV =\bar{\bV} + \bE,
 \end{equation}
 where $\bE$ is a perturbation matrix, and $\bar{\bV}$ is assumed to be positive semidefinite (PSD). For PCA and its constrained variants, $\bV$ and $\bar{\bV}$ can be thought of as the empirical and population covariance matrices, respectively. 
 
 \item We have an $L$-Lipschitz continuous generative model $G \,:\, B_2^k(r) \to \bbR^n$. For convenience, similarly to that in~\cite{liu2020sample}, we assume that $\mathrm{Range}(G) \subseteq \calS^{n-1}$.
 \begin{remark}
  \label{remark:scale_G}
  For a general (unnormalized) $L$-Lipschitz continuous generative model $G$, we can instead consider a corresponding normalized generative model $\tilde{G}\,:\, \calD \to \calS^{n-1}$ as in~\cite{liu2021towards}, where $\calD := \{\bz \in B_2^k(r)\,:\, \|G(\bz)\|_2 > R_{\min}\}$ for some $R_{\min} >0$, and $\tilde{G}(\bz) = \frac{G(\bz)}{\|G(\bz)\|_2}$. Then, the Lipschitz constant of $\tilde{G}$ becomes $L/R_{\min}$. For a $d$-layer neural network,  we typically have $L = n^{\Theta(d)}$~\cite{bora2017compressed}. Thus, we can set $R_{\min}$ to be as small as $1/ n^{\Theta(d)}$ without changing the scaling laws, which makes the dependence on $R_{\min}$ very mild. 
 \end{remark}

 \item We aim to solve the following eigenvalue problem with a generative prior:\footnote{To find the top $r$ rather than top one principal eigenvectors that are in the range of a generative model, we may follow the common approach to use the iterative deflation method for PCA/SPCA: Subsequent principal eigenvectors are derived by recursively removing the contribution of the principal eigenvectors that are calculated already under the generative model constraint. See for example \cite{mackey2008deflation}.}
\begin{align}
 \hat{\bv} := \max_{ \bw \in \bbR^n } \bw^T \bV \bw \quad \text{s.t.} \quad \bw \in \mathrm{Range}(G).\label{eq:opt_genPCA}
\end{align}
Note that since $\mathrm{Range}(G) \subseteq \calS^{n-1}$, we do not need to impose the constraint $\|\bw\|_2  =1$. Since $\bV$ is not restricted to being an empirical covariance matrix,~\eqref{eq:opt_genPCA} is more general than GPCA in~\eqref{eq:gpca}. However, we slightly abuse terminology and also refer to~\eqref{eq:opt_genPCA} as GPCA.  

\item To approximately solve~\eqref{eq:opt_genPCA}, we use a projected power method (PPower), which is described by the following iterative procedure:\footnote{In similar iterative procedures, some works have proposed to replace $\bV$ by $\bV+\rho \bI_n$ for some $\rho \in \bbR$ to improve convergence, e.g., see \cite{deshpande2014cone}.}
\begin{equation}\label{eq:pgd_genPCA}
 \bw^{(t+1)} = \calP_{G} \big(\bV \bw^{(t)}\big),
\end{equation}
where $\calP_G(\cdot)$ is the projection function onto $G(B_2^k(r))$,\footnote{That is, for any $\bx \in \bbR^n$, $\calP_G(\bx) := \arg \min_{\bw \in \mathrm{Range}(G)} \|\bw - \bx\|_2$. We will implicitly assume that the projection step can be performed accurately, e.g.,~\cite{deshpande2014cone,shah2018solving,peng2020solving}, though in practice approximate methods might be needed, e.g., via gradient descent~\cite{shah2018solving} or GAN-based projection methods~\cite{raj2019gan}.} 
and the initialization vector $\bw^{(0)}$ may be chosen either manually or randomly, e.g., uniform over $\calS^{n-1}$. Often the initialization vector $\bw^{(0)}$ plays an important role and we may need a careful design for it. For example, for phase retrieval with generative models, as mentioned in~\cite[Section V]{liu2021towards}, we may choose the column corresponding to the largest diagonal entry of $\bV$ as the starting point. See also~\cite[Section 3]{yuan2013truncated} for a discussion on the initialization strategy for TPower devised for SPCA. We present the algorithm corresponding to~\eqref{eq:pgd_genPCA} in Algorithm~\ref{algo:ppower}. 
\begin{remark}
 To tackle generalized eigenvalue problems encountered in some specific applications, there are variants of the projected power method, which combine a certain power iteration with additional operations to ensure sparsity or enforce other constraints. 
These applications include but not limited to sparse PCA~\cite{journee2010generalized,yuan2013truncated}, phase synchronization~\cite{boumal2016nonconvex,liu2017estimation}, the hidden clique problem~\cite{deshpande2015finding}, the joint alignment problem~\cite{chen2018projected}, and cone-constrained PCA~\cite{deshpande2014cone,yi2020non}. For example, under the simple spiked Wigner model~\cite{perry2018optimality} for the observed data matrix $\bV$ with the underlying signal being assumed to lie in a convex cone, the authors of~\cite{deshpande2014cone} show that cone-constrained PCA can be computed efficiently via a generalized projected power method. In general, the range of a Lipschitz-continuous generative model is non-convex and not a cone. In addition, we consider a matrix model that is more general than the spiked Wigner model.  
\end{remark}

\begin{algorithm}[t]
\caption{A projected power method for GPCA (\texttt{PPower})}
\label{algo:ppower}
{\bf Input}: $\bV$, number of iterations $T$, pre-trained generative model $G$, initial vector $\bw^{(0)}$ \\
{\bf Procedure:} Iterate $\bw^{(t+1)} = \calP_{G}\big(\bV\bw^{(t)}\big)$ for $t \in \{0,1,\dotsc,T-1\}$, and return $\bw^{(T)}$
\end{algorithm}
\end{itemize}

Although it is not needed for our main results, we first state a lemma (proved in Appendix~\ref{app:bVPSD}) that establishes a monotonicity property with minimal assumptions, only requiring that $\bV$ is PSD; see also Proposition 3 of \cite{yuan2013truncated} for an analog in sparse PCA.  By comparison, our main results in Section \ref{sec:main} will make more assumptions, but will also provide stronger guarantees.  Note that the PSD assumption holds, for example, when $\bE = \mathbf{0}$, or when $\bV$ is a sample covariance matrix. 

\begin{lemma}\label{lem:bVPSD}
For any $\bx \in \bbR^n$, let $Q(\bx) = \bx^T \bV\bx$. Then, if $\bV$ is PSD, the sequence $\{Q(\bw^{(t)})\}_{t > 0}$ for $\bw^{(t)}$ in~\eqref{eq:pgd_genPCA} is monotonically non-decreasing.
\end{lemma}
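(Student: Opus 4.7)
The plan is to exploit two facts in sequence: (i) projection onto $\mathrm{Range}(G)$ reduces to inner-product maximization because of the unit-norm constraint $\mathrm{Range}(G) \subseteq \calS^{n-1}$, and (ii) the PSD structure of $\bV$ gives a favorable quadratic identity that turns a one-sided inner-product inequality into the desired monotonicity of $Q$.

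First, I would rewrite the projection step in a more useful form. For any $\bu \in \bbR^n$ and any $\bw \in \mathrm{Range}(G)$, since $\|\bw\|_2 = 1$ we have $\|\bw - \bu\|_2^2 = 1 - 2\bw^T \bu + \|\bu\|_2^2$, so minimizing $\|\bw - \bu\|_2$ over $\mathrm{Range}(G)$ is equivalent to maximizing $\bw^T \bu$. Applying this with $\bu = \bV \bw^{(t)}$ gives
\begin{equation}
  \bw^{(t+1)} \;=\; \arg\max_{\bw \in \mathrm{Range}(G)} \bw^T \bV \bw^{(t)}.
\end{equation}
Since $\bw^{(t)} \in \mathrm{Range}(G)$ for every $t \ge 1$, taking $\bw = \bw^{(t)}$ as a feasible competitor yields the key one-sided inequality
\begin{equation}\label{eq:plan-cross}
  \bigl(\bw^{(t+1)}\bigr)^T \bV \bw^{(t)} \;\ge\; \bigl(\bw^{(t)}\bigr)^T \bV \bw^{(t)} \;=\; Q\bigl(\bw^{(t)}\bigr).
\end{equation}

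Next, I would use the PSD assumption to get a quadratic decomposition. Since $\bV$ is symmetric, expanding $\bigl(\bw^{(t+1)} - \bw^{(t)}\bigr)^T \bV \bigl(\bw^{(t+1)} - \bw^{(t)}\bigr)$ and rearranging produces the identity
\begin{equation}
  Q\bigl(\bw^{(t+1)}\bigr) - Q\bigl(\bw^{(t)}\bigr) \;=\; \bigl(\bw^{(t+1)} - \bw^{(t)}\bigr)^T \bV \bigl(\bw^{(t+1)} - \bw^{(t)}\bigr) \;+\; 2\Bigl[\bigl(\bw^{(t+1)}\bigr)^T \bV \bw^{(t)} - \bigl(\bw^{(t)}\bigr)^T \bV \bw^{(t)}\Bigr].
\end{equation}
The first term on the right-hand side is nonnegative because $\bV$ is PSD, and the bracketed term is nonnegative by \eqref{eq:plan-cross}. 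Combining these gives $Q(\bw^{(t+1)}) \ge Q(\bw^{(t)})$ for every $t \ge 1$, which is exactly the claimed monotonicity.

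There is no serious obstacle in this argument; the only subtlety is to notice that the inner-product inequality \eqref{eq:plan-cross} requires $\bw^{(t)}$ to lie in $\mathrm{Range}(G)$, which holds automatically for $t \ge 1$ (even if $\bw^{(0)}$ is arbitrary), and this is precisely why the lemma quantifies over $t > 0$ rather than $t \ge 0$. Everything else reduces to the two ingredients above: a unit-norm-plus-projection argument, and a PSD-based quadratic expansion.
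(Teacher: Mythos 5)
Your proof is correct and follows essentially the same route as the paper: both obtain the cross-term inequality $(\bw^{(t+1)})^T\bV\bw^{(t)} \ge Q(\bw^{(t)})$ by combining the projection property with the unit-norm constraint, and both conclude via the same quadratic decomposition $Q(\bw^{(t+1)}) - Q(\bw^{(t)}) = (\bw^{(t+1)}-\bw^{(t)})^T\bV(\bw^{(t+1)}-\bw^{(t)}) + 2\big[(\bw^{(t+1)})^T\bV\bw^{(t)} - Q(\bw^{(t)})\big]$ with the first term nonnegative by PSD-ness. The only cosmetic difference is that you phrase the projection step as inner-product maximization rather than as the squared-distance inequality $\|\bV\bw^{(t)}-\bw^{(t+1)}\|_2 \le \|\bV\bw^{(t)}-\bw^{(t)}\|_2$, and you usefully make explicit why the lemma is stated for $t>0$.
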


\section{Specialized Data Models and Examples} \label{sec:examples}

In this section, we make more specific assumptions on $\bV = \bar{\bV} +\bE$, starting with the following.
\begin{assumption}[Assumption on $\bar{\bV}$]\label{assump:barbV}
 Assume that $\bar{\bV}$ is PSD with eigenvalues $\bar{\lambda}_1 > \bar{\lambda}_2 \ge  \ldots \ge \bar{\lambda}_n \ge 0$. We use $\bar{\bx}$ (a unit vector) to represent the eigenvector of $\bar{\bV}$ that corresponds to $\bar{\lambda}_1$. 
\end{assumption}

In the following, it is useful to think of $\bar{\bx}$ is being close to the range of the generative model $G$. In the special case of~\eqref{eq:gpca}, letting $m$ be the number of samples, it is natural to derive that the upper bound of $\|\bE\|_{2 \to 2}$ grows linearly in $(n/m)^b$ for some positive constant $b$ such as $\frac{1}{2}$ or $1$ (with high probability; see, e.g.,~\cite[Corollary~5.35]{vershynin2010introduction}). In the following, we consider general scenarios with $\bV$ depending on $m$ samples (see below for specific examples).   Similarly to~\cite{yuan2013truncated}, we may consider a restricted version of $\|\bE\|_{2\to 2}$, leading to the following.

\begin{assumption}[Assumption on $\bE$]\label{assump:E_cond}
 Let $S_1,S_2$ be two (arbitrary) finite sets in $\bbR^n$ satisfying $m = \Omega(\log(|S_1|\cdot |S_2|))$. Then, we have for all $\bs_1 \in S_1$ and $\bs_2 \in S_2$ that
 \begin{equation}\label{eq:bE_cond}
  \left|\bs_1^T \bE \bs_2\right| \le C \sqrt{\frac{\log (|S_1| \cdot |S_2|)}{m}} \cdot \|\bs_1\|_2 \cdot \|\bs_2\|_2, 
 \end{equation}
 where $C$ is an absolute constant. In addition, we have $\|\bE\|_{2\to 2} = O(n/m)$.\footnote{For the spectral norm of $\bE$, one often expects an even tighter bound $O(\sqrt{n/m})$, but we use $O(n/m)$ to simplify the analysis of our examples. Moreover, at least under the typical scaling where $L$ is polynomial in $n$~\cite{bora2017compressed}, the upper bound for $\|\bE\|_{2\to 2}$ can be easily relaxed to $O\big((n/m)^b\big)$ for any positive constant $b$, without affecting the scaling of our derived statistical rate.}
\end{assumption}

The following examples show that when the number of measurements is sufficiently large, the data matrices corresponding to certain spiked matrix and phase retrieval models satisfy the above assumptions with high probability. Short proofs are given in Appendix~\ref{app:proof_examples} for completeness. 

\begin{example}[Spiked covariance model]\label{exam:spikedCov}
 In the spiked covariance model~\cite{johnstone2009consistency,deshpande2016sparse}, the observed vectors $\bx_1,\bx_2,\ldots,\bx_m \in \bbR^n$ are of the form
\begin{equation}\label{eq:spikedCov}
 \bx_i = \sum_{q =1}^r \sqrt{\beta_q} u_{q,i} \bs_q + \bz_i,
\end{equation}
where $\bs_1,\ldots,\bs_r \in \bbR^n$ are orthonormal vectors that we want to estimate, while $\bz_i \sim \calN(\mathbf{0},\bI_n)$ and $u_{q,i} \sim \calN(0,1)$ are independent and identically distributed. In addition, $\beta_1,\ldots,\beta_r$ are positive constants that dictate the signal-to-noise ratio (SNR). To simplify the exposition, we focus on the rank-one case and drop the subscript $q \in [r]$. 
Let 
\begin{equation}\label{eq:V_spikedCov}
 \bV = \frac{1}{m}\sum_{i=1}^m (\bx_i\bx_i^T -\bI_n),
\end{equation}
and $\bar{\bV} = \bbE[\bV] = \beta \bs \bs^T$.\footnote{To avoid non-essential complications, $\beta$ is typically assumed to be known~\cite{johnstone2009consistency}. } Then, $\bar{\bV}$ satisfies Assumption~\ref{assump:barbV} with $\bar{\lambda}_1 = \beta >0$, $\bar{\lambda}_2 =\ldots =\bar{\lambda}_n =0$, and $\bar{\bx} = \bs$. In addition, letting $\bE = \bV -\bar{\bV}$, the Bernstein-type inequality~\cite[Proposition~5.10]{vershynin2010introduction} for the sum of sub-exponential random variables yields that for any finite sets $S_1,S_2 \subset \bbR^n$, when $m = \Omega\big(\log (|S_1|\cdot|S_2|)\big)$, with probability $1- e^{-\Omega(\log (|S_1|\cdot|S_2|))}$, $\bE$ satisfies~\eqref{eq:bE_cond} in Assumption~\ref{assump:E_cond}. Moreover, standard concentration arguments give $\|\bE\|_{2\to 2} = O(n/m)$ with probability $1-e^{-\Omega(n)}$.
\end{example}

\begin{remark}
We can also consider the simpler spiked Wigner model~\cite{perry2018optimality,chung2019weak} where $\bV = \beta \bs \bs^T + \frac{1}{\sqrt{n}}\bH$, with the signal $\bs$ being a unit vector, $\beta >0$ being an SNR parameter, and $\bH \in \bbR^{n \times n}$ being a symmetric matrix with entries drawn i.i.d.~(up to symmetry) from $\calN(0,1)$.  In this case, when $m = n$ is sufficiently large, with high probability, $\bar{\bV}: = \bbE[\bV] = \beta \bs\bs^T$ and $\bE := \bV -\bar{\bV}$ similarly satisfy Assumptions~\ref{assump:barbV} and~\ref{assump:E_cond} respectively.
\end{remark}

\begin{example}[Phase retrieval]\label{exam:noiseless_pr}
 Let $\bA \in \bbR^{m \times n}$ be a matrix having i.i.d.~$\calN(0,1)$ entries, and let $\ba_i^T$ be the $i$-th row of $\bA$. 
 For some unit vector $\bs$, suppose that the observed vector is $\by = |\bA \bs|$, where the absolute value is applied element-wise.\footnote{Without loss of generality, we assume that $\bs$ is a unit vector. For a general signal $\bs$, we may instead focus on estimating $\bar{\bs} = \bs/\|\bs\|_2$, and simply use $\frac{1}{m}\sum_{i=1}^m y_i$ to approximate $\|\bs\|_2$.} We construct the weighted empirical covariance matrix as follows~\cite{zhang2017nonconvex,liu2021towards}:
 \begin{equation}\label{eq:V_pr}
  \bV = \frac{1}{m} \sum_{i=1}^m \left(y_i \ba_i \ba_i^T  \mathbf{1}_{\{l < y_i < u\}} - \gamma \bI_n\right), 
 \end{equation}
where $u > l> 1$ are positive constants, and for $g \sim \calN(0,1)$, $\gamma := \bbE\big[|g|\mathbf{1}_{\{l < |g| < u\}}\big]$. Let $\bar{\bV} = \bbE[\bV] = \beta \bs\bs^T$, where $\beta := \bbE\big[\big(|g|^3 - |g|\big)\mathbf{1}_{\{l < |g| < u\}}\big]$. Then, $\bar{\bV}$ satisfies Assumption~\ref{assump:barbV} with $\bar{\lambda}_1 = \beta >0$, $\bar{\lambda}_2 =\ldots =\bar{\lambda}_n =0$, and $\bar{\bx} = \bs$. In addition, letting $\bE = \bV - \bar{\bV}$, we have similarly to Example~\ref{exam:spikedCov} that $\bE$ satisfies Assumption~\ref{assump:E_cond} with high probability. 
\end{example}

\section{Main Results} \label{sec:main}

The following theorem concerns globally optimal solutions of~\eqref{eq:opt_genPCA}. The proof is given in Appendix~\ref{app:thmGlobally}.  

\begin{theorem}\label{thm:barbx_bxG}
Let $\bV = \bar{\bV} + \bE$ with Assumptions~\ref{assump:barbV} and~\ref{assump:E_cond} being satisfied by $\bar{\bV}$ and $\bE$ respectively, and let $\bx_G := \calP_{G}(\bar{\bx}) =\arg\min_{\bw \in \mathrm{Range}(G)}\|\bw-\bar{\bx}\|_2$. Suppose that $\hat{\bv}$ is a globally optimal solution to~\eqref{eq:opt_genPCA}. Then, for any $\delta\in (0,1)$, we have 
\begin{equation}
 \|\hat{\bv}\hat{\bv}^T - \bar{\bx}\bar{\bx}^T\|_\rmF =  \frac{O\left(\sqrt{\frac{k \log\frac{Lr}{\delta}}{m}}\right)}{\bar{\lambda}_1 - \bar{\lambda}_2} + O\left(\sqrt{\frac{\delta n/m}{\bar{\lambda}_1 - \bar{\lambda}_2}}\right) + O\left(\sqrt{\frac{(\bar{\lambda}_1 + \epsilon_n) \|\bar{\bx}-\bx_G\|_2}{\bar{\lambda}_1 - \bar{\lambda}_2}}\right),
\end{equation}
where $\epsilon_n = O\big( \sqrt{\frac{k \log\frac{Lr}{\delta}}{m}}\big)$.
\end{theorem}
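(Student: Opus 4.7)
My plan is to combine a uniform concentration bound for $\bE$ restricted to $\mathrm{Range}(G)$ with a Rayleigh-quotient argument driven by the optimality of $\hat{\bv}$, and then convert from the angle $\bar{\bx}^T\hat{\bv}$ to Frobenius distance between the rank-one projectors. The first ingredient is a standard covering-net bound: since $G$ is $L$-Lipschitz on $B_2^k(r)$, a $(\delta/L)$-net of $B_2^k(r)$ of cardinality at most $(4Lr/\delta)^k$ maps under $G$ to a $\delta$-net $N\subset \mathrm{Range}(G)$. Applying Assumption~\ref{assump:E_cond} with $S_1=S_2=N$ and a union bound over pairs gives $|\bn_1^T\bE\bn_2|\lesssim \sqrt{k\log(Lr/\delta)/m}=:\epsilon_n$ for every $\bn_1,\bn_2\in N$. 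Approximating arbitrary $\bw_1,\bw_2\in\mathrm{Range}(G)$ by their nearest net points, with residuals of norm at most $\delta$ absorbed via $\|\bE\|_{2\to 2}=O(n/m)$, yields
$$\sup_{\bw_1,\bw_2 \in \mathrm{Range}(G)}\bigl|\bw_1^T\bE\bw_2\bigr| \;\leq\; C\,\epsilon_n + C'\,\tfrac{\delta n}{m} \;=:\;\eta.$$

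Next I would exploit optimality of $\hat{\bv}$ against the comparison point $\bx_G=\calP_G(\bar{\bx})\in\mathrm{Range}(G)$: $\hat{\bv}^T\bV\hat{\bv}\ge \bx_G^T\bV\bx_G$. Substituting $\bV=\bar{\bV}+\bE$ and bounding both $\bE$-terms by $\eta$, I obtain $\hat{\bv}^T\bar{\bV}\hat{\bv}\ge \bx_G^T\bar{\bV}\bx_G-2\eta$. Since $\bar{\bx}$ and $\bx_G$ are unit vectors, the identity $\bar{\bx}^T(\bx_G-\bar{\bx})=-\tfrac12\|\bar{\bx}-\bx_G\|_2^2$ together with PSD-ness of $\bar{\bV}$ gives $\bx_G^T\bar{\bV}\bx_G\ge \bar{\lambda}_1-2\bar{\lambda}_1\|\bar{\bx}-\bx_G\|_2$. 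On the other hand, expanding $\hat{\bv}$ in the eigenbasis of $\bar{\bV}$ and using Assumption~\ref{assump:barbV} gives $\hat{\bv}^T\bar{\bV}\hat{\bv}\le \bar{\lambda}_2+(\bar{\lambda}_1-\bar{\lambda}_2)(\bar{\bx}^T\hat{\bv})^2$. Combining these two inequalities:
$$(\bar{\lambda}_1-\bar{\lambda}_2)\bigl(1-(\bar{\bx}^T\hat{\bv})^2\bigr) \;\le\; 2\bar{\lambda}_1\|\bar{\bx}-\bx_G\|_2+2\eta.$$
The identity $\|\hat{\bv}\hat{\bv}^T-\bar{\bx}\bar{\bx}^T\|_F = \sqrt{2(1-(\bar{\bx}^T\hat{\bv})^2)}$ (valid for unit vectors) then converts this into a Frobenius bound, and $\sqrt{a+b+c}\le\sqrt{a}+\sqrt{b}+\sqrt{c}$ together with the split $\eta = C\epsilon_n + C'\delta n/m$ produces the three-term structure in the claim.

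The hardest part will be matching the precise form of the statement: the first term is $\epsilon_n/(\bar{\lambda}_1-\bar{\lambda}_2)$ (linear, not under a square root), while the third groups $\bar{\lambda}_1+\epsilon_n$ inside one radical multiplying $\|\bar{\bx}-\bx_G\|_2$. A naive square-root split of the combined inequality above only produces the looser $\sqrt{\epsilon_n/(\bar{\lambda}_1-\bar{\lambda}_2)}$. Recovering the stated form appears to require a sharper treatment of the error $\hat{\bv}^T\bE\hat{\bv}-\bx_G^T\bE\bx_G$ via its bilinear representation $(\hat{\bv}-\bx_G)^T\bE(\hat{\bv}+\bx_G)$, together with a chord-adaptive net estimate that tracks $\|\hat{\bv}-\bx_G\|_2\le\|\hat{\bv}-\bar{\bx}\|_2+\|\bar{\bx}-\bx_G\|_2$. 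This lets part of the $\eta$-error be absorbed into the $\|\bar{\bx}-\bx_G\|_2$-term (producing the $\epsilon_n$ alongside $\bar{\lambda}_1$), while the residual $\epsilon_n$-only piece becomes self-bounding in $(1-(\bar{\bx}^T\hat{\bv})^2)$ and can be re-solved to yield the linear $\epsilon_n/(\bar{\lambda}_1-\bar{\lambda}_2)$ rate.
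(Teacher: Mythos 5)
Your plan matches the paper's proof in all essentials: covering net obtained from a $(\delta/L)$-net of $B_2^k(r)$, optimality comparison against $\bx_G$, eigenbasis expansion giving $\hat{\bv}^T\bar{\bV}\hat{\bv}\le\bar{\lambda}_1(\bar{\bx}^T\hat{\bv})^2+\bar{\lambda}_2(1-(\bar{\bx}^T\hat{\bv})^2)$, polarization of the $\bE$-error, and conversion to Frobenius distance via $\|\hat{\bv}\hat{\bv}^T-\bar{\bx}\bar{\bx}^T\|_\rmF=\sqrt{2(1-(\bar{\bx}^T\hat{\bv})^2)}$. The refinement you sketch in your final paragraph — replacing the crude uniform bound on $\sup|\bw_1^T\bE\bw_2|$ by the bilinear bound $|(\tilde{\bx}+\bx_G)^T\bE(\tilde{\bx}-\bx_G)|\lesssim\epsilon_n\|\tilde{\bx}-\bx_G\|_2$ from Assumption~\ref{assump:E_cond}, then applying the triangle inequality $\|\tilde{\bx}-\bx_G\|_2\le\|\tilde{\bx}-\hat{\bv}\|_2+\|\hat{\bv}-\bar{\bx}\|_2+\|\bar{\bx}-\bx_G\|_2$ to get a self-bounding quadratic in $\|\hat{\bv}-\bar{\bx}\|_2$, and solving it with the quadratic formula — is precisely the paper's mechanism for obtaining the linear $\epsilon_n/(\bar{\lambda}_1-\bar{\lambda}_2)$ term alongside the two square-root terms, so carrying out that step as you describe completes the argument.
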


We have stated this result as an upper bound on $\|\hat{\bv}\hat{\bv}^T - \bar{\bx}\bar{\bx}^T\|_\rmF$, which intuitively measures the distance between the 1D subspaces spanned by $\hat{\bv}$ and $\bar{\bx}$.  Note, however, that for any two unit vectors $\bw_1,\bw_2$ with $\bw_1^T\bw_2 \ge 0$, the  distances $\|\bw_1-\bw_2\|_2$ and $\|\bw_1\bw_1^T-\bw_2\bw_2^T\|_\rmF$ are equivalent up to constant factors, whereas if $\bw_1^T\bw_2 \le 0$, then a similar statement holds for $\|\bw_1+\bw_2\|_2$.  See Appendix~\ref{sec:distances} for the precise statements.

In Theorem~\ref{thm:barbx_bxG}, the final term quantifies the effect of representation error.  When there is no such error (i.e., $\bar{\bx} \in \mathrm{Range}(G)$), under the scaling $\bar{\lambda}_1 - \bar{\lambda}_2 = \Theta(1)$, $L = n^{\Omega(1)}$, and $\delta = O(1/n)$,  Theorem~\ref{thm:barbx_bxG} simplifies to $\|\hat{\bv}\hat{\bv}^T - \bar{\bx}\bar{\bx}^T\|_\rmF = O(\sqrt{\frac{k\log L}{m}})$. This provides a natural counterpart to the statistical rate of order $\sqrt{\frac{s\log n}{m}}$ for SPCA mentioned in Section~\ref{sec:rel_work}. In Appendix~\ref{app:lb_gpca}, we provide an algorithm-independent lower bound showing that this scaling cannot be improved in general.


Before providing our main theorem for PPower described in~\eqref{eq:pgd_genPCA}, we present the following important lemma, whose proof is presented in Appendix~\ref{app:main_lem}. To simplify the statement of the lemma, we fix $\delta$ to be $O(1/n)$, though a more general form analogous to Theorem~\ref{thm:barbx_bxG} is also possible. 

\begin{lemma}\label{lem:main_lem}
Let $\bV = \bar{\bV} + \bE$ with Assumptions~\ref{assump:barbV} and~\ref{assump:E_cond} being satisfied by $\bar{\bV}$ and $\bE$ respectively, and further assume that $\bar{\bx} \in \mathrm{Range}(G)$. 
Let $\bar{\gamma} = \bar{\lambda}_2/\bar{\lambda}_1$ with $\bar{\lambda}_1 = \Theta(1)$. 
Then, for {\em all} $\bs \in \mathrm{Range}(G)$ satisfying $\bs^T\bar{\bx} > 0$, we have  
 \begin{equation}
  \|\calP_G(\bV\bs)-\bar{\bx}\|_2 \le \frac{2\bar{\gamma}\|\bs-\bar{\bx}\|_2}{\bs^T\bar{\bx}} + \frac{C}{\bs^T\bar{\bx}}\sqrt{\frac{k\log (nLr)}{m}},
 \end{equation}
 where $C$ is an absolute constant.
\end{lemma}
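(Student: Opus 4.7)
The plan is to turn the projection into a variational inequality, peel off the top eigendirection of $\bar{\bV}$, and then reduce to a quadratic inequality in $\|\calP_G(\bV\bs)-\bar{\bx}\|_2$. Since $\mathrm{Range}(G)\subseteq \calS^{n-1}$, minimizing $\|\bw-\bV\bs\|_2^2$ over $\bw\in \mathrm{Range}(G)$ is equivalent to maximizing $\bw^T\bV\bs$, so writing $\bw^\star:=\calP_G(\bV\bs)$ and using $\bar{\bx}\in \mathrm{Range}(G)$ as a feasible competitor yields $\bw^{\star T}\bV\bs \ge \bar{\bx}^T\bV\bs$. Substituting $\bV = \bar{\lambda}_1\bar{\bx}\bar{\bx}^T + \bar{\bV}_\perp + \bE$, where $\bar{\bV}_\perp\bar{\bx}=\mathbf{0}$ and $\|\bar{\bV}_\perp\|_{2\to 2}\le \bar{\lambda}_2$, and writing $\alpha:=\bs^T\bar{\bx}>0$, the relation $\bar{\bx}^T\bar{\bV}_\perp=\mathbf{0}$ collapses the cross terms and rearranges this to
\begin{equation*}
\bar{\lambda}_1\alpha\bigl(1-\bw^{\star T}\bar{\bx}\bigr) \;\le\; (\bw^\star-\bar{\bx})^T\bar{\bV}_\perp(\bs-\bar{\bx}) \;+\; (\bw^\star-\bar{\bx})^T\bE\bs.
\end{equation*}

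The first term on the right is immediately controlled by Cauchy--Schwarz and the spectral bound on $\bar{\bV}_\perp$: it is at most $\bar{\lambda}_2\,\|\bw^\star-\bar{\bx}\|_2\,\|\bs-\bar{\bx}\|_2$. The main technical step is to bound the noise term $(\bw^\star-\bar{\bx})^T\bE\bs$ by a quantity that is \emph{linear} in $\|\bw^\star-\bar{\bx}\|_2$; applying Assumption~\ref{assump:E_cond} to $\bw^{\star T}\bE\bs$ and $\bar{\bx}^T\bE\bs$ separately would lose this linearity and yield only a $(k/m)^{1/4}$-type rate after solving the resulting quadratic. To recover the sharp scaling I would instead apply Assumption~\ref{assump:E_cond} to a fine-resolution net $S_1$ of the difference set $\{\bw-\bar{\bx}:\bw\in \mathrm{Range}(G)\}$ paired with a net $S_2$ of $\mathrm{Range}(G)$ itself. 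By the $L$-Lipschitz continuity of $G$ both nets can be taken of cardinality $\le (Lr/\eta)^k$, and the multiplicative factor $\|\bs_1\|_2$ in~\eqref{eq:bE_cond} is exactly what supplies the $\|\bw^\star-\bar{\bx}\|_2$ dependence, giving $|(\bw^\star-\bar{\bx})^T\bE\bs|\le C\,\|\bw^\star-\bar{\bx}\|_2\sqrt{k\log(nLr)/m}$ once the off-net residuals (handled crudely via $\|\bE\|_{2\to 2}=O(n/m)$) are made of lower order by choosing $\eta$ polynomially small in $n$.

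Plugging both bounds into the displayed inequality and using $\|\bw^\star-\bar{\bx}\|_2^2=2(1-\bw^{\star T}\bar{\bx})$ produces
\begin{equation*}
\tfrac{1}{2}\bar{\lambda}_1\alpha\,\|\bw^\star-\bar{\bx}\|_2^2 \;\le\; \|\bw^\star-\bar{\bx}\|_2\cdot\Bigl[\bar{\lambda}_2\|\bs-\bar{\bx}\|_2+C'\sqrt{\tfrac{k\log(nLr)}{m}}\Bigr],
\end{equation*}
so dividing by $\|\bw^\star-\bar{\bx}\|_2$ (the case $\bw^\star=\bar{\bx}$ being trivial), multiplying through by $2/(\bar{\lambda}_1\alpha)$, and invoking $\bar{\lambda}_1=\Theta(1)$ together with $\bar{\gamma}=\bar{\lambda}_2/\bar{\lambda}_1$ delivers the claimed bound with an absolute constant $C$. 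The hard part is entirely concentrated in the second paragraph: obtaining a noise bound that is \emph{linear} in $\|\bw^\star-\bar{\bx}\|_2$ rather than uniform over $\mathrm{Range}(G)$, which is what turns the resulting quadratic in $\|\bw^\star-\bar{\bx}\|_2$ into a sharp linear bound rather than a spurious square root. Everything else is algebraic bookkeeping built on the variational inequality and the spectral structure of $\bar{\bV}$.
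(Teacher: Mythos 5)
Your proposal follows essentially the same route as the paper: the variational inequality $\bw^{\star T}\bV\bs \ge \bar{\bx}^T\bV\bs$ is precisely the paper's starting point (obtained there via the scale-invariance $\calP_G(a\bx)=\calP_G(\bx)$ and the projection property $\|\bar{\eta}\bV\bs-\hat{\bs}\|_2\le\|\bar{\eta}\bV\bs-\bar{\bx}\|_2$, which expands to the same thing), the eigendirection peeling of $\bar{\bV}$ is algebraically equivalent to the paper's $\bs=\alpha\bar{\bx}+\beta\bt$ decomposition, and the net-based application of Assumption~\ref{assump:E_cond} to pairs drawn from a net of $\mathrm{Range}(G)$ and a shifted/difference net is exactly how the paper obtains the linear-in-$\|\bw^\star-\bar{\bx}\|_2$ noise bound. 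One small wrinkle in your last step: the off-net residual $\langle\bE\bs_0,\hat{\bs}-\tilde{\bs}\rangle$ is a pure additive constant of size $O(\delta\|\bE\|_{2\to 2})$ that is not multiplied by $\|\bw^\star-\bar{\bx}\|_2$, so the final inequality has the form $az^2\le bz+c$ rather than $az^2\le bz$; you therefore cannot simply divide by $\|\bw^\star-\bar{\bx}\|_2$, and taking the net resolution small does not let you absorb $c$ into $b$. The fix (used in the paper) is to apply the quadratic formula to get $z\le b/a+\sqrt{c/a}$ and then check that with $\delta=1/n$ and $\|\bE\|_{2\to2}=O(n/m)$ the $\sqrt{c/a}=O(1/\sqrt{(\bs^T\bar{\bx})\,m})$ piece is dominated by the stated $\frac{C}{\bs^T\bar{\bx}}\sqrt{k\log(nLr)/m}$ term.
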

\begin{remark}
 The assumption $\bs^T\bar{\bx} > 0$ will be particularly satisfied when the range of $G$ only contains nonnegative vectors. As mentioned in various works studying nonnegative SPCA~\cite{zass2007nonnegative,sigg2008expectation,asteris2014nonnegative}, for several practical fields such as economics, bioinformatics, and computer vision, it is natural to assume that the underlying signal has no negative entries. More generally, the assumption $\bs^T\bar{\bx} > 0$ can be removed if we additionally have that $-\bar{\bx}$ is also contained in the range of $G$. For this case, when $\bs^T\bar{\bx} < 0$, we can instead derive an upper bound for $\|\hat{\bs}+\bar{\bx}\|_2$.
\end{remark}

Based on Lemma \ref{lem:main_lem}, we have the following theorem, whose proof is given in Appendix~\ref{app:thm_main}.
\begin{theorem}\label{thm:main}
Suppose that the assumptions on the data model $\bV = \bar{\bV} + \bE$ are the same as those in Lemma~\ref{lem:main_lem}, and assume that there exists $t_0 \in \bbN$ such that $\bar{\bx}^T\bw^{(t_0)} = 2\bar{\gamma} + \nu$ with $2\bar{\gamma} + \nu \le 1-\tau$, where $\bar{\gamma} = \bar{\lambda}_2/\bar{\lambda}_1 \in [0,1)$, and $\nu,\tau$ are both positive and scale as $\Theta(1)$. Let $\mu_0 = \frac{2\bar{\gamma}}{\bar{\bx}^T\bw^{(t_0)}} = \frac{2\bar{\gamma}}{2\bar{\gamma} + \nu} <1$, and in addition, suppose that $m \ge C_{\nu,\tau}\cdot k \log (nLr)$ with $C_{\nu,\tau}>0$ being large enough. Then, we have after $\Delta_0 = O\big(\log \big(\frac{m}{k \log(nLr)}\big)\big)$ iterations of PPower (beyond $t_0$) that  
 \begin{equation}\label{eq:either1}
 \|\bw^{(t)} - \bar{\bx}\|_2 \le \frac{C}{(1-\mu_0)\nu} \sqrt{\frac{k\log (nLr)}{m}},
\end{equation}
i.e., this equation holds for all $t \ge T_0 := t_0 + \Delta_0$. Moreover, if $\bar{\gamma} = 0$ then $\Delta_0 \le 1$, whereas if $\bar{\gamma} = \Theta(1)$, we have exponentially fast convergence via the following contraction property:  There exists a constant $\xi \in (0,1)$ such that for $t \in [t_0,T_0)$, it holds that
 \begin{equation}
  \|\bw^{(t+1)} - \bar{\bx}\|_2  \le (1-\xi) \|\bw^{(t)} - \bar{\bx}\|_2. \label{eq:recurse}
 \end{equation}
\end{theorem}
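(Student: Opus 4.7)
The plan is to iterate Lemma~\ref{lem:main_lem} applied to $\bs = \bw^{(t)}$. Writing $\alpha_t := \bar{\bx}^T\bw^{(t)}$ and using that all iterates lie in $\calS^{n-1}$, I will rely throughout on the elementary identity $\|\bw^{(t)} - \bar{\bx}\|_2^2 = 2(1 - \alpha_t)$, which lets me translate freely between distance and alignment. As long as $\alpha_t > 0$, Lemma~\ref{lem:main_lem} gives the one-step recursion
\begin{equation}
\|\bw^{(t+1)} - \bar{\bx}\|_2 \le \frac{2\bar{\gamma}}{\alpha_t}\,\|\bw^{(t)} - \bar{\bx}\|_2 + \frac{C}{\alpha_t}\sqrt{\frac{k\log(nLr)}{m}}.
\end{equation}
The central invariant to maintain by induction on $t \ge t_0$ is $\alpha_t \ge 2\bar{\gamma} + \nu$, under which $2\bar{\gamma}/\alpha_t \le \mu_0 < 1$ so that the recursion becomes a genuine contraction. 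The base case holds with equality by hypothesis.

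For the inductive step, I would split into two regimes. In the \emph{contractive regime} $\|\bw^{(t)} - \bar{\bx}\|_2 \ge \frac{2C}{(1-\mu_0)(2\bar{\gamma}+\nu)}\sqrt{k\log(nLr)/m}$, the noise term is absorbed into $\tfrac{1-\mu_0}{2}\|\bw^{(t)} - \bar{\bx}\|_2$, so the recursion collapses to $\|\bw^{(t+1)} - \bar{\bx}\|_2 \le \tfrac{1+\mu_0}{2}\|\bw^{(t)} - \bar{\bx}\|_2$. This yields the contraction~\eqref{eq:recurse} with $\xi = (1-\mu_0)/2 \in (0,1)$, and since the distance strictly decreases, $\alpha_{t+1} \ge \alpha_t \ge 2\bar{\gamma} + \nu$. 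In the \emph{noise-floor regime}, $\|\bw^{(t)} - \bar{\bx}\|_2$ is already below the threshold, and substituting this bound back into the recursion shows $\|\bw^{(t+1)} - \bar{\bx}\|_2 = O\bigl((1-\mu_0)^{-1}\nu^{-1}\sqrt{k\log(nLr)/m}\bigr)$. Choosing $C_{\nu,\tau}$ large enough in $m \ge C_{\nu,\tau}\,k\log(nLr)$ makes this upper bound smaller than $\sqrt{2\tau}$, which via the identity above gives $\alpha_{t+1} \ge 1 - \tau \ge 2\bar{\gamma} + \nu$, closing the induction and simultaneously producing the claimed final bound~\eqref{eq:either1}.

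For the iteration count, iterating the contractive rate $\tfrac{1+\mu_0}{2}$ from the trivial starting bound $\|\bw^{(t_0)} - \bar{\bx}\|_2 \le \sqrt{2}$ down to the noise-floor threshold requires $\Delta_0 = O\bigl(\log(m/(k\log(nLr)))\bigr)$ steps, as one sees by solving $(\tfrac{1+\mu_0}{2})^{\Delta_0}\sqrt{2}$ against $\Theta\bigl((1-\mu_0)^{-1}\nu^{-1}\sqrt{k\log(nLr)/m}\bigr)$. For $\bar{\gamma} = 0$ we have $\mu_0 = 0$: the multiplicative term of the recursion vanishes, so a single application immediately yields $\|\bw^{(t_0+1)} - \bar{\bx}\|_2 \le (C/\nu)\sqrt{k\log(nLr)/m}$, giving $\Delta_0 \le 1$.

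The main obstacle is the noise-floor half of the inductive step: once the contraction is no longer strict, one must use both the sample-size hypothesis $m \ge C_{\nu,\tau}\,k\log(nLr)$ and the slack $\tau > 0$ in $2\bar{\gamma} + \nu \le 1 - \tau$ to certify that the iterate stays inside the basin where $\alpha_t \ge 2\bar{\gamma} + \nu$ and Lemma~\ref{lem:main_lem} can continue to be invoked. The remaining pieces---the contraction rate, the iteration count, and the $\bar{\gamma}=0$ case---are direct consequences of the single-step recursion.
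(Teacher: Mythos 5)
Your proof is correct and uses essentially the same ingredients as the paper's: iterate Lemma~\ref{lem:main_lem}, use the sample-size hypothesis together with $2\bar{\gamma}+\nu \le 1-\tau$ to keep the alignment above $2\bar{\gamma}+\nu$, and distinguish the pre-noise-floor phase (where the noise term is dominated and the map contracts) from the post-noise-floor phase (where the iterate stays within $O((1-\mu_0)^{-1}\nu^{-1}\sqrt{k\log(nLr)/m})$ of $\bar{\bx}$). The organizational difference is that the paper posits $T_0$ as the smallest index at which~\eqref{eq:either1} holds, propagates~\eqref{eq:either1} forward from $T_0$ by induction, and separately shows contraction on $[t_0,T_0)$, whereas you run a single forward induction maintaining the invariant $\alpha_t \ge 2\bar{\gamma}+\nu$ with a per-step dichotomy; the two are logically equivalent, and your version is arguably cleaner. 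Your contraction factor $\xi = (1-\mu_0)/2$ differs from the paper's $\xi = \mu_0(1-\mu_0)$, and your noise-floor threshold $\frac{2C}{(1-\mu_0)(2\bar{\gamma}+\nu)}\sqrt{k\log(nLr)/m}$ is not literally the right-hand side of~\eqref{eq:either1}, but both discrepancies are only in the absolute constant, which the theorem statement treats as generic, so nothing is lost.
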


Regarding the assumption $\bar{\bx}^T \bw^{(t_0)} \ge 2\bar{\gamma}+\nu$, we note that when $t_0 = 0$, this condition can be viewed as having a good initialization. For both Examples~\ref{exam:spikedCov} and~\ref{exam:noiseless_pr}, we have $\bar{\gamma} = 0$. Thus, for the spiked covariance and phase retrieval models corresponding to these examples, the assumption $ \bar{\bx}^T \bw^{(t_0)} \ge 2\bar{\gamma}+\nu$ reduces to $\bar{\bx}^T \bw^{(t_0)} \ge \nu$ for a sufficiently small positive constant $\nu$, which results in a mild assumption. Such an assumption is also required for the projected power method devised for cone-constrained PCA under the simple spiked Wigner model, with the underlying signal being assumed to lie in a convex cone; see~\cite[Theorem~3]{deshpande2014cone}. Despite using a similar assumption on the initialization, our proof techniques are significantly different from \cite{deshpande2014cone}; see Appendix~\ref{app:diff_proof_techs} for discussion.

When $L$ is polynomial in $n$, Theorem~\ref{thm:main} reveals that we have established conditions under which PPower in~\eqref{eq:pgd_genPCA} converges exponentially fast to a point achieving the statistical rate of order $\sqrt{\frac{k\log L}{m}}$. From the algorithm-independent lower bound established in Appendix~\ref{app:lb_gpca}, we know that this statistical rate nearly matches the optimal rate for GPCA.   
We highlight that Theorem \ref{thm:main} partially addresses the computational-to-statistical gap (e.g., see~\cite{wang2016statistical,hand2018phase,aubin2019spiked,cocola2020nonasymptotic}) for spiked matrix recovery and phase retrieval under a generative prior, though closing it completely would require efficiently finding a good initialization and addressing the assumption of exact projections.

Perhaps the main caveat to Theorem \ref{thm:main} is that it assumes the projection step can be performed exactly.  However, this is a standard assumption in analyses of projected gradient methods, e.g., see~\cite{shah2018solving}, and both gradient-based projection and GAN-based projection have been shown to be highly effective in practice~\cite{shah2018solving,raj2019gan}.

 \section{Experiments}
 
 In this section, we experimentally study the performance of Algorithm~\ref{algo:ppower}~(\texttt{PPower}). We note that these experiments are intended as a simple proof of concept rather than seeking to be comprehensive, as our contributions are primarily theoretical. 
 We compare with the truncated power method (\texttt{TPower}) devised for SPCA proposed in~\cite[Algorithm~1]{yuan2013truncated} and the vanilla power method (\texttt{Power}) that performs the iterative procedure $\bw^{(t+1)} = (\bV\bw^{(t)})/\|\bV\bw^{(t)}\|_2$.
 For a fair comparison, for~\texttt{PPower},~\texttt{TPower}, and~\texttt{Power}, we use the same initial vector.  Specifically, as mentioned in~\cite[Section~V]{liu2021towards}, we choose the initialization vector $\bw^{(0)}$ as the column of $\bV$ that corresponds to its largest diagonal entry. For all three algorithms, the total number of iterations $T$ is set to be $30$. To compare the performance across algorithms, we use the scale-invariant Cosine Similarity metric (in absolute value) defined as $\mathrm{Cossim} \left(\bar{\bx},\bw^{(T)}\right) := \frac{|\left\langle\bar{\bx},\bw^{(T)} \right\rangle|}{\|\bar{\bx}\|_2\|\bw^{(T)}\|_2}$, where $\bar{\bx}$ is the ground-truth signal to estimate, and $\bw^{(T)}$ denotes the output vector of the algorithm. Note that we take the absolute value because any estimate $\hat{\bx}$ or its negative $-\hat{\bx}$ are considered equally good in all the baseline methods we compare with.

The experiments are performed on the MNIST~\cite{lecun1998gradient}, Fashion-MNIST~\cite{xiao2017fashion} and CelebA~\cite{liu2015deep} datasets, with the numerical results for the Fashion-MNIST and CelebA datasets being presented in Appendices~\ref{app:exp_fashionMNIST} and~\ref{app:exp_CelebA}. The MNIST dataset consists of $60,000$ images of handwritten digits. The size of each image is $28 \times 28$, and thus $n = 784$. To reduce the impact of local minima, we perform $10$ random restarts, and choose the best among these. The cosine similarity is averaged over the test images, and also over these $10$ random restarts. The generative model $G$ is set to be a pre-trained variational autoencoder (VAE) model with latent dimension $k = 20$. We use the VAE model trained by the authors of~\cite{bora2017compressed} directly, for which the encoder and decoder are both fully connected neural networks with two hidden layers, with the architecture being $20-500-500-784$. The VAE is trained by the Adam optimizer with a mini-batch size of $100$ and a learning rate of $0.001$.  The projection step $\calP_G(\cdot)$ is solved by the Adam optimizer with a learning rate of $0.03$ and $200$ steps. In each iteration of~\texttt{TPower}, the calculated entries are truncated to zero except for the largest $q$ entries, where $q \in \bbN$ is a tuning parameter. Since for~\texttt{TPower}, $q$ is usually selected as an integer larger than the true sparsity level, and since it is unlikely that the image of the MNIST dataset can be well approximated by a $k$-sparse vector with $k = 20$, we choose a relatively large $q$, namely $q = 150$. Similarly to~\cite{bora2017compressed} and other related works, we only report the results on a test set that is unseen by the pre-trained VAE model, i.e., the training of $G$ and the PPower computations do not use common data.\footnote{All experiments are run using Python 3.6 and Tensorflow 1.5.0, with a NVIDIA GeForce GTX 1080 Ti 11GB GPU. The corresponding code is available at~\url{https://github.com/liuzq09/GenerativePCA}.}
  
 \begin{enumerate}
 \item \textbf{Spiked covariance model (Example~\ref{exam:spikedCov})}: The numerical results are shown in Figures~\ref{fig:mnist_imgs_spikedCov} and~\ref{fig:quant_mnist_spikedCov}. We observe from Figure~\ref{fig:mnist_imgs_spikedCov} that~\texttt{Power} and~\texttt{TPower} attain poor reconstructions, and the generative prior based method~\texttt{PPower} attains significantly better reconstructions. To illustrate the effect of the sample size $m$, we fix the SNR parameter $\beta = 1$ and vary $m$ in $\{100,200,300,400,500\}$. In addition, to illustrate the effect of the SNR parameter $\beta$, we fix $m = 300$, and vary $\beta$ in $\{0.6,0.7,0.8,0.9,1,2,3,4\}$. From Figure~\ref{fig:quant_mnist_spikedCov}, we observe that for these settings of $m$ and $\beta$,~\texttt{PPower} always leads to a much higher cosine similarity compared to~\texttt{Power} and~\texttt{TPower}, which is natural given the more precise modeling assumptions used.
 
  \begin{figure} \vspace*{-2ex}
\begin{center}
\begin{tabular}{cc}
\includegraphics[height=0.195\textwidth]{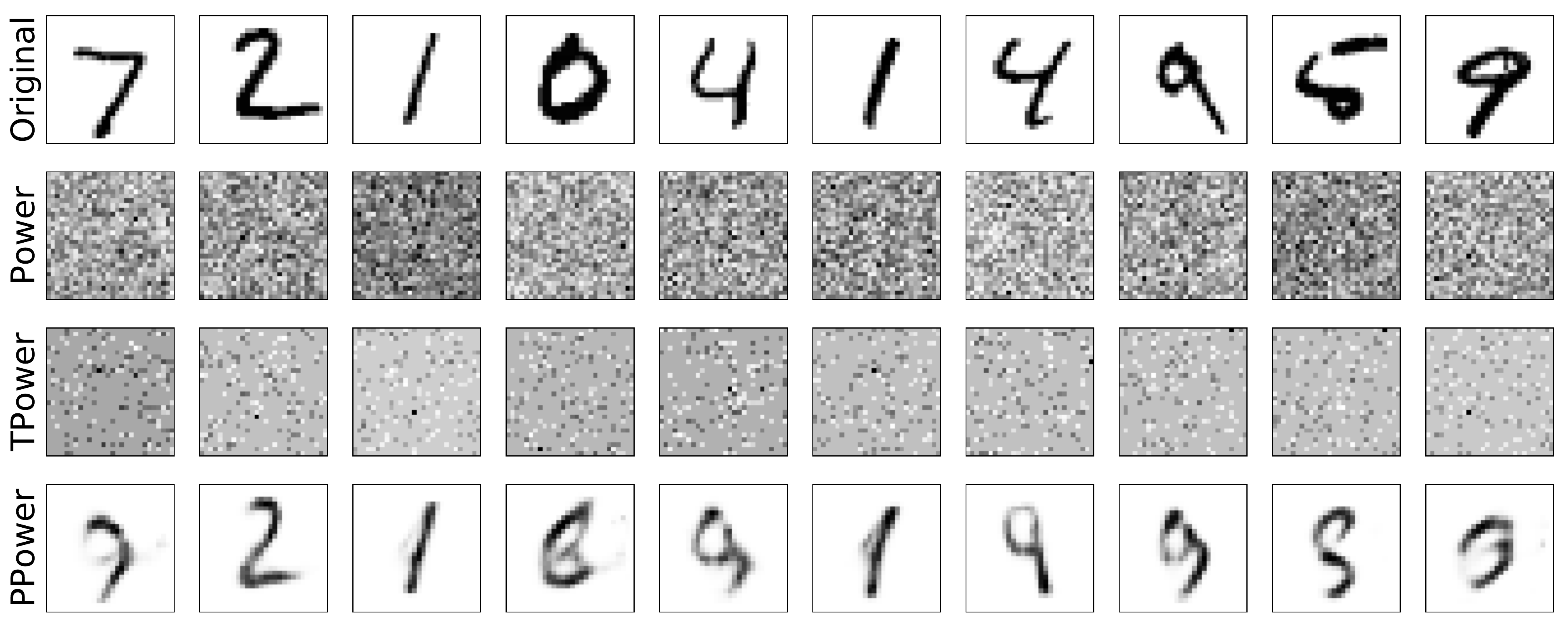} & \hspace{-0.5cm}
\includegraphics[height=0.195\textwidth]{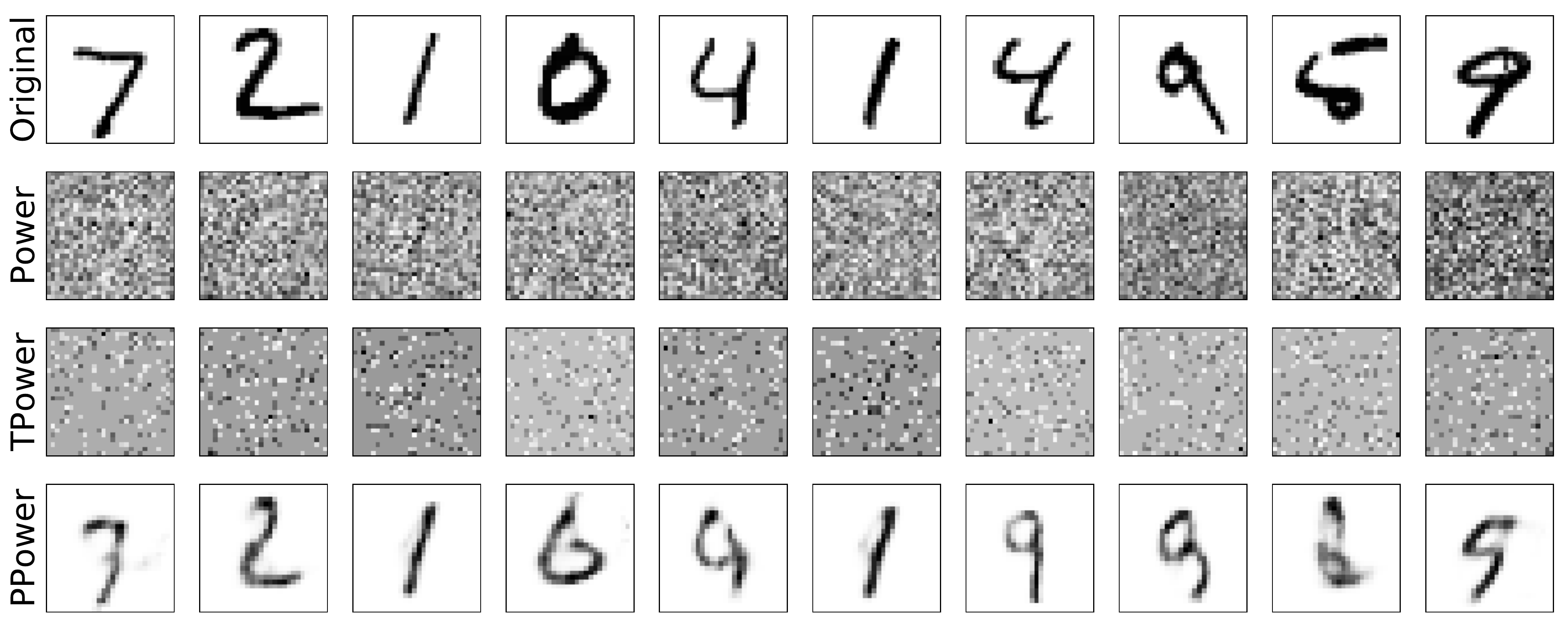} \\
{\small (a) $\beta = 1$ and $m = 200$} & {\small (b) $\beta = 2$ and $m = 100$}
\end{tabular}
\caption{Examples of reconstructed images of the MNIST dataset for the spiked covariance model.}
\label{fig:mnist_imgs_spikedCov}  \vspace*{-2ex}
\end{center}
\end{figure} 


 \begin{figure}
\begin{center}
\begin{tabular}{ccc}
\includegraphics[height=0.25\textwidth]{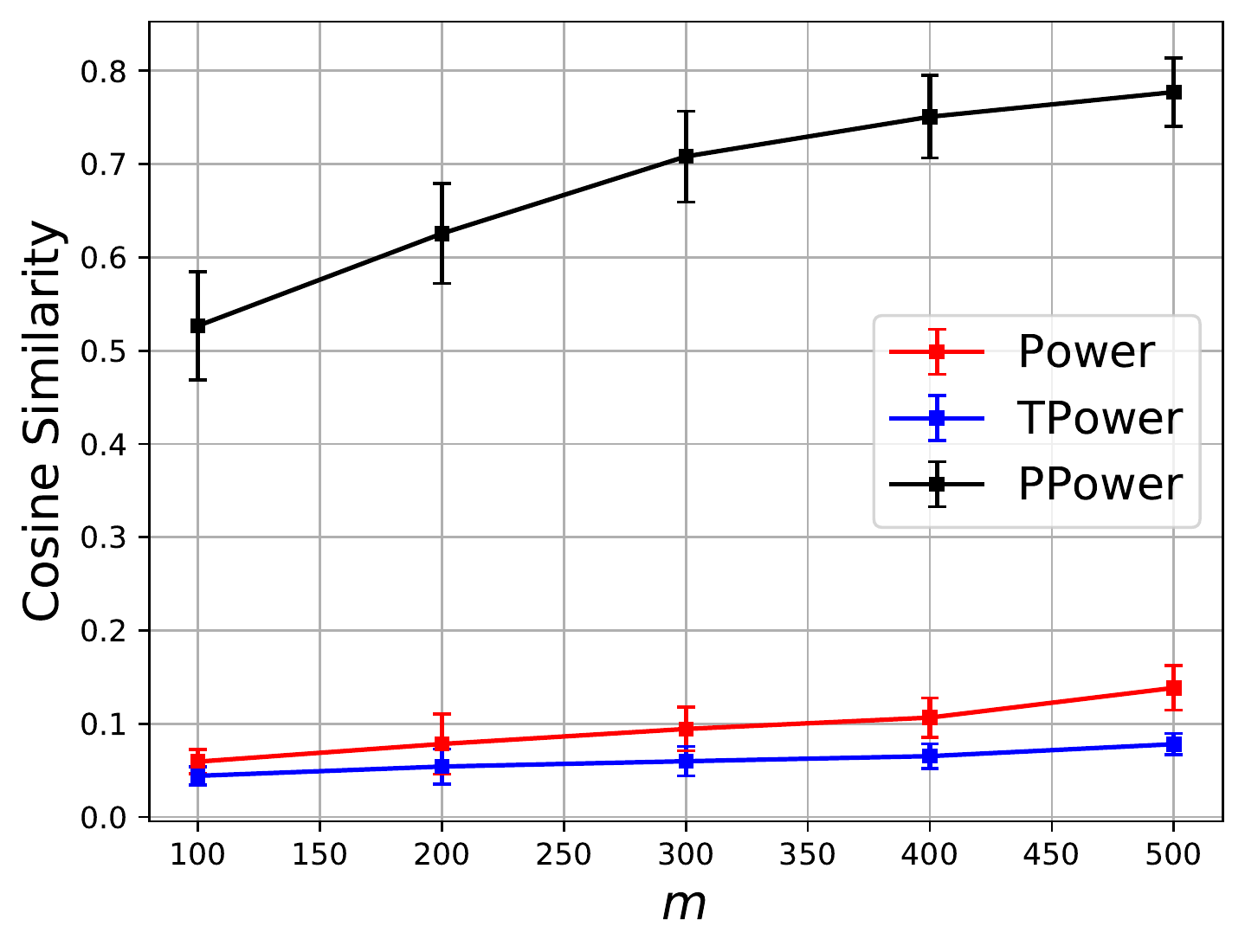} & \hspace{-0.5cm}
\includegraphics[height=0.25\textwidth]{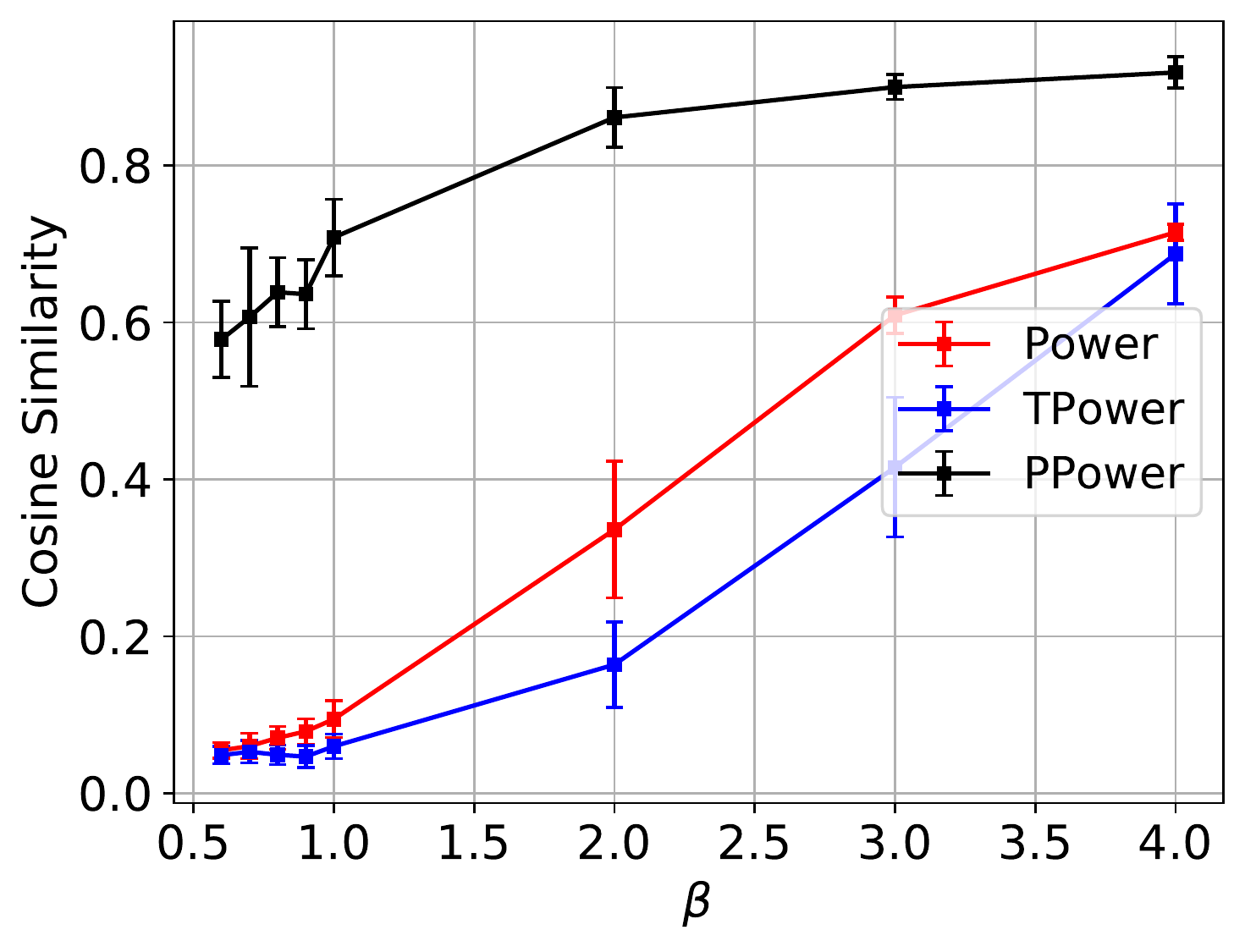} & \hspace{-0.5cm}
\includegraphics[height=0.25\columnwidth]{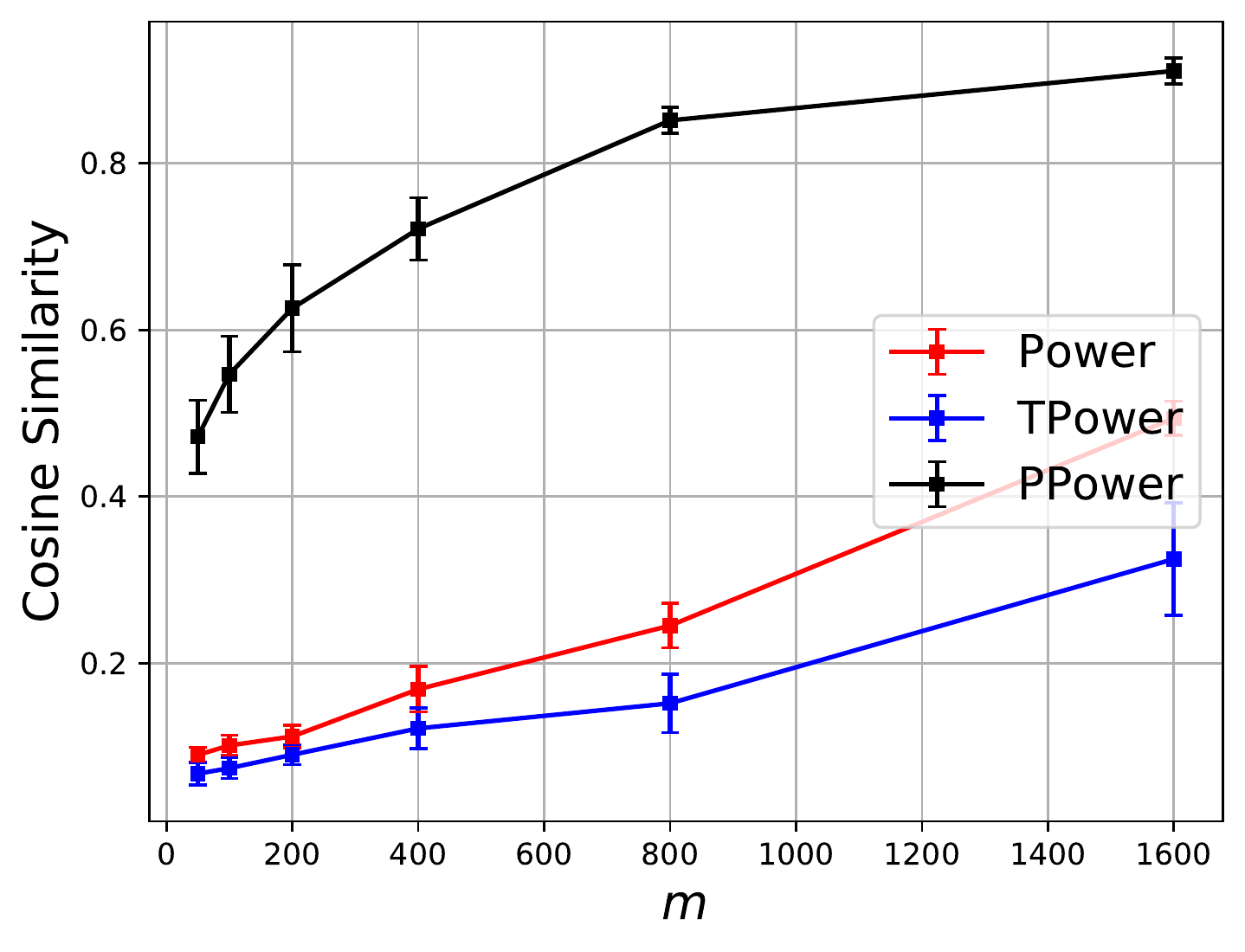} \\
{\small (a) Fixing $\beta = 1$ and varying $m$} & \hspace{-0.8cm} {\small (b) Fixing $m = 300$ and varying $\beta$} & \hspace{-0.7cm} {\small (c) Analog of (a) for phase ret.}
\end{tabular}
\caption{Quantitative comparisons of the performance of~\texttt{Power},~\texttt{TPower} and~\texttt{PPower} according to the Cosine Similarity for the MNIST dataset, under both the spiked covariance model (Left/Middle) and phase retrieval model (Right).} \label{fig:quant_mnist_spikedCov} \vspace*{-2ex}
\end{center}
\end{figure} 
 
  \item  \textbf{Phase retrieval (Example~\ref{exam:noiseless_pr})}: The results are shown in Figure~\ref{fig:quant_mnist_spikedCov} (Right) and Figure~\ref{fig:mnist_imgs_pr}. Again, we can observe that~\texttt{PPower} significantly outperforms~\texttt{Power} and~\texttt{TPower}. In particular, for sparse phase retrieval, when performing experiments on image datasets, even for the noiseless setting, solving an eigenvalue problem similar to~\eqref{eq:opt_genPCA} can typically only serve as a spectral initialization step, with a subsequent iterative algorithm being required to refine the initial guess. In view of this, it is notable that for phase retrieval with generative priors,~\texttt{PPower} can return meaningful reconstructed images for $m = 200$, which is small compared to $n = 784$. 
   \end{enumerate}
   
   \section{Conclusion}

We have proposed a quadratic estimator for eigenvalue problems with generative models, and we showed that this estimator attains a statistical rate of order $\sqrt{\frac{k\log L}{m}}$. We also showed that this statistical rate is almost tight by providing a near-matching algorithm-independent lower bound. Furthermore, we provided a projected power method to efficiently solve (modulo the complexity of the projection step) the corresponding optimization problem, and showed that our method converges exponentially fast to a point achieving a statistical rate of order $\sqrt{\frac{k \log L}{m}}$ under suitable conditions.

  \begin{figure}
\begin{center}
\begin{tabular}{cc}
\includegraphics[height=0.19\textwidth]{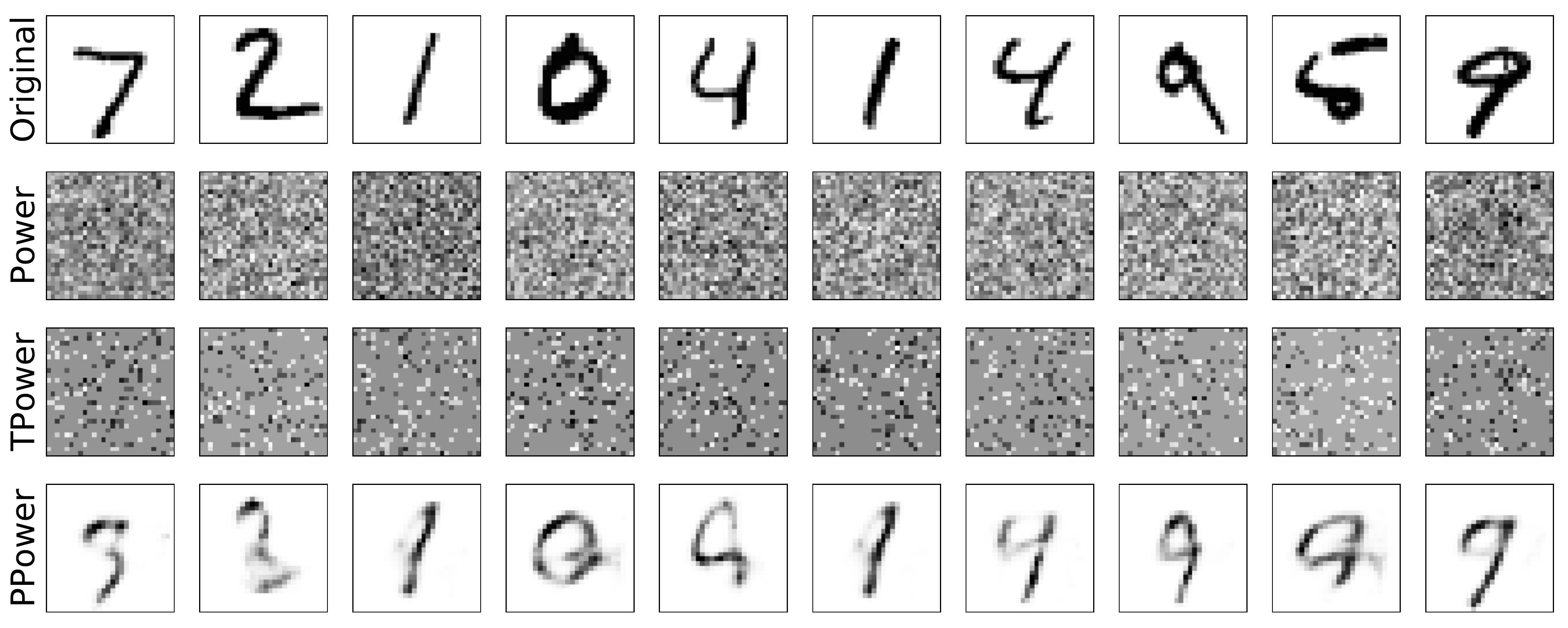} & \hspace{-0.5cm}
\includegraphics[height=0.19\textwidth]{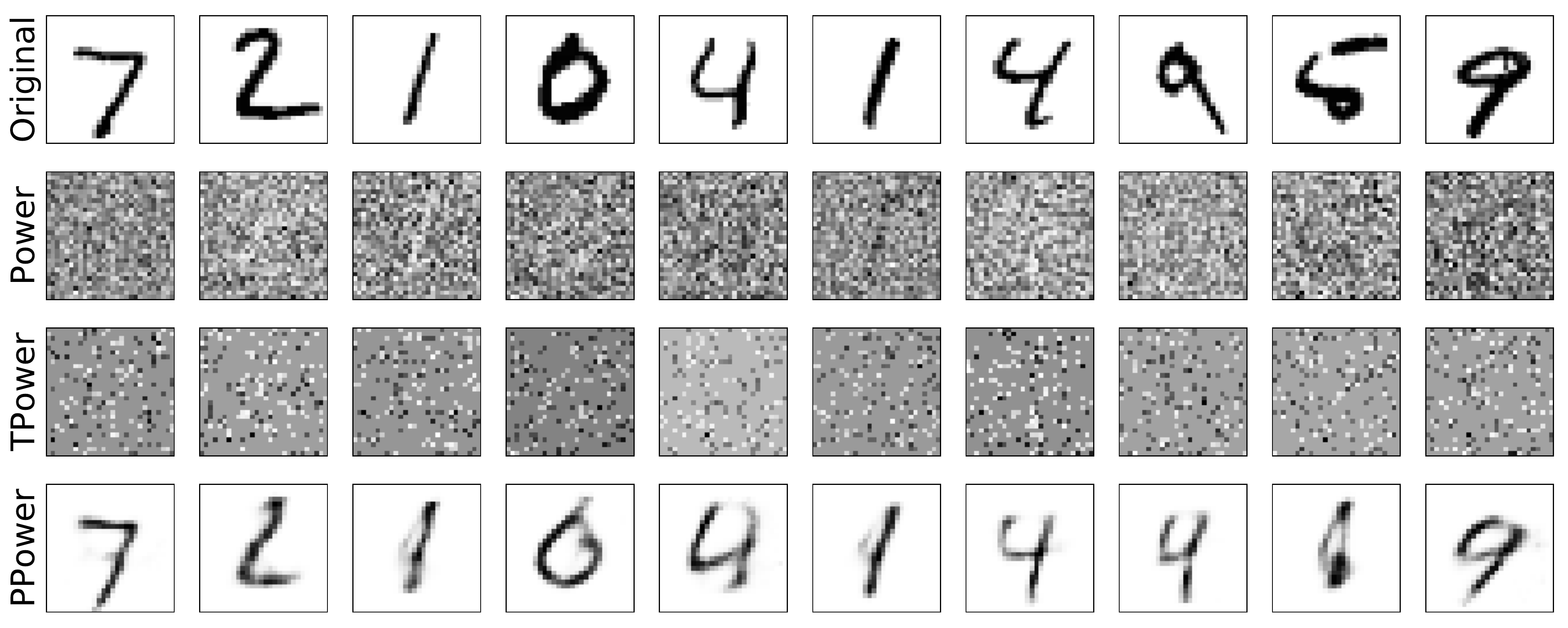} \\
{\small (a) $m = 200$} & {\small (b) $m = 400$}
\end{tabular}
\caption{Examples of reconstructed images of the MNIST dataset for phase retrieval.} \label{fig:mnist_imgs_pr}
\end{center}
\end{figure}


\appendices

\section{Proof of Lemma~\ref{lem:bVPSD} (Non-Decreasing Property of $Q$)}
\label{app:bVPSD}

 Since $\bw^{(t+1)} = \calP_{G} \left(\bV \bw^{(t)}\right)$ and $\bw^{(t)} \in \mathrm{Range}(G)$, we have 
 \begin{equation}
  \|\bV \bw^{(t)} - \bw^{(t+1)}\|_2 \le \|\bV \bw^{(t)} - \bw^{(t)}\|_2,
 \end{equation}
and since $\|\bw^{(t+1)}\|_2 = \|\bw^{(t)}\|_2 =1$, expanding the square gives 
\begin{equation}\label{eq:lem1_base1}
\left\langle \bV \bw^{(t)}, \bw^{(t+1)} \right\rangle \ge Q(\bw^{(t)}).
\end{equation}
Then, we obtain
\begin{align}
 Q(\bw^{(t+1)}) &= \left\langle \bV \bw^{(t+1)}, \bw^{(t+1)} \right\rangle \\
 & = Q(\bw^{(t+1)} - \bw^{(t)}) + 2 \left\langle \bV (\bw^{(t+1)} -\bw^{(t)}) , \bw^{(t)} \right\rangle + Q(\bw^{(t)}) \label{eq:lem1_exp0} \\
 & \ge 2 \left\langle \bV (\bw^{(t+1)} -\bw^{(t)}) , \bw^{(t)} \right\rangle + Q(\bw^{(t)})\label{eq:lem1_exp1} \\
 & \ge Q(\bw^{(t)})\label{eq:lem1_exp2},
\end{align}
where~\eqref{eq:lem1_exp0} follows by writing $\bw^{(t+1)} = \bw^{(t)} + (\bw^{(t+1)} - \bw^{(t)})$ and expanding, \eqref{eq:lem1_exp1} follows from the assumption that $\bV$ is PSD, and~\eqref{eq:lem1_exp2} follows from~\eqref{eq:lem1_base1}.

\section{Proofs for Spiked Matrix and Phase Retrieval Examples}\label{app:proof_examples}

Before proceeding, we present the following standard definitions.

\begin{definition} \label{def:subg}
 A random variable $X$ is said to be sub-Gaussian if there exists a positive constant $C$ such that $\left(\mathbb{E}\left[|X|^{p}\right]\right)^{1/p} \leq C  \sqrt{p}$ for all $p\geq 1$.  The sub-Gaussian norm of a sub-Gaussian random variable $X$ is defined as $\|X\|_{\psi_2}:=\sup_{p\ge 1} p^{-1/2}\left(\mathbb{E}\left[|X|^{p}\right]\right)^{1/p}$. 
\end{definition}

\begin{definition}
 A random variable $X$ is said to be sub-exponential if there exists a positive constant $C$ such that $\left(\bbE\left[|X|^p\right]\right)^{\frac{1}{p}} \le C p$ for all $p \ge 1$. The sub-exponential norm of $X$ is defined as $\|X\|_{\psi_1} := \sup_{p \ge 1} p^{-1} \left(\bbE\left[|X|^p\right]\right)^{\frac{1}{p}}$.
\end{definition}

The following lemma states that the product of two sub-Gaussian random variables is sub-exponential, regardless of the dependence between them.

\begin{lemma}{\em \hspace{1sp}\cite[Lemma~2.7.7]{vershynin2018high}}\label{lem:prod_subGs}
  Let $X$ and $Y$ be sub-Gaussian random variables (not necessarily independent). Then $XY$ is sub-exponential, and satisfies
  \begin{equation}
   \|XY\|_{\psi_1} \le \|X\|_{\psi_2}\|Y\|_{\psi_2}.
  \end{equation}
\end{lemma}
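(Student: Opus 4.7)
The plan is to derive the claimed inequality via an elementary moment argument based on Cauchy–Schwarz, exploiting the fact that taking the product of $|X|^p$ and $|Y|^p$ doubles the exponent, and doubled exponents are exactly what connects the $\psi_2$ and $\psi_1$ scalings ($\sqrt{p}$ versus $p$).

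First, I would fix an arbitrary $p \ge 1$ and apply the Cauchy–Schwarz inequality to the product $|X|^p \cdot |Y|^p$, yielding
\begin{equation}
\bbE\left[|XY|^p\right] = \bbE\left[|X|^p |Y|^p\right] \le \left(\bbE\left[|X|^{2p}\right]\right)^{1/2} \left(\bbE\left[|Y|^{2p}\right]\right)^{1/2}.
\end{equation}
Taking the $p$-th root of both sides converts this into a product of $(2p)$-th moment roots:
\begin{equation}
\left(\bbE\left[|XY|^p\right]\right)^{1/p} \le \left(\bbE\left[|X|^{2p}\right]\right)^{1/(2p)} \left(\bbE\left[|Y|^{2p}\right]\right)^{1/(2p)}.
\end{equation}

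Next, I would invoke the sub-Gaussian norm definition at exponent $2p$ (which is $\ge 1$ since $p \ge 1$), giving $\left(\bbE[|X|^{2p}]\right)^{1/(2p)} \le \sqrt{2p}\, \|X\|_{\psi_2}$ and analogously for $Y$. Combining these with the previous display yields
\begin{equation}
\left(\bbE\left[|XY|^p\right]\right)^{1/p} \le 2p\, \|X\|_{\psi_2} \|Y\|_{\psi_2},
\end{equation}
and dividing by $p$ and taking the supremum over $p \ge 1$ shows $XY$ is sub-exponential with $\|XY\|_{\psi_1} \le 2\|X\|_{\psi_2}\|Y\|_{\psi_2}$ (absorbing the factor $2$ into the absolute constant implicit in the equivalence of Orlicz and moment definitions).

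To recover the inequality with constant $1$ as stated, I would instead present the cleaner Orlicz-type proof: writing $a = |X|/\|X\|_{\psi_2}$ and $b = |Y|/\|Y\|_{\psi_2}$, the AM–GM bound $ab \le \tfrac{1}{2}(a^2+b^2)$ together with convexity of the exponential ($e^{(u+v)/2} \le \tfrac{1}{2}(e^u+e^v)$) gives pointwise
\begin{equation}
\exp\!\left(\frac{|XY|}{\|X\|_{\psi_2}\|Y\|_{\psi_2}}\right) \le \tfrac{1}{2}\exp\!\left(\frac{X^2}{\|X\|_{\psi_2}^2}\right) + \tfrac{1}{2}\exp\!\left(\frac{Y^2}{\|Y\|_{\psi_2}^2}\right),
\end{equation}
so that taking expectations and using the Orlicz characterization of $\|\cdot\|_{\psi_2}$ (each term on the right has expectation at most $2$) bounds the left-hand side by $2$, which is precisely the defining inequality for $\|XY\|_{\psi_1} \le \|X\|_{\psi_2}\|Y\|_{\psi_2}$.

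The only subtlety, and thus the main obstacle, is matching absolute constants between the moment-based normalization used in this paper's definitions and the Orlicz-based normalization used in the source cited. Since both normalizations are equivalent up to universal constants, this is not a true mathematical difficulty but a bookkeeping one; I would resolve it by citing the equivalence (e.g.,\ Proposition 2.5.2 and 2.7.1 in the referenced monograph) and noting that the argument above transfers without change, since independence of $X$ and $Y$ is nowhere used.
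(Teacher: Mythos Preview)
The paper does not supply its own proof of this lemma; it is simply quoted from Vershynin's monograph without argument. Your second (Orlicz/AM--GM) proof is precisely the one given in the cited source, and your first (Cauchy--Schwarz moment) proof is also correct and yields the result with an extra factor of~$2$ under the moment-based definitions the paper actually states (Definitions~\ref{def:subg} and the following one). You have correctly identified that the constant-$1$ statement as written is literally true only under the Orlicz normalization of the cited reference, not the moment normalization the paper uses; this is a harmless discrepancy since every downstream use in the paper only needs the inequality up to an absolute constant.
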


The following lemma provides a useful concentration inequality for the sum of independent sub-exponential random variables.
\begin{lemma}{\em \hspace{1sp}\cite[Proposition~5.16]{vershynin2010introduction}}\label{lem:large_dev}
Let $X_{1}, \ldots , X_{N}$ be independent zero-mean sub-exponential random variables, and $K = \max_{i} \|X_{i} \|_{\psi_{1}}$. Then for every $\balpha = [\alpha_1,\ldots,\alpha_N]^T \in \bbR^N$ and $\epsilon \geq 0$, it holds that
\begin{align}
 & \mathbb{P}\bigg( \Big|\sum_{i=1}^{N}\alpha_i X_{i}\Big|\ge \epsilon\bigg)  \leq 2  \exp \left(-c \cdot \mathrm{min}\Big(\frac{\epsilon^{2}}{K^{2}\|\balpha\|_2^2},\frac{\epsilon}{K\|\balpha\|_\infty}\Big)\right), \label{eq:subexp}
\end{align}
where $c > 0$ is an absolute constant.  In particular, with $\balpha = \big[ \frac{1}{N},\dotsc,\frac{1}{N} \big]^T$, we have
\begin{align}
    & \mathbb{P}\bigg( \Big| \frac{1}{N} \sum_{i=1}^{N} X_{i}\Big|\ge \epsilon\bigg)  \leq 2  \exp \left(-c \cdot \mathrm{min}\Big(\frac{N \epsilon^{2}}{K^{2}},\frac{N \epsilon}{K}\Big)\right). \label{eq:subexp2}
\end{align}
\end{lemma}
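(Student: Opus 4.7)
The plan is to prove this Bernstein-type bound via the standard Chernoff/moment generating function (MGF) approach, which splits into two regimes depending on whether the quadratic or linear term dominates in the exponent. First, I would reduce the two-sided bound $\mathbb{P}(|\sum_i \alpha_i X_i|\ge\epsilon)$ to the one-sided bound $\mathbb{P}(\sum_i \alpha_i X_i\ge\epsilon)$ by symmetry together with a factor of 2 from a union bound (applying the one-sided version to $X_i$ and to $-X_i$, the latter having the same sub-exponential norm).

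The key auxiliary fact I would establish is a centered MGF bound: for any zero-mean sub-exponential $X$ with $\|X\|_{\psi_1}\le K$, there exist absolute constants $c_1,c_2>0$ such that
\begin{equation}
\mathbb{E}\bigl[e^{\lambda X}\bigr]\;\le\;\exp\bigl(c_1 K^2 \lambda^2\bigr)\qquad\text{whenever } |\lambda|\le c_2/K.
\end{equation}
This is the heart of the argument and would be obtained from a Taylor expansion $\mathbb{E}[e^{\lambda X}]=1+\sum_{p\ge 2}\frac{\lambda^p \mathbb{E}[X^p]}{p!}$, using the equivalent moment characterization $(\mathbb{E}|X|^p)^{1/p}\le C'Kp$ together with $p!\ge (p/e)^p$ to bound each term by $(C''K|\lambda|)^p$, summing a geometric series on $\{|\lambda|\le c_2/K\}$, and finally using $1+u\le e^u$.

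Given this MGF bound, the Chernoff method yields, for $\lambda\ge 0$ with $\lambda|\alpha_i|\le c_2/K$ for all $i$,
\begin{equation}
\mathbb{P}\Bigl(\sum_{i=1}^{N}\alpha_i X_i\ge\epsilon\Bigr)\;\le\; e^{-\lambda\epsilon}\prod_{i=1}^{N}\mathbb{E}\bigl[e^{\lambda\alpha_i X_i}\bigr]\;\le\;\exp\bigl(-\lambda\epsilon+c_1 K^2\lambda^2\|\balpha\|_2^2\bigr),
\end{equation}
valid on the admissible range $0\le\lambda\le c_2/(K\|\balpha\|_\infty)$ (here I use independence to factor the MGF, and the elementary identity $\sum_i\alpha_i^2=\|\balpha\|_2^2$). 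I would then optimize over $\lambda$. The unconstrained minimizer of $-\lambda\epsilon+c_1 K^2\lambda^2\|\balpha\|_2^2$ is $\lambda^{\star}=\epsilon/(2c_1 K^2\|\balpha\|_2^2)$, and two cases arise: (i) if $\lambda^{\star}\le c_2/(K\|\balpha\|_\infty)$, plugging in $\lambda^{\star}$ gives the Gaussian-type exponent $-\epsilon^2/(4c_1 K^2\|\balpha\|_2^2)$; (ii) otherwise $\epsilon$ is so large that $\lambda^{\star}$ lies beyond the admissible range, and choosing $\lambda=c_2/(K\|\balpha\|_\infty)$ at the boundary produces a linear exponent $-c_3\epsilon/(K\|\balpha\|_\infty)$ after absorbing the now-subdominant quadratic term into the constant. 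Combining the two regimes yields the $\min$ inside the exponent; adjusting $c$ to be the smaller of the resulting constants gives \eqref{eq:subexp}. The specialization \eqref{eq:subexp2} follows immediately by taking $\alpha_i=1/N$, for which $\|\balpha\|_2^2=1/N$ and $\|\balpha\|_\infty=1/N$.

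The main obstacle is the MGF bound: one must carefully track that the Taylor series converges uniformly on a neighborhood of zero whose radius scales as $1/K$, and that the resulting quadratic coefficient is $O(K^2)$ rather than something larger. Once that is in place, the rest of the argument is the standard Chernoff-plus-optimization recipe, with the only subtlety being the correct handling of the two regimes in the optimization and the bookkeeping of absolute constants so they can be consolidated into a single $c>0$.
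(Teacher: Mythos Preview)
The paper does not prove this lemma; it is quoted verbatim from \cite[Proposition~5.16]{vershynin2010introduction} and used as a black-box concentration tool. Your proposed Chernoff/MGF argument is correct and is essentially the proof given in the cited reference, so there is nothing to compare against here.
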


The widely-used notion of an $\epsilon$-net is introduced as follows.
\begin{definition}
Let $(\calX,d)$ be a metric space, and fix $\epsilon>0$. A subset $S \subseteq \calX$ is said be an {\em $\epsilon$-net} of $\calX$ if, for all $x \in \calX$, there exists some $s \in S$ such that $d(s,x) \le \epsilon$. The minimal cardinality of an $\epsilon$-net of $\calX$, if finite, is denoted $\calC(\calX,\epsilon)$ and is called the covering number of $\calX$ (at scale $\epsilon$).
\end{definition}

The following lemma provides a useful upper bound for the covering number of the unit sphere. 
\begin{lemma}{\em \hspace{1sp}\cite[Lemma~5.2]{vershynin2010introduction}}\label{lem:unit_sphere_cov}
 The unit Euclidean sphere $\calS^{N-1}$ equipped
with the Euclidean metric satisfies for every $\epsilon > 0$ that
\begin{equation}
 \calC(\calS^{N-1},\epsilon) \le \left(1+\frac{2}{\epsilon}\right)^{N}.
\end{equation}
\end{lemma}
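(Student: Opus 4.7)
The plan is to establish this covering bound via a maximal packing argument together with a standard volume comparison. The key observation is that a \emph{maximal} $\epsilon$-separated subset of the sphere is automatically an $\epsilon$-net, after which its cardinality can be controlled by placing small disjoint balls around its points and comparing their total volume against that of a slightly enlarged ball that contains them.

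First I would fix $\epsilon > 0$ and let $S \subseteq \calS^{N-1}$ be a subset that is maximal with respect to the property $\|s - s'\|_2 > \epsilon$ for all distinct $s, s' \in S$. Such a set exists by a standard greedy/Zorn's lemma construction, and is necessarily finite because $\calS^{N-1}$ is compact. By maximality, for every $x \in \calS^{N-1}$ there must be some $s \in S$ with $\|x - s\|_2 \le \epsilon$ — otherwise $S \cup \{x\}$ would still be $\epsilon$-separated, contradicting maximality. Hence $S$ is an $\epsilon$-net, and in particular $\calC(\calS^{N-1}, \epsilon) \le |S|$.

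Next I would exploit the $\epsilon$-separation geometrically. The open Euclidean balls $B(s, \epsilon/2) \subset \bbR^N$ for $s \in S$ are pairwise disjoint, for if some $x$ lay in both $B(s_1, \epsilon/2)$ and $B(s_2, \epsilon/2)$ with $s_1 \ne s_2$, the triangle inequality would give $\|s_1 - s_2\|_2 < \epsilon$, contradicting the definition of $S$. At the same time, every such ball is contained in $B(0, 1 + \epsilon/2)$: any $x$ with $\|x - s\|_2 < \epsilon/2$ and $\|s\|_2 = 1$ satisfies $\|x\|_2 \le 1 + \epsilon/2$.

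Finally, since the $N$-dimensional Lebesgue volume of a Euclidean ball scales like the $N$-th power of its radius, the disjointness together with the containment yield
\begin{equation}
|S| \cdot (\epsilon/2)^N \;\le\; (1 + \epsilon/2)^N,
\end{equation}
which on rearrangement gives $|S| \le (1 + 2/\epsilon)^N$, as desired. There is no serious obstacle here: the argument is purely elementary, independent of the probabilistic tools built up earlier in this appendix. The only mild point to verify carefully is that the enlargement from radius $1$ to radius $1 + \epsilon/2$ is exactly what is required to accommodate the full $(\epsilon/2)$-balls around the sphere rather than just their centers; choosing the packing radius $\epsilon/2$ (half the separation) is what makes both the disjointness and the containment go through simultaneously.
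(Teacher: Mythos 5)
Your proof is correct and is exactly the standard volume-comparison argument that appears in the cited reference (Vershynin, Lemma~5.2), which the paper invokes without reproducing: a maximal $\epsilon$-separated set is automatically an $\epsilon$-net, and packing disjoint $(\epsilon/2)$-balls around its points inside $B(0,1+\epsilon/2)$ yields the bound $\left(1+2/\epsilon\right)^N$. No gaps; this matches the intended proof.
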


The following lemma provides an upper bound for the spectral norm of a symmetric matrix.
\begin{lemma}{\em \hspace{1sp}\cite[Lemma~5.4]{vershynin2010introduction}}\label{lem:symm_spectral_norm}
 Let $\bX$ be a symmetric $N\times N$ matrix, and let $\calC_{\epsilon}$ be an $\epsilon$-net of $\calS^{N-1}$ for some $\epsilon \in [0,1/2)$. Then, 
 \begin{equation}
  \|\bX\|_{2\to 2} = \sup_{\br \in \calS^{N-1}} |\langle \bX\br,\br\rangle| \le (1-2\epsilon)^{-1} \sup_{\br\in \calC_{\epsilon}} |\langle \bX\br,\br\rangle|.
 \end{equation}
\end{lemma}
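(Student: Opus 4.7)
The plan is to prove this by an approximation argument: the value $\langle \bX \br, \br \rangle$ cannot change too much if we replace $\br$ by a nearby point $\br_0$ of the $\epsilon$-net, so controlling the quadratic form on the net controls it on the whole sphere up to a small multiplicative loss. The key identity we exploit is that for a symmetric matrix, the spectral norm coincides with the numerical radius, i.e., $\|\bX\|_{2\to 2} = \sup_{\br \in \calS^{N-1}}|\langle \bX\br,\br\rangle|$. I would take this identity as the starting point of the argument, and only need to justify replacing the supremum over the sphere with a supremum over $\calC_\epsilon$.

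First, I would fix an arbitrary $\br \in \calS^{N-1}$ and pick $\br_0 \in \calC_\epsilon$ with $\|\br-\br_0\|_2 \le \epsilon$, which exists by definition of the net. Then I would write the difference of the two quadratic forms as
\begin{equation}
\langle \bX\br,\br\rangle - \langle \bX\br_0,\br_0\rangle = \langle \bX(\br-\br_0),\br\rangle + \langle \bX\br_0,\br-\br_0\rangle,
\end{equation}
and apply Cauchy--Schwarz (or the operator-norm inequality) to each term, using $\|\br\|_2 = \|\br_0\|_2 = 1$ and $\|\br-\br_0\|_2 \le \epsilon$. This yields
\begin{equation}
\bigl|\langle \bX\br,\br\rangle - \langle \bX\br_0,\br_0\rangle\bigr| \le 2\epsilon\,\|\bX\|_{2\to 2}.
\end{equation}

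Next, I would take the supremum of the previous bound over $\br \in \calS^{N-1}$. Since $\br_0$ depends on $\br$ but lies in $\calC_\epsilon$, I get
\begin{equation}
\|\bX\|_{2\to 2} \le \sup_{\br_0 \in \calC_\epsilon}|\langle \bX\br_0,\br_0\rangle| + 2\epsilon\,\|\bX\|_{2\to 2},
\end{equation}
which, upon rearranging and using $\epsilon \in [0,1/2)$ to ensure $1-2\epsilon > 0$, gives the claimed inequality $\|\bX\|_{2\to 2} \le (1-2\epsilon)^{-1}\sup_{\br_0 \in \calC_\epsilon}|\langle \bX\br_0,\br_0\rangle|$.

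The main (and only) subtlety is the opening step: using symmetry of $\bX$ to replace the operator norm by the numerical radius, which is standard and follows by diagonalizing $\bX$ and noting that its largest-magnitude eigenvalue is attained as $\langle \bX\br,\br\rangle$ at the corresponding eigenvector. Everything else is a routine telescoping argument combined with the condition $\epsilon < 1/2$, which is precisely what makes the coefficient $(1-2\epsilon)^{-1}$ finite.
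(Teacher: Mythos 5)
The paper does not give its own proof of this lemma; it cites it directly as Lemma~5.4 of \cite{vershynin2010introduction}, and your argument reproduces the standard proof from that source. Your telescoping decomposition $\langle \bX\br,\br\rangle - \langle \bX\br_0,\br_0\rangle = \langle \bX(\br-\br_0),\br\rangle + \langle \bX\br_0,\br-\br_0\rangle$, the two Cauchy--Schwarz bounds giving $2\epsilon\|\bX\|_{2\to2}$, and the rearrangement using $\epsilon<1/2$ are all correct and match the cited argument.
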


With the above auxiliary results in place, we provide the proofs of Assumption~\ref{assump:E_cond} holding for the two examples described in Section \ref{sec:examples}.

\subsection{Spiked Covariance Model (Example~\ref{exam:spikedCov})}

As per Assumption~\ref{assump:E_cond}, fix two finite signal sets $S_1$ and $S_2$.  For $r = 1$, we have $\bx_i = \sqrt{\beta}u_i \bs + \bz_i$ and a direct calculation gives $\bbE[\bV] = \bar{\bV} = \beta \bs\bs^T$.  Recall also that $\|\bs\|_2=1$, $u_i \sim \calN(0,1)$, and $\bz_i \sim \calN(\mathbf{0},\bI_n)$.  It follows that for any $\bs_1 \in S_1$, we have that $\bs_1^T\bx_i = \sqrt{\beta} u_i \bs^T\bs_1 + \bz_i^T\bs_1$ is sub-Gaussian, with the sub-Gaussian norm being upper bounded by $C(\sqrt{\beta}+1)\|\bs_1\|_2$. Similarly, we have for any $\bs_2 \in S_2$ that $\|\bs_2^T\bx_i\|_{\psi_2} \le C(\sqrt{\beta}+1)\|\bs_2\|_2$. Applying Lemma~\ref{lem:prod_subGs}, we deduce that $(\bs_1^T\bx_i)(\bs_2^T\bx_i)$ is sub-exponential, with the sub-exponential norm being upper bounded by $C^2 (\sqrt{\beta}+1)^2\|\bs_1\|_2 \|\bs_2\|_2$. In addition, from \eqref{eq:V_spikedCov} and $\bar{\bV} = \beta \bs\bs^T$, we have 
\begin{align}
 \bs_1^T\bE\bs_2 &= \bs_1^T(\bV -\bar{\bV})\bs_2 \\
 & = \frac{1}{m}\sum_{i=1}^m \left((\bx_i^T\bs_1)(\bx_i^T\bs_2) - \left((\bs_1^T\bs_2) + \beta (\bs^T\bs_1)(\bs^T\bs_2)\right)\right),
 \end{align}
and we observe that $\bbE[(\bx_i^T\bs_1)(\bx_i^T\bs_2)] = (\bs_1^T\bs_2) + \beta (\bs^T\bs_1)(\bs^T\bs_2)$. Then, from Lemma~\ref{lem:large_dev}, we obtain that for any $t>0$ satisfying $m = \Omega(t)$, the following holds with probability $1-e^{-\Omega(t)}$ (recall that $C$ may vary from line to line):
\begin{equation}\label{eq:main_bd_spikedCov}
 \left| \frac{1}{m}\sum_{i=1}^m \left((\bx_i^T\bs_1)(\bx_i^T\bs_2) - \left((\bs_1^T\bs_2) + \beta (\bs^T\bs_1)(\bs^T\bs_2)\right)\right)\right| \le C (\sqrt{\beta}+1)^2 \|\bs_1\|_2 \|\bs_2\|_2 \cdot  \frac{\sqrt{t}}{\sqrt{m}},
\end{equation}
where we note that the assumption $m = \Omega(t)$ ensures that the first term is dominant in the minimum in \eqref{eq:subexp2}.  Taking a union bound over all $\bs_1 \in S_1$ and $\bs_2 \in S_2$, and setting $t = \log (|S_1|\cdot|S_2|)$, we obtain with probability $1-e^{-\Omega(\log (|S_1| \cdot |S_2|))}$ that~\eqref{eq:bE_cond} holds (with $\beta$ being a fixed positive constant).  

Next, we bound $|\br^T\bE\br|$ for fixed $\br \in \calS^{n-1}$, but this time consider $t > 0$ (different from the above $t$) satisfying $t = \Omega(m)$.  In this case, we can follow the above analysis (with $\bs_1$ and $\bs_2$ both replaced by $\br$), but the assumption $t = \Omega(m)$ means that when applying Lemma~\ref{lem:large_dev}, the second term in the minimum in \eqref{eq:subexp2} is now the dominant one.  As a result, for any $t>0$ satisfying $t = \Omega(m)$, and any $\br \in \calS^{n-1}$, we have with probability $1-e^{-\Omega(t)}$ that
\begin{align}
  |\br^T\bE\br| &= \left| \frac{1}{m}\sum_{i=1}^m \left((\bx_i^T\br)^2 - \left(1 + \beta (\bs^T\br)^2\right)\right)\right| \\
  & \le C (\sqrt{\beta}+1)  \cdot  \frac{t}{m}.\label{eq:main_bd_spikedCov2}
\end{align}
From Lemma~\ref{lem:unit_sphere_cov}, there exists an $(1/4)$-net $\calC_{\frac{1}{4}}$ of $\calS^{n-1}$ satisfying $\log \big|\calC_{\frac{1}{4}}\big| \le n \log 9$. Taking a union bound over all $\br \in \calC_{\frac{1}{4}}$, and setting $t =C n$, we obtain with probability $1-e^{-\Omega(n)}$ that
\begin{equation}
 \sup_{\br \in \calC_{\frac{1}{4}}} \left|\br^T \bE \br\right| = O\left(\frac{n}{m}\right).
\end{equation}
Then, from Lemma~\ref{lem:symm_spectral_norm}, we have 
\begin{equation}\label{eq:ub_bE_spikedCov}
 \|\bE\|_{2\to 2} \le 2 \sup_{\br \in \calC_{\frac{1}{4}}} \left|\br^T \bE \br\right| = O\left(\frac{n}{m}\right).
\end{equation}

\subsection{Phase Retrieval (Example~\ref{exam:noiseless_pr})}

Let $\bW = \frac{1}{m} \sum_{i=1}^m y_i \ba_i\ba_i^T\mathbf{1}_{\{l < y_i < u\}}$ and $\bar{\bW}= \beta \bs\bs^T + \gamma \bI_n$. It is shown in~\cite[Lemma~8]{liu2021towards} that
\begin{equation}\label{eq:bWmean}
\bbE\left[\bW \right] = \bar{\bW},
\end{equation}
which implies
\begin{equation}
 \bbE[\bV] = \beta \bs\bs^T = \bar{\bV}.
\end{equation}
Then, for any $\bs_1 \in S_1$ and $\bs_2 \in S_2$, we have 
\begin{align}
 \bs_1^T\bE\bs_2 &= \bs_1^T(\bV -\bar{\bV})\bs_2 = \bs_1^T(\bW -\bar{\bW})\bs_2 \\
 & = \frac{1}{m}\sum_{i=1}^m \Big(y_i (\ba_i^T\bs_1)(\ba_i^T\bs_2)\mathbf{1}_{\{l < y_i < u\}} - \big(\beta (\bs^T\bs_1)(\bs^T\bs_2) + \gamma (\bs_1^T\bs_2)\big)\Big).
\end{align}
Since each $\ba_i$ has i.i.d.~$\calN(0,1)$ entries, we observe that $y_i (\ba_i^T\bs_1)(\ba_i^T\bs_2)\mathbf{1}_{\{l < y_i < u\}}$ is sub-exponential with the sub-exponential norm being upper bounded by $C u \|\bs_1\|_2 \|\bs_2\|_2$. In addition, from~\eqref{eq:bWmean}, we have $\bbE[y_i (\ba_i^T\bs_1)(\ba_i^T\bs_2)\mathbf{1}_{\{l < y_i < u\}}] = \beta (\bs^T\bs_1)(\bs^T\bs_2) + \gamma (\bs_1^T\bs_2)$. Then, from Lemma~\ref{lem:large_dev}, we obtain that for any $t>0$ satisfying $m = \Omega(t)$, with probability $1-e^{-\Omega(t)}$,
\begin{equation}\label{eq:bd_pr_mainTerm}
 \left|\frac{1}{m}\sum_{i=1}^m \left(y_i (\ba_i^T\bs_1)(\ba_i^T\bs_2)\mathbf{1}_{\{l < y_i < u\}} - (\beta (\bs^T\bs_1)(\bs^T\bs_2) + \gamma (\bs_1^T\bs_2))\right)\right| \le C u \|\bs_1\|_2 \|\bs_2\|_2 \cdot \frac{\sqrt{t}}{\sqrt{m}}.
\end{equation}
Taking a union bound over all $\bs_1 \in S_1$ and $\bs_2 \in S_2$, and setting $t = \log (|S_1|\cdot|S_2|)$, we obtain that with probability $1-e^{-\Omega(\log (|S_1| \cdot |S_2|))}$, \eqref{eq:bE_cond} holds as desired (with $u$ being a fixed positive constant). In addition, similarly to~\eqref{eq:ub_bE_spikedCov}, we have with probability $1-e^{-\Omega(n)}$ that $\|\bE\|_{2\to 2} = O\big(\frac{n}{m}\big)$.

\section{Equivalence of Distances} \label{sec:distances}

The following lemma gives a useful equivalence between two distances.

\begin{lemma}\label{lem:simple_dist_eq}
    For any pair of unit vectors $\bw_1, \bw_2$ with $\bw_1^T \bw_2 \ge 0$, we have 
    \begin{equation}
        \|\bw_1-\bw_2\|_2^2 \le \|\bw_1\bw_1^T-\bw_2\bw_2^T\|_\rmF^2 \le 2 \|\bw_1-\bw_2\|_2^2.
    \end{equation}
    Moreover, if $\bw_1^T \bw_2 < 0$, then the same holds with $\|\bw_1 - \bw_2\|_2$ replaced by $\|\bw_1 + \bw_2\|_2$.
\end{lemma}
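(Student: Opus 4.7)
The plan is to reduce both quantities to simple functions of the single scalar $t := \bw_1^T \bw_2 \in [-1,1]$, and then compare them on the relevant range of $t$. No geometric or topological argument is needed; the whole proof is a short algebraic expansion plus one elementary inequality, so the main ``obstacle'' is really just bookkeeping the two sign cases cleanly.

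First, I would expand $\|\bw_1-\bw_2\|_2^2 = \|\bw_1\|_2^2 - 2\bw_1^T\bw_2 + \|\bw_2\|_2^2 = 2 - 2t$, using $\|\bw_1\|_2 = \|\bw_2\|_2 = 1$. Next, I would compute the Frobenius norm via the trace formula $\|\bM\|_\rmF^2 = \tr(\bM^T\bM)$ applied to $\bM = \bw_1\bw_1^T - \bw_2\bw_2^T$. Expanding and using $\tr(\bw_i\bw_i^T\bw_j\bw_j^T) = (\bw_i^T\bw_j)^2$ together with $\|\bw_i\|_2=1$, I get
\begin{equation}
    \|\bw_1\bw_1^T - \bw_2\bw_2^T\|_\rmF^2 = 2 - 2t^2 = 2(1-t)(1+t).
\end{equation}

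Assuming $t \ge 0$, the two desired inequalities become $2(1-t) \le 2(1-t)(1+t) \le 4(1-t)$, which follow from the elementary bounds $1 \le 1+t \le 2$ (the upper bound using only $t \le 1$, the lower bound using exactly the sign hypothesis $t \ge 0$). This proves the main claim.

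For the parenthetical statement when $t = \bw_1^T\bw_2 < 0$, I would simply apply the result already established to the pair $(\bw_1, -\bw_2)$, whose inner product is $-t > 0$. Since $(-\bw_2)(-\bw_2)^T = \bw_2\bw_2^T$, the Frobenius quantity is unchanged, while $\|\bw_1 - (-\bw_2)\|_2 = \|\bw_1 + \bw_2\|_2$, yielding the claimed analogue. This sign-flip trick is the only slightly subtle point, and it is the reason the lemma is stated in two cases rather than one.
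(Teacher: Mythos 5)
Your proof is correct and follows essentially the same route as the paper: expand $\|\bw_1\bw_1^T-\bw_2\bw_2^T\|_\rmF^2$ via the trace to get $2(1-t^2)$ with $t=\bw_1^T\bw_2$, then factor and bound $1+t$ between $1$ and $2$. Your handling of the $t<0$ case by substituting $-\bw_2$ is a clean way to make precise the paper's remark that it is ``handled similarly.''
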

\begin{proof}
    When $\bw_1^T \bw_2 \ge 0$, we have 
    \begin{align}
        \|\bw_1\bw_1^T-\bw_2\bw_2^T\|_\rmF^2& = \mathrm{tr}((\bw_1\bw_1^T-\bw_2\bw_2^T)^T(\bw_1\bw_1^T-\bw_2\bw_2^T)) \\
        &= 2\left(1-(\bw_1^T\bw_2)^2\right) \label{eq:intermediate} \\
        & \ge 2\left(1-\bw_1^T\bw_2\right) \label{eq:intermediat2}  \\
        &= \|\bw_1-\bw_2\|_2^2,
    \end{align}
    where \eqref{eq:intermediate} follows by expanding the product and writing $\mathrm{tr}( \bw_1\bw_1^T \bw_1\bw_1^T ) = \mathrm{tr}( \bw_1^T \bw_1\bw_1^T \bw_1 ) = (\bw_1^T \bw_1)^2 = 1$ and handling the other terms similarly, and \eqref{eq:intermediat2} follows since $\bw_1^T \bw_2 \in (0,1)$.  In addition, we have 
    \begin{align}
        \|\bw_1\bw_1^T-\bw_2\bw_2^T\|_\rmF^2 &= 2\left(1-(\bw_1^T\bw_2)^2\right) \\
        &= 2\left(1-\bw_1^T\bw_2\right) \left(1+ \bw_1^T\bw_2\right) \\
        & \le 4 \left(1-\bw_1^T\bw_2\right) \\
        &= 2\|\bw_1-\bw_2\|_2^2,
    \end{align}
    which gives the desired inequality.  The case $\bw_1^T \bw_2 < 0$ is handled similarly
\end{proof}

\section{Proof of Theorem~\ref{thm:barbx_bxG} (Guarantee on the Global Optimum)}
\label{app:thmGlobally}

Let the singular value decomposition (SVD) of $\bar{\bV}$ be 
\begin{equation}\label{eq:svd_barbv}
 \bar{\bV} = \bar{\bU}\bar{\bD}\bar{\bU}^T,
\end{equation}
where $\bar{\bD} = \mathrm{Diag}([\bar{\lambda}_1,\ldots,\bar{\lambda}_n])$, and $\bar{\bU} \in \bbR^n$ is an orthonormal matrix with the first column being $\bar{\bx}$. For $i>1$, let the $i$-th column of $\bar{\bU}$ be $\bar{\bu}_i$. Then, we have
\begin{align}
 \hat{\bv}^T \bar{\bV} \hat{\bv} & = \bar{\lambda}_1 \left(\bar{\bx}^T\hat{\bv}\right)^2 + \sum_{i>1} \bar{\lambda}_i \left(\bar{\bu}_i^T\hat{\bv}\right)^2 \\
 & \le \bar{\lambda}_1 \left(\bar{\bx}^T\hat{\bv}\right)^2 + \bar{\lambda}_2 \sum_{i>1} \left(\bar{\bu}_i^T\hat{\bv}\right)^2 \\
 & =  \bar{\lambda}_1 \left(\bar{\bx}^T\hat{\bv}\right)^2 + \bar{\lambda}_2  \left(1-\left(\bar{\bx}^T\hat{\bv}\right)^2\right),\label{eq:ub_impr_eq1}
\end{align}
where we use $\left(\bar{\bx}^T\hat{\bv}\right)^2 + \sum_{i >1} \left(\bar{\bu}_i^T\hat{\bv}\right)^2 = 1$ in~\eqref{eq:ub_impr_eq1}. 

In addition, for any $\bA \in \bbR^{n \times n}$ and any $\bs_1,\bs_2 \in \bbR^n$, we have
\begin{align}
 \bs_1^T \bA \bs_1 - \bs_2^T \bA \bs_2 &= \left(\frac{\bs_1 + \bs_2}{2} + \frac{\bs_1 - \bs_2}{2}\right)^T \bA  \left(\frac{\bs_1 + \bs_2}{2} + \frac{\bs_1 - \bs_2}{2}\right) \nonumber\\
 & \quad - \left(\frac{\bs_1 + \bs_2}{2} - \frac{\bs_1 - \bs_2}{2}\right)^T \bA  \left(\frac{\bs_1 + \bs_2}{2} - \frac{\bs_1 - \bs_2}{2}\right) \\
 & = 2 \left(\frac{\bs_1 + \bs_2}{2}\right)^T \bA \left(\frac{\bs_1 - \bs_2}{2}\right) + 2 \left(\frac{\bs_1 - \bs_2}{2}\right)^T \bA \left(\frac{\bs_1 + \bs_2}{2}\right).\label{eq:ub_impr_eq2}
\end{align}
In particular, when $\bA$ is symmetric, we obtain
\begin{equation}
 \bs_1^T \bA \bs_1 - \bs_2^T \bA \bs_2 = (\bs_1+\bs_2)^T \bA(\bs_1 -\bs_2).\label{eq:ub_impr_eq3}
\end{equation}
Let $M$ be a $(\delta/L)$-net of $B_2^k(r)$; from~\cite[Lemma~5.2]{vershynin2010introduction}, we know that there exists such a net with 
    \begin{equation}
        \log |M| \le k \log\frac{4 Lr}{\delta}. \label{eq:net_size}
    \end{equation}
    Since $G$ is $L$-Lipschitz continuous, we have that $G(M)$ is a $\delta$-net of $\mathrm{Range}(G) = G(B_2^k(r))$. We write 
    \begin{equation}\label{eq:hatbs_decomp}
     \hat{\bv} = (\hat{\bv} -\tilde{\bx}) + \tilde{\bx}, 
    \end{equation}
where $\tilde{\bx} \in G(M)$ satisfies $\|\hat{\bv} -\tilde{\bx}\|_2 \le \delta$. Suppose that $\bar{\bx}^T\hat{\bv} \ge 0$; if this is not the case, we can use analogous steps to obtain an upper bound for $\|\bar{\bx} + \hat{\bv}\|_2$ instead of $\|\bar{\bx} - \hat{\bv}\|_2$.  We have
\begin{align}
 &\frac{\bar{\lambda}_1 - \bar{\lambda}_2}{2} \cdot \|\bar{\bx} - \hat{\bv}\|_2^2 \\
 &= (\bar{\lambda}_1 - \bar{\lambda}_2) \left(1- \bar{\bx}^T\hat{\bv}\right) \label{eq:simple1} \\
 & \le (\bar{\lambda}_1 - \bar{\lambda}_2) \left(1- \left(\bar{\bx}^T\hat{\bv}\right)^2\right) \label{eq:simple2} \\
 &  = \bar{\lambda}_1 - \left(\bar{\lambda}_1 \left(\bar{\bx}^T\hat{\bv}\right)^2 + \bar{\lambda}_2 \left(1- \left(\bar{\bx}^T\hat{\bv}\right)^2\right)\right) \\
 & = \bar{\bx}^T\bar{\bV}\bar{\bx} - \left(\bar{\lambda}_1 \left(\bar{\bx}^T\hat{\bv}\right)^2 + \bar{\lambda}_2 \left(1- \left(\bar{\bx}^T\hat{\bv}\right)^2\right)\right) \label{eq:eigen} \\
 & \le \bar{\bx}^T\bar{\bV}\bar{\bx} - \hat{\bv}^T \bar{\bV} \hat{\bv}\label{eq:ub_impr_eq4} \\
 & = \bx_G^T\bar{\bV}\bx_G + (\bar{\bx}+\bx_G)^T\bar{\bV}(\bar{\bx}-\bx_G) - \hat{\bv}^T \bar{\bV} \hat{\bv}\label{eq:ub_impr_eq5} \\
 & \le \bx_G^T\bar{\bV}\bx_G + 2\bar{\lambda}_1 \|\bar{\bx}-\bx_G\|_2 - \hat{\bv}^T \bar{\bV} \hat{\bv} \label{eq:ub_impr_eq5d5}\\
 & = \bx_G^T(\bV-\bE)\bx_G + 2\bar{\lambda}_1 \|\bar{\bx}-\bx_G\|_2 - \hat{\bv}^T (\bV-\bE) \hat{\bv} \label{eq:sub_E} \\
 & \le \hat{\bv}^T\bE\hat{\bv} - \bx_G^T\bE\bx_G + 2\bar{\lambda}_1 \|\bar{\bx}-\bx_G\|_2 \label{eq:ub_impr_eq6} \\
 & = \tilde{\bx}^T\bE\tilde{\bx} + 2 \left(\frac{\hat{\bv} - \tilde{\bx}}{2}\right)^T\bE\left(\frac{\hat{\bv} + \tilde{\bx}}{2}\right) + 2 \left(\frac{\hat{\bv} + \tilde{\bx}}{2}\right)^T\bE\left(\frac{\hat{\bv} - \tilde{\bx}}{2}\right) - \bx_G^T\bE\bx_G + 2\bar{\lambda}_1 \|\bar{\bx}-\bx_G\|_2 \label{eq:ub_impr_eq7} \\
 & \le \tilde{\bx}^T\bE\tilde{\bx} + 2\delta \|\bE\|_{2\to 2}- \bx_G^T\bE\bx_G + 2\bar{\lambda}_1 \|\bar{\bx}-\bx_G\|_2 \label{eq:ub_impr_eq8}\\
 & = 2\left(\frac{\tilde{\bx}+\bx_G}{2}\right)^T \bE\left(\frac{\tilde{\bx}-\bx_G}{2}\right) + 2\left(\frac{\tilde{\bx}-\bx_G}{2}\right)^T \bE\left(\frac{\tilde{\bx}+ \bx_G}{2}\right) + 2\delta \|\bE\|_{2\to 2} + 2\bar{\lambda}_1 \|\bar{\bx}-\bx_G\|_2 \label{eq:ub_impr_eq9} \\
 & \le 2C\sqrt{\frac{k \log \frac{4Lr}{\delta}}{m}} \cdot \|\tilde{\bx}-\bx_G\|_2 + 2\delta \|\bE\|_{2\to 2} + 2\bar{\lambda}_1 \|\bar{\bx}-\bx_G\|_2 \label{eq:ub_impr_eq10}\\
 & \le 2C\sqrt{\frac{k \log \frac{4Lr}{\delta}}{m}} \cdot (\|\tilde{\bx}-\hat{\bv}\|_2 + \|\hat{\bv} -\bar{\bx}\|_2 + \|\bar{\bx}-\bx_G\|_2) + 2\delta \|\bE\|_{2\to 2} + 2\bar{\lambda}_1 \|\bar{\bx}-\bx_G\|_2 \label{eq:triangle}\\
 & \le 2C\sqrt{\frac{k \log \frac{4Lr}{\delta}}{m}} \cdot \|\hat{\bv} -\bar{\bx}\|_2 + O\left(\frac{\delta n}{m}\right) + O\big( (\bar{\lambda}_1 + \epsilon_n) \|\bar{\bx}-\bx_G\|_2\big)\label{eq:ub_impr_eq11}, 
 \end{align}
where:
\begin{itemize}
    \item \eqref{eq:simple1}--\eqref{eq:simple2} follow from $\|\bar{\bx}\|_2 = \|\hat{\bv}\|_2 = 1$ and hence $|\bar{\bx}^T\hat{\bv}| \le 1$;
    \item \eqref{eq:eigen} follows since $(\bar{\lambda}_1,\bar{\bx})$ are an eigenvalue-eigenvector pair for $\bar{\bV}$ with $\|\bar{\bx}\|_2 = 1$;
    \item \eqref{eq:ub_impr_eq4} follows from~\eqref{eq:ub_impr_eq1};
    \item \eqref{eq:ub_impr_eq5} follows from~\eqref{eq:ub_impr_eq3} with $\bar{\bV}$ being symmetric and setting $\bs_1 = \bar{\bx}$, $\bs_2 = \bx_G$;
    \item \eqref{eq:ub_impr_eq5d5} follows from $\|\bar{\bx} + \bx_G\|_2\le 2$ and $\|\bar{\bV}\|_{2\to 2} =\bar{\lambda}_1$;
    \item \eqref{eq:sub_E} follows since $\bar{\bV} = \bV - \bE$;
    \item \eqref{eq:ub_impr_eq6} follows since $\hat{\bv}$ is a globally optimal solution to~\eqref{eq:opt_genPCA} and $\bx_G \in \mathrm{Range}(G)$;
    \item \eqref{eq:ub_impr_eq7} follows from~\eqref{eq:ub_impr_eq2} with $\bs_1 = \hat{\bv}$ and $\bs_2 = \tilde{\bx}$;
    \item \eqref{eq:ub_impr_eq8} follows from~\eqref{eq:hatbs_decomp} along with $\|\hat{\bv} - \bar{\bx}\|_2 \le \delta$ and $\|\hat{\bv}+\tilde{\bx}\|_2\le 2$;
    \item \eqref{eq:ub_impr_eq9} follows from~\eqref{eq:ub_impr_eq2};
    \item \eqref{eq:ub_impr_eq10} follows from Assumption~\ref{assump:E_cond} (with $S_1 = S_2$ being $G(M)$ shifted by $\bx_G$) and~\eqref{eq:net_size};
    \item \eqref{eq:triangle} follows from the triangle inequality;
    \item \eqref{eq:ub_impr_eq11} follows by substituting $\|\hat{\bv} - \tilde{\bx}\|_2 \le \delta$, along with the assumptions $\|\bE\|_{2\to 2} = O(n/m)$, $m = \Omega\big(k \log\frac{Lr}{\delta}\big)$, and $\epsilon_n = O\big( \sqrt{\frac{k \log\frac{Lr}{\delta}}{m}} \big)$.
\end{itemize}
From~\eqref{eq:ub_impr_eq11}, we have the following when $\hat{\bv}^T\bar{\bx} \ge 0 $:
\begin{equation}
 \|\hat{\bv} -\bar{\bx}\|_2 = \frac{O\left(\sqrt{\frac{k \log\frac{Lr}{\delta}}{m}}\right)}{\bar{\lambda}_1 - \bar{\lambda}_2} + O\left(\sqrt{\frac{\delta n/m}{\bar{\lambda}_1 - \bar{\lambda}_2}}\right) + O\left(\sqrt{\frac{(\bar{\lambda}_1 + \epsilon_n) \|\bar{\bx}-\bx_G\|_2}{ \bar{\lambda}_1 - \bar{\lambda}_2}}\right).\label{eq:ub_impr_eq12}
\end{equation}
As mentioned earlier, if $\hat{\bv}^T\bar{\bx} <0$, we have the same upper bound as in~\eqref{eq:ub_impr_eq12} for $\|\hat{\bv} +\bar{\bx}\|_2$. Therefore, we obtain
\begin{align}
 \|\hat{\bv}\hat{\bv}^T - \bar{\bx}\bar{\bx}^T\|_\rmF &= \sqrt{2\left(1-\left(\bar{\bx}^T\hat{\bv}\right)^2\right)} \label{eq:sqrt_expr} \\
 & = \sqrt{2\left(1-\bar{\bx}^T\hat{\bv}\right)\left(1+\bar{\bx}^T\hat{\bv}\right)} \\
 & \le \sqrt{2} \min \{\|\hat{\bv}-\bar{\bx}\|_2,\|\hat{\bv}+\bar{\bx}\|_2\} \label{eq:finalM1}\\
 & = \frac{O\left(\sqrt{\frac{k \log\frac{Lr}{\delta}}{m}}\right)}{\bar{\lambda}_1 - \bar{\lambda}_2} + O\left(\sqrt{\frac{\delta n/m}{\bar{\lambda}_1 - \bar{\lambda}_2}}\right) + O\left(\sqrt{\frac{(\bar{\lambda}_1 + \epsilon_n) \|\bar{\bx}-\bx_G\|_2}{\bar{\lambda}_1 - \bar{\lambda}_2}}\right), \label{eq:final}
\end{align}
where \eqref{eq:sqrt_expr} follows from \eqref{eq:intermediate},~\eqref{eq:finalM1} follows from $\|\hat{\bv}\pm\bar{\bx}\|_2^2 = 2(1\pm\bar{\bx}^T\hat{\bv})$, and \eqref{eq:final} follows from \eqref{eq:ub_impr_eq12}.

\section{Proof of Lemma~\ref{lem:main_lem} (Auxiliary Result for PPower Analysis)}
\label{app:main_lem}

By the assumption $\mathrm{Range}(G)\subseteq \calS^{n-1}$, for any $\bx \in \bbR^n$ and $a>0$, we have 
 \begin{equation}\label{eq:simple_proj_property}
  \calP_G(\bx) = \calP_G(a\bx),
 \end{equation}
which is seen by noting that when comparing $\|\bx - \ba\|_2$ with $\|\bx - \bb\|_2$ (in accordance with projection mapping to the closest point), as long as $\|\ba\|_2 = \|\bb\|_2$, the comparison reduces to comparing $\langle \bx,\ba \rangle$ with $\langle \bx,\bb \rangle$, so is invariant to positive scaling of $\bx$.

 Let $\bar{\eta} = 1/\bar{\lambda}_1 >0$ and $\hat{\bs}=\calP_G(\bV\bs)$. Then, we have $\hat{\bs} = \calP_G(\bV\bs) = \calP_G(\bar{\eta}\bV\bs)$. Since $\bar{\bx} \in \mathrm{Range}(G)$, we have 
\begin{equation}
 \|\bar{\eta}\bV\bs - \hat{\bs}\|_2 \le \|\bar{\eta}\bV\bs - \bar{\bx}\|_2.
\end{equation}
This is equivalent to
\begin{equation}
 \|(\bar{\eta}\bV\bs -\bar{\bx}) + (\bar{\bx}- \hat{\bs})\|_2^2 \le \|\bar{\eta}\bV\bs - \bar{\bx}\|_2^2,
\end{equation}
and expanding the square gives
\begin{equation}\label{eq:imp_ineq_main}
 \|\bar{\bx}- \hat{\bs}\|_2^2 \le 2\langle \bar{\eta}\bV\bs -\bar{\bx}, \hat{\bs} - \bar{\bx} \rangle.
\end{equation}
Note also that from $\bar{\bV}\bar{\bx} = \bar{\lambda}_1\bar{\bx}$, we obtain $\bar{\bx} = \bar{\eta}\bar{\bV}\bar{\bx}$, which we will use throughout the proof. 

For $\delta >0$, let $M$ be a $(\delta/L)$-net of $B_2^k(r)$; from Lemma~\ref{lem:unit_sphere_cov}, there exists such a net with 
    \begin{equation}
        \log |M| \le k \log\frac{4 Lr}{\delta}. \label{eq:net_size_main_lem}
    \end{equation}
    By the $L$-Lipschitz continuity of $G$, we have that $G(M)$ is a $\delta$-net of $\mathrm{Range}(G) = G(B_2^k(r))$. We write 
    \begin{equation}\label{eq:bshatbs_decomp}
     \bs = (\bs -\bs_0) + \bs_0, \quad \hat{\bs} = (\hat{\bs} -\tilde{\bs}) + \tilde{\bs}, 
    \end{equation}
where $\tilde{\bs} \in G(M)$ satisfies $\|\hat{\bs} -\tilde{\bs}\|_2 \le \delta$, and $\bs_0 \in G(M)$ satisfies $\|\bs -\bs_0\|_2 \le \delta$. Then, we have 
\begin{align}
 &\langle \bar{\eta}\bV\bs -\bar{\bx}, \hat{\bs} - \bar{\bx} \rangle = \langle \bar{\eta} \bar{\bV} (\bs -\bar{\bx}), \hat{\bs} - \bar{\bx} \rangle  + \langle \bar{\eta}\bE\bs, \hat{\bs} - \bar{\bx} \rangle \label{eq:main_eq1},
\end{align}
which follows from $\bV = \bar{\bV} + \bE$ and $\bar{\bx} = \bar{\eta}\bar{\bV}\bar{\bx}$. 
In the following, we control the two terms in~\eqref{eq:main_eq1} separately. 
\begin{enumerate}
 \item The term $\langle \bar{\eta} \bar{\bV} (\bs -\bar{\bx}), \hat{\bs} - \bar{\bx} \rangle$: We decompose $\bs = \alpha \bar{\bx} + \beta \bt$ and $\hat{\bs} =  \hat{\alpha} \bar{\bx} + \hat{\beta} \hat{\bt}$, where $\|\bt\|_2 = \|\hat{\bt}\|_2 = 1$ and $\bt^T \bar{\bx} = \hat{\bt}^T \bar{\bx} = 0$. Since $\|\bs\|_2 = \|\hat{\bs}\|_2 =1$, we have $\alpha^2 +\beta^2 = \hat{\alpha}^2 + \hat{\beta}^2=1$. In addition, we have $\alpha = \bs^T\bar{\bx}$ and $\hat{\alpha} = \hat{\bs}^T\bar{\bx}$. Recall that in~\eqref{eq:svd_barbv}, we write the SVD of $\bar{\bV}$ as $\bar{\bV} = \bar{\bU}\bar{\bD}\bar{\bU}^T$. Since $\bt^T\bar{\bx} = 0$, we can write $\bt$ as $\bt = \sum_{i>1} h_i \bar{\bu}_i$. In addition, since $\|\bt\|_2 = 1$, we have $\sum_{i>1} h_i^2 = 1$. Hence, by the Cauchy-Schwarz inequality, we have
 \begin{align}
  |\langle \bar{\bV} \bt, \hat{\bt}\rangle| & \le \|\bar{\bV} \bt\|_2 \\
  & = \left\|\sum_{i>1} \bar{\lambda}_i h_i \bar{\bu}_i\right\|_2 \\
  & = \sqrt{\sum_{i>1} \bar{\lambda}_i^2 h_i^2} \\
  & \le \sqrt{\bar{\lambda}_2^2 \sum_{i>1}  h_i^2} \\ &= \bar{\lambda}_2.\label{eq:bt_barbV_bt}
 \end{align}
 Therefore, we obtain
 \begin{align}
  |\langle \bar{\eta} \bar{\bV} (\bs -\bar{\bx}), \hat{\bs} - \bar{\bx} \rangle| & = |\langle (\alpha-1)\bar{\bx} + \bar{\eta} \beta \bar{\bV} \bt, (\hat{\alpha}-1)\bar{\bx} + \hat{\beta} \hat{\bt} \rangle|  \label{eq:innerprod1} \\
  & = |(\alpha-1)(\hat{\alpha}-1) +  \bar{\eta} \beta\hat{\beta} \langle \bar{\bV} \bt, \hat{\bt}\rangle|  \label{eq:innerprod2}  \\
  & \le (1-\alpha)(1-\hat{\alpha}) + \bar{\eta} |\beta \hat{\beta}| \bar{\lambda}_2 \\
  & = (1-\alpha)(1-\hat{\alpha}) + \bar{\gamma} \sqrt{1-\alpha^2}\sqrt{1-\hat{\alpha}^2}, \label{eq:main_lem_fristTerm_bd}
 \end{align}
where \eqref{eq:innerprod1} uses $\eta \bar{\bV}\bar{\bx} = \bar{\bx}$, and \eqref{eq:innerprod2} uses $\|\bar{\bx}\|_2 = 1$ and $\langle \bar{\bx},\bt \rangle = 0$.
 
 \item The term $\langle \bar{\eta}\bE\bs, \hat{\bs} - \bar{\bx} \rangle$: We have 
 \begin{align}
  |\langle \bar{\eta}\bE\bs, \hat{\bs} - \bar{\bx} \rangle| &= \langle \bar{\eta}\bE((\bs-\bs_0) +\bs_0), \hat{\bs} - \bar{\bx} \rangle \\
  & = \langle \bar{\eta}\bE(\bs-\bs_0) , \hat{\bs} - \bar{\bx} \rangle + \langle \bar{\eta}\bE\bs_0, (\hat{\bs} -\tilde{\bs}) +(\tilde{\bs} - \bar{\bx}) \rangle \\
  & \le \bar{\eta} \|\bE\|_{2\to 2} \delta \|\hat{\bs} - \bar{\bx}\|_2 + \bar{\eta} \|\bE\|_{2\to 2} \delta + O\left(\sqrt{\frac{k\log\frac{Lr}{\delta}}{m}}\right) \cdot \|\tilde{\bs} - \bar{\bx}\|_2 \label{eq:main_lem_E_bd}\\
  & \le O\left(\delta \|\bE\|_{2\to 2}\right) +O\left(\sqrt{\frac{k\log\frac{Lr}{\delta}}{m}}\right) \cdot \|\hat{\bs} - \bar{\bx}\|_2,\label{eq:main_lem_compare_bd}
 \end{align}
where~\eqref{eq:main_lem_E_bd} follows from Assumption~\ref{assump:E_cond} and~\eqref{eq:net_size_main_lem}, and~\eqref{eq:main_lem_compare_bd} follows from $\bar{\eta} = 1/\bar{\lambda}_1$, along with the fact that we assumed $\bar{\lambda}_1 = \Theta(1)$. 
\end{enumerate}
Note that $\|\bar{\bx} - \hat{\bs}\|_2^2 = 2(1-\hat{\bs}^T\bar{\bx}) = 2(1-\hat{\alpha})$. Hence, and using~\eqref{eq:imp_ineq_main},~\eqref{eq:main_eq1},~\eqref{eq:main_lem_fristTerm_bd}, and~\eqref{eq:main_lem_compare_bd}, we obtain
\begin{align}
 2(1-\hat{\alpha})& \le 2\left((1-\alpha)(1-\hat{\alpha}) + \bar{\gamma} \sqrt{1-\alpha^2}\sqrt{1-\hat{\alpha}^2}\right) \nonumber \\
 & \indent + O\left(\delta \|\bE\|_{2\to 2}\right) + O\left(\sqrt{\frac{k\log\frac{Lr}{\delta}}{m}} \right)\cdot \sqrt{2(1-\hat{\alpha})}.\label{eq:combining_main_ineq}
\end{align}
 Using $2(1-\hat{\alpha}) - 2(1-\alpha)(1-\hat{\alpha}) = 2\alpha(1-\hat{\alpha})$, $\sqrt{1-\alpha^2} = \sqrt{1-\alpha}\sqrt{1+\alpha} \le \sqrt{2(1-\alpha)}$, and similarly $\sqrt{1-\hat{\alpha}^2} \le \sqrt{2(1-\hat{\alpha})}$, we obtain from~\eqref{eq:combining_main_ineq} that
\begin{equation}
 2\alpha(1-\hat{\alpha}) \le 2\bar{\gamma} \sqrt{2(1-\alpha)}\sqrt{2(1-\hat{\alpha})}  + O\left( \sqrt{\frac{k\log\frac{Lr}{\delta}}{m}} \right)\cdot \sqrt{2(1-\hat{\alpha})} + O\left(\delta \|\bE\|_{2\to 2}\right).\label{eq:combining_main_mod}
\end{equation}
Since $\|\hat{\bs} - \bar{\bx}\|_2^2  = 2(1-\hat{\alpha})$ and $\|\bs - \bar{\bx}\|_2^2  = 2(1-\alpha)$, this is equivalent to
\begin{align}
 \alpha \|\hat{\bs} -\bar{\bx}\|_2^2 &\le
 \left(2\bar{\gamma} \|\bs - \bar{\bx}\|_2 + O\left( \sqrt{\frac{k\log\frac{Lr}{\delta}}{m}} \right)\right) \|\hat{\bs} -\bar{\bx}\|_2 +  O\left(\delta \|\bE\|_{2\to 2}\right).\label{eq:imp_final-1_bd}
\end{align}
This equation is of the form $az^2 \le bz + c$ (where $z = \|\hat{\bs} -\bar{\bx}\|_2$ and $a = \alpha = \bs^T\bar{\bx} >0$), and using a simple application of the quadratic formula,\footnote{Since the leading coefficient $a = \alpha$ of the quadratic is positive, $z$ must lie in between the two associated roots.  This yields $z \le \frac{b + \sqrt{b^2 + 4ac}}{2a}$, from which the inequality $\sqrt{a+b} \le \sqrt{a} + \sqrt{b}$ gives \eqref{eq:imp_final_thm_bd}.} we obtain
\begin{align}
 \|\hat{\bs}-\bar{\bx}\|_2 &\le \frac{2\bar{\gamma}\|\bs-\bar{\bx}\|_2}{\bs^T\bar{\bx}} + O\left(\frac{1}{\bs^T\bar{\bx}}\left(\sqrt{\frac{k\log\frac{Lr}{\delta}}{m}}+ \sqrt{\bs^T\bar{\bx} \cdot \delta \|\bE\|_{2\to 2}}\right)\right) \label{eq:imp_final_thm_bd}\\
 & \le \frac{2\bar{\gamma}\|\bs-\bar{\bx}\|_2}{\bs^T\bar{\bx}} + O\left(\frac{1}{\bs^T\bar{\bx}}\sqrt{\frac{k\log (nLr)}{m}}\right), \label{eq:imp_finalP1_thm_bd}
\end{align}
where we use the assumption $\|\bE\|_{2\to 2} = O(n/m)$ and set $\delta = 1/n$ in~\eqref{eq:imp_finalP1_thm_bd}.

\section{Proof of Theorem~\ref{thm:main} (Main Theorem for PPower)}
\label{app:thm_main}

Suppose for the time being that \eqref{eq:either1} holds for at least one index $t \ge t_0$ (we will later verify that this is the case), and let $T_0 \ge t_0$ be the smallest such index.  Thus, we have
\begin{equation}
 \|\bw^{(T_0)} - \bar{\bx}\|_2  \le \frac{C}{(1-\mu_0)\nu} \sqrt{\frac{k\log (nLr)}{m}}.  \label{eq:T0_init}
\end{equation}
Note that according to the theorem statement, $1 - \mu_0$ is bounded away from zero.  Using $\|\bw^{(T_0)}\|_2 = \|\bar{\bx}\|_2 = 1$ and the assumption that $m \ge C_{\nu,\tau} \cdot k \log (nLr)$ with $C_{\nu,\tau}>0$ being large enough, we deduce from \eqref{eq:T0_init} that $\|\bw^{(T_0)} - \bar{\bx}\|_2$ is sufficiently small such that
\begin{equation}
 \bar{\bx}^T\bw^{(T_0)} \ge 1-\tau. \label{eq:T0_tau}
\end{equation}
Next, using the assumption $2\bar{\gamma} + \nu \le 1-\tau$, we write
\begin{equation}
 \frac{2\bar{\gamma}}{(1-\mu_0)\nu(1-\tau)}  + \frac{1}{1-\tau}  = \frac{2\bar{\gamma} + (1-\mu_0)\nu}{(1-\mu_0)\nu(1-\tau)} \le \frac{2\bar{\gamma} + \nu}{(1-\mu_0)\nu(1-\tau)} \le \frac{1}{(1-\mu_0)\nu}.\label{eq:T0_final0}
\end{equation}
Then, from Lemma~\ref{lem:main_lem}, we obtain
\begin{align}
 \|\bw^{(T_0+1)} - \bar{\bx}\|_2 &\le \frac{2\bar{\gamma}}{\bar{\bx}^T\bw^{(T_0)}} \cdot \|\bw^{(T_0)} - \bar{\bx}\|_2 + \frac{C}{\bar{\bx}^T\bw^{(T_0)}} \sqrt{\frac{k\log (nLr)}{m}}\\
 & \le \frac{2\bar{\gamma}}{1-\tau} \cdot \frac{C}{(1-\mu_0)\nu} \sqrt{\frac{k\log (nLr)}{m}} + \frac{C}{1-\tau} \sqrt{\frac{k\log (nLr)}{m}} \label{eq:T0_pre_final} \\
 & \le \frac{C}{(1-\mu_0)\nu} \sqrt{\frac{k\log (nLr)}{m}},\label{eq:T0_final}
\end{align}
where~\eqref{eq:T0_pre_final} follows from~\eqref{eq:T0_init}--\eqref{eq:T0_tau}, and~\eqref{eq:T0_final} follows from~\eqref{eq:T0_final0}.  Thus, we have transferred \eqref{eq:T0_init} from $T_0$ to $T_0 + 1$, and proceeding by induction, we obtain
\begin{equation}
 \|\bw^{(t)} - \bar{\bx}\|_2  \le \frac{C}{(1-\mu_0)\nu} \sqrt{\frac{k\log (nLr)}{m}}
\end{equation} 
for al $t \ge T_0$.  

Next, we consider $t \in [t_0,T_0)$. Again using Lemma~\ref{lem:main_lem} (with $\hat{\bs} = \bw^{(t_0+1)} = \calP_G(\bV\bw^{(t_0)})$), we have 
 \begin{equation}\label{eq:thm_imp_ineq}
  \|\bw^{(t_0+1)} - \bar{\bx}\|_2 \le \mu_0 \|\bw^{(t_0)} - \bar{\bx}\|_2 + \frac{C}{2\bar{\gamma}+\nu} \cdot \sqrt{\frac{k\log (nLr)}{m}},
 \end{equation}
where we recall that $\mu_0 = \frac{2\bar{\gamma}}{\bar{\bx}^T\bw^{(t_0)}} = \frac{2\bar{\gamma}}{2\bar{\gamma} + \nu} <1$, and note that the denominator in the second term of \eqref{eq:thm_imp_ineq} follows since $\bar{\bx}^T\bw^{(t_0)} = 2\bar{\gamma} + \nu $.  Supposing that $t_0 < T_0$ (otherwise, the above analysis for $t \ge T_0$ alone is sufficient), we have that \eqref{eq:either1} is reversed at $t=t_0$:
\begin{equation}
 \|\bw^{(t_0)} - \bar{\bx}\|_2  > \frac{C}{(1-\mu_0)\nu} \sqrt{\frac{k\log (nLr)}{m}}. \label{eq:hold}
\end{equation}
This means that we can upper bound the second term in~\eqref{eq:thm_imp_ineq} by $\frac{C}{\nu} \cdot \sqrt{\frac{k\log (nLr)}{m}} < (1-\mu_0) \|\bw^{(t_0)} - \bar{\bx}\|_2$, which gives
\begin{equation}
    \|\bw^{(t_0+1)} - \bar{\bx}\|_2 < \|\bw^{(t_0)} - \bar{\bx}\|_2. \label{eq:established}
\end{equation}
Squaring both sides, expanding, and canceling the norms (which all equal one), we obtain
\begin{equation}
 \bar{\bx}^T\bw^{(t_0 +1)} > \bar{\bx}^T\bw^{(t_0)}, \label{eq:est_consequence}
\end{equation}
and by induction, we obtain that $\{\bar{\bx}^T\bw^{(t)}\}_{t \in [t_0,T_0)}$ is monotonically increasing. 

Recall that we assume that $\bar{\lambda}_1 = \Theta(1)$, and that $T_0 = t_0 + \Delta_0$ is the smallest integer such that~\eqref{eq:either1} holds. To verify that $T_0$ is finite and upper bound $\Delta_0$, we consider the following three cases:
\begin{itemize} 
 \item \underline{$\mu_0 = 0$ (or equivalently, $\bar{\gamma} =\bar{\lambda}_2 =0$)}: In this case,~\eqref{eq:thm_imp_ineq} gives $T_0 = t_0 + 1$ (or $T_0 = t_0$, which we already addressed above).  Thus, we have $\Delta_0 \le 1$, as stated in the theorem.
 
 \item \underline{$\mu_0 = o(1)$ (or equivalently, $\bar{\gamma} = o(1)$ and $\bar{\lambda}_2 = o(1)$)}: Since $\{\bar{\bx}^T\bw^{(t)}\}_{t \in [t_0,T_0)}$ is monotonically increasing, for any positive integer $\Delta$ with $t_0 + \Delta \le T_0$, by applying Lemma~\ref{lem:main_lem} (or~\eqref{eq:thm_imp_ineq}) multiple times, we obtain\footnote{In simpler notation, if $z_{t+1} \le a z_t + b$, then we get $z_{t+2} \le a^2 z_t + (1+a)b$, then $z_{t+3} \le a^3 z_t + (1+a+a^2)b$, and so on, and then we can apply $1+a+\dotsc+a^{i-1} = \frac{1-a^i}{1-a}$ for $a \ne 1$.}
 \begin{align}
  \|\bw^{(t_0+\Delta)} - \bar{\bx}\|_2 &\le \mu_0^{\Delta} \|\bw^{(t_0)} - \bar{\bx}\|_2 + \frac{1-\mu_0^{\Delta}}{1-\mu_0}\cdot\frac{C}{2\bar{\gamma}+\nu} \cdot \sqrt{\frac{k\log (nLr)}{m}} \label{eq:thm_imp_ineq_multi1} \\
  & \le \mu_0^{\Delta} \|\bw^{(t_0)} - \bar{\bx}\|_2 + \frac{1}{1-\mu_0}\cdot\frac{C}{2\bar{\gamma}+\nu} \cdot \sqrt{\frac{k\log (nLr)}{m}} \label{eq:thm_imp_ineq_multi2}.
 \end{align}
 Then, if we choose $\Delta_0 \in \bbN$ such that 
 \begin{equation}\label{eq:p0}
  \mu_0^{\Delta_0 -1} \le \frac{C}{2\nu}  \cdot \sqrt{\frac{k\log (nLr)}{m}},
 \end{equation}
we obtain from~\eqref{eq:thm_imp_ineq_multi2} that
\begin{align}
 \|\bw^{(t_0+\Delta_0)} - \bar{\bx}\|_2  & \le \mu_0^{\Delta_0} \|\bw^{(t_0)} - \bar{\bx}\|_2 + \frac{1}{1-\mu_0}\cdot\frac{C}{2\bar{\gamma}+\nu} \cdot \sqrt{\frac{k\log (nLr)}{m}} \\
 & \le 2\mu_0^{\Delta_0} + \frac{1}{1-\mu_0}\cdot\frac{C}{2\bar{\gamma}+\nu} \cdot \sqrt{\frac{k\log (nLr)}{m}} \label{eq:thm_imp_ineq_multi3}\\
 & < \frac{C\mu_0}{\nu(1-\mu_0)}\cdot \sqrt{\frac{k\log (nLr)}{m}} + \frac{1}{1-\mu_0}\cdot\frac{C}{2\bar{\gamma}+\nu} \cdot \sqrt{\frac{k\log (nLr)}{m}} \label{eq:thm_imp_ineq_multi4} \\
 & = \frac{C}{(1-\mu_0)\nu} \cdot \sqrt{\frac{k\log (nLr)}{m}}, \label{eq:thm_imp_ineq_multi5}
\end{align}
where~\eqref{eq:thm_imp_ineq_multi3} follows from $\|\bw^{(t_0)} - \bar{\bx}\|_2 \le 2$,~\eqref{eq:thm_imp_ineq_multi4} follows from~\eqref{eq:p0} and $1 < \frac{1}{1-\mu_0}$, and~\eqref{eq:thm_imp_ineq_multi5} follows from $\mu_0 = \frac{2\bar{\gamma}}{2\bar{\gamma} + \nu}$, which implies $\frac{\mu_0}{\nu} + \frac{1}{2\bar{\gamma}+\nu} = \frac{\mu_0 (2\bar{\gamma}+\nu) + \nu}{\nu(2\bar{\gamma}+\nu)} = \frac{2\bar{\gamma}+\nu}{\nu(2\bar{\gamma}+\nu)}= \frac{1}{\nu}$.  Observe that \eqref{eq:thm_imp_ineq_multi5} coincides with \eqref{eq:either1}, and since $\mu_0 = o(1)$, we obtain from \eqref{eq:p0} that $\Delta_0 = O\big(\log\big(\frac{m}{k\log(nLr)}\big)\big)$ as desired. 

\item \underline{$\mu_0 = \Theta(1)$ (or equivalently, $\bar{\gamma} = \Theta(1)$ and $\bar{\lambda}_2 = \Theta(1)$)}: Recall that we only need to focus on the case $T_0 >t_0$. This means that~\eqref{eq:hold} holds, implying that we can upper bound the second term in~\eqref{eq:thm_imp_ineq} by $\frac{(1-\mu_0)\nu}{2\bar{\gamma}+\nu} \cdot \|\bw^{(t_0)} - \bar{\bx}\|_2 $, yielding
\begin{align}
    \|\bw^{(t_0+1)} - \bar{\bx}\|_2 & <  \mu_0  \|\bw^{(t_0)} - \bar{\bx}\|_2 + \frac{(1-\mu_0)\nu}{2\bar{\gamma}+\nu} \cdot \|\bw^{(t_0)} - \bar{\bx}\|_2 \\
    & = \frac{2\bar{\gamma} + (1-\mu_0)\nu}{2\bar{\gamma} + \nu} \cdot \|\bw^{(t_0)} - \bar{\bx}\|_2 \\
    & = (1-\xi)\|\bw^{(t_0)} - \bar{\bx}\|_2, \label{eq:established2}
\end{align}
where $\xi = \frac{\mu_0 \nu}{2\bar{\gamma}+\nu} = \mu_0 (1-\mu_0) = \Theta(1)$.  With the distance to $\bar{\bx}$ shrinking by a constant factor in each iteration according to \eqref{eq:established2}, and the initial distance $\|\bw^{(t_0)} - \bar{\bx}\|_2$ being at most $2$ due to the vectors having unit norm, we deduce that $\Delta_0 = O\big(\log\big(\frac{m}{k\log(nLr)}\big)\big)$ iterations suffice to ensure that~\eqref{eq:either1} holds for $t= t_0 + \Delta_0$.
\end{itemize}   

\section{Comparison of Analysis to \cite{deshpande2014cone}}
\label{app:diff_proof_techs}

As mentioned in Section \ref{sec:main}, our analysis is significantly different from that of \cite{deshpande2014cone} despite using a similar assumption on the initialization.  We highlight the differences as follows:
\begin{enumerate} 
 \item Perhaps the most significant difference is that the proof of~\cite[Theorem~3]{deshpande2014cone} is highly dependent on the Moreau decomposition, which is only valid for a closed convex cone (see~\cite[Definition~1.2]{deshpande2014cone}). In particular, the Moreau decomposition needs to be used at the beginning of the proof of~\cite[Theorem~3]{deshpande2014cone}, such as Eqs.~(18) and (19) in the supplementary material therein. We do not see a way for the proof to proceed without the Moreau decomposition, and our $\mathrm{Range}(G)$ may be very different from a convex cone.
 
 \item We highlight that one key observation in our proof of Lemma~\ref{lem:main_lem} (and thus Theorem~\ref{thm:main}) is that for a generative model $G$ with ${\rm Range}(G) \subseteq \calS^{n-1}$, and any $\bx \in \bbR^n$ and $a >0$, we have $\calP_G(a \bx) = \calP_G(\bx)$ (Eq.~\eqref{eq:simple_proj_property}). This enables us to derive the important equation $\hat{\bs} = \calP_G(\bV \bs) = \calP_G(\bar{\eta} \bV \bs)$. We are not aware of a similar idea being used in the proof of~\cite[Theorem~3]{deshpande2014cone}. 
 
 \item In the PPower method in~\cite{deshpande2014cone}, the authors need to add $\rho \bI_n$ with $\rho >0$ to the observed data matrix $\bV$ to improve the convergence. In particular, they mention in the paragraph before the statement of Theorem 3 that ``the memory term $\rho \bv^{t}$ is necessary for our proof technique to go through''. In contrast, our proof of Theorem~\ref{thm:main} does not require adding such terms, even when our data model is restricted to the spiked Wigner model considered in~\cite{deshpande2014cone}.
 
 \item We consider a matrix model that is significantly more general than the spiked Wigner model studied in~\cite{deshpande2014cone}.
\end{enumerate}

\section{Numerical Results for the Fashion-MNIST Dataset}
\label{app:exp_fashionMNIST}

The Fashion-MNIST dataset consists of Zalando's article images with a training set of $60,000$ examples and a test set of $10,000$ examples. The size of each image in the Fashion-MNIST dataset is also $28 \times 28$, and thus $n = 784$. 

The generative model $G$ is set to be a variational autoencoder (VAE). The VAE architecture is summarized as follows.\footnote{Further details of the architecture can be found at~\url{https://github.com/hwalsuklee/tensorflow-generative-model-collections}.} The generator has latent dimension $k = 62$ and four layers. The first two are fully connected layers with the architecture $62-1024-6272$, and with ReLU activation functions. The output of the second layer, reshaped to $128 \times 7 \times 7$, is forwarded to a deconvolution layer with kernel size $4$ and stride $2$. The third layer uses ReLU activations and has output size $64 \times 14\times 14$, where $64$ is the number of channels. The fourth layer is a deconvolution layer with kernel size $4$ and strides $2$, and it uses ReLU activations and has output size $1 \times 28 \times 28$, where the number of channels is $1$.  

The VAE is trained with a mini-batch size of $256$, a learning rate of $0.0002$, and $100$ epochs.  The remaining parameters are the same as those for the MNIST dataset. 

For most images in the Fashion-MNIST dataset, the corresponding vectors are not sparse in the natural basis. Hence, to make a fairer comparison to the sparsity-based method~\texttt{TPower}, we convert the original images to the wavelet basis, and perform~\texttt{TPower} on these converted images. The obtained results of~\texttt{TPower} are then converted back to the vectors in the natural basis. The corresponding method is denoted by~\texttt{TPowerW}, with ``W'' referring to the conversion to images in the wavelet basis.

We perform two sets of experiments, considering the spiked covariance and phase retrieval models separately. The corresponding results are reported in Figures~\ref{fig:fashionMNIST_imgs_spikedCov},~\ref{fig:quant_fashionMNIST_spikedCov},~\ref{fig:fashionMNIST_imgs_pr}, and~\ref{fig:quant_fashionMNIST_pr}. From these figures, again, we can observe clear superiority of~\texttt{PPower} compared to the baselines.  As mentioned above,most of the images are not sparse in the natural basis, but we observe that (i) even for the sparsest images in Figures~\ref{fig:fashionMNIST_imgs_spikedCov} and~\ref{fig:fashionMNIST_imgs_pr} (namely, those containing sandals), \texttt{PPower} significantly outperforms \texttt{TPower}, and (ii) generally, moving from \texttt{TPower} to \texttt{TPowerW} appears to provide at most marginal benefit.

 \begin{figure}
\begin{center}
\begin{tabular}{cc}
 \includegraphics[height=0.19\textwidth]{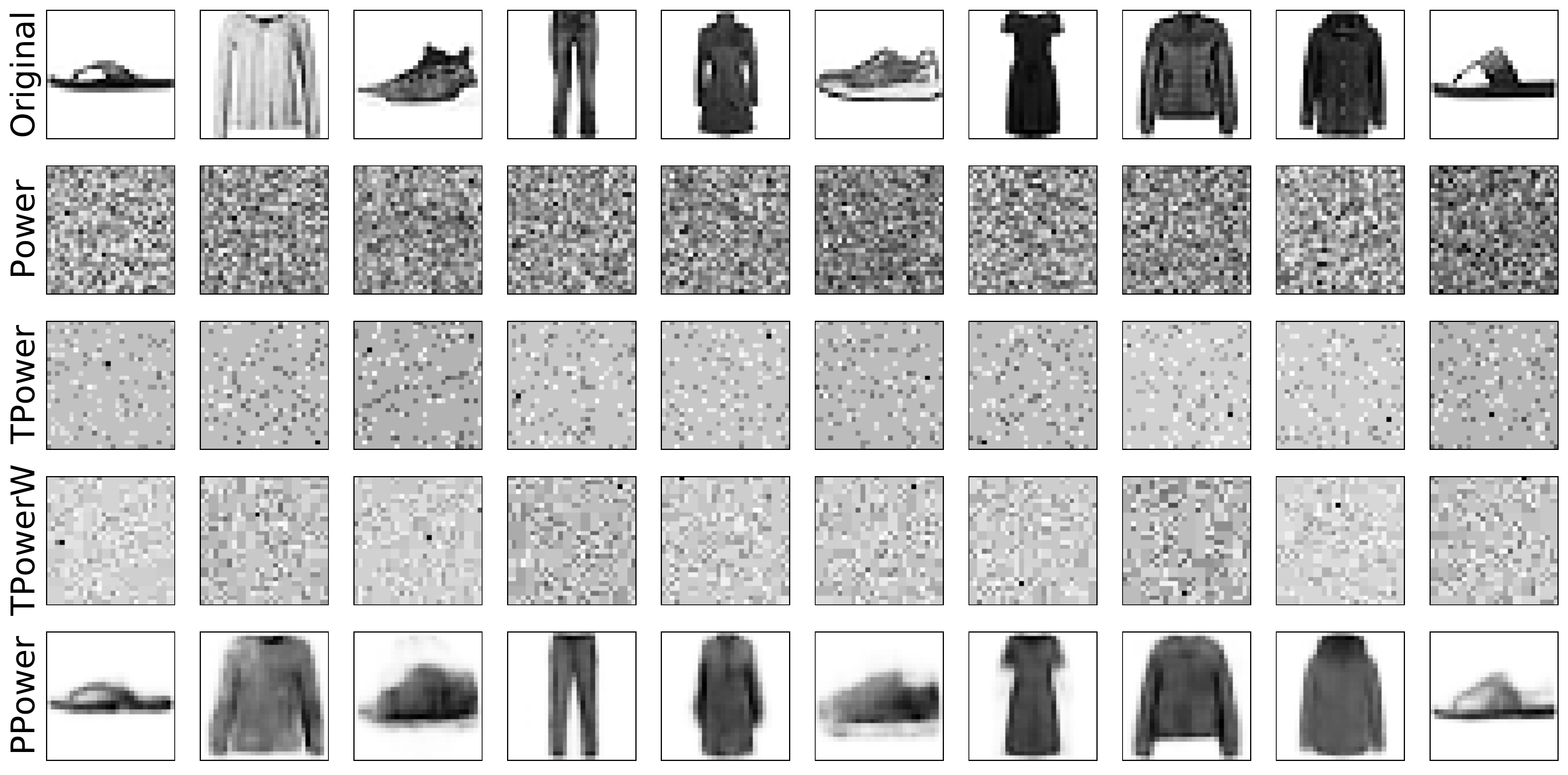} & \hspace{-0.5cm}
\includegraphics[height=0.19\textwidth]{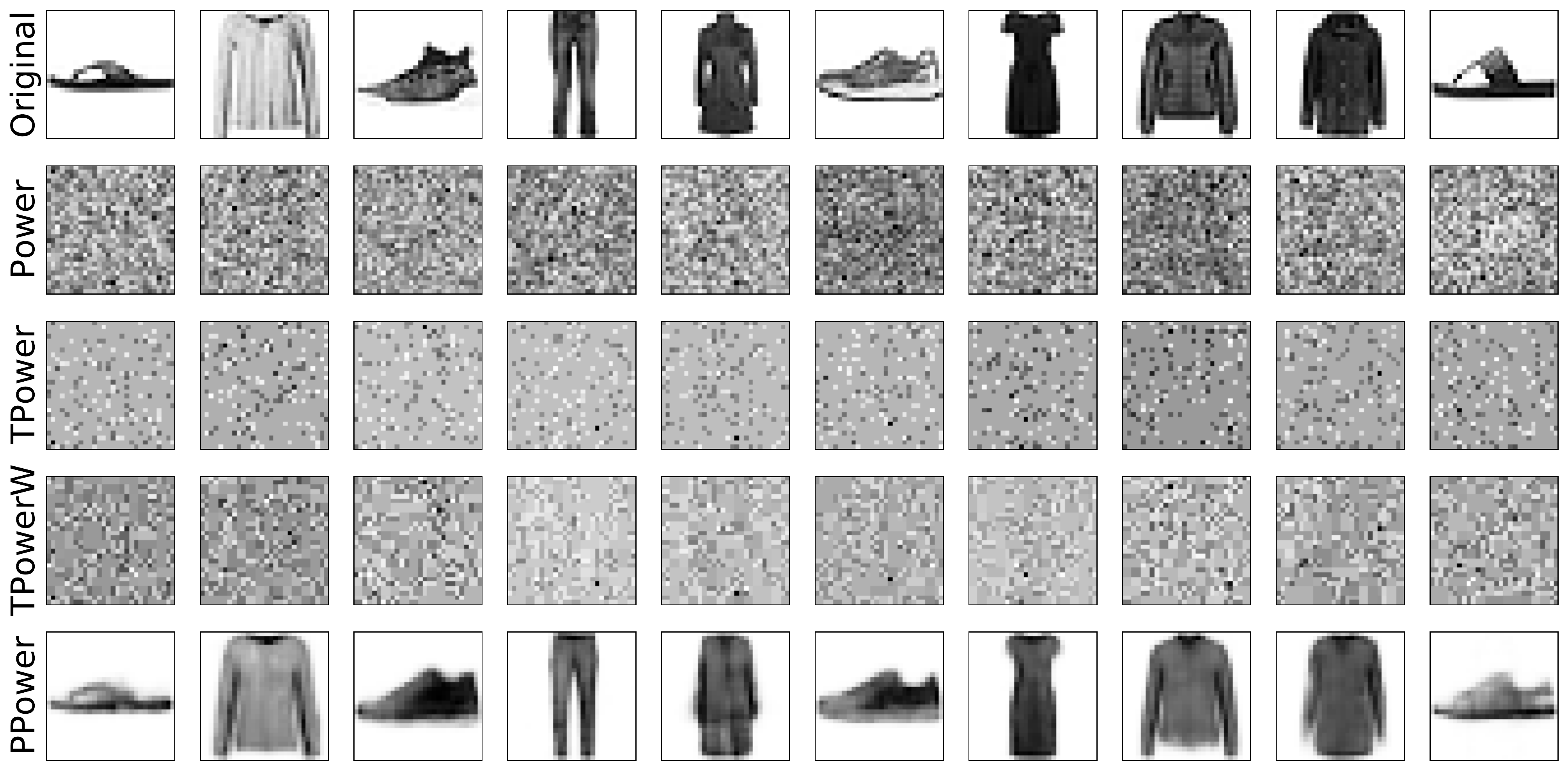} \\
{\small (a) $\beta = 1$ and $m = 200$} & {\small (b) $\beta = 2$ and $m = 100$}
\end{tabular}
\caption{Examples of reconstructed Fashion-MNIST images for the spiked covariance model.}
\label{fig:fashionMNIST_imgs_spikedCov}
\end{center}
\end{figure} 

 \begin{figure}
\begin{center}
\begin{tabular}{cc}
\includegraphics[height=0.35\textwidth]{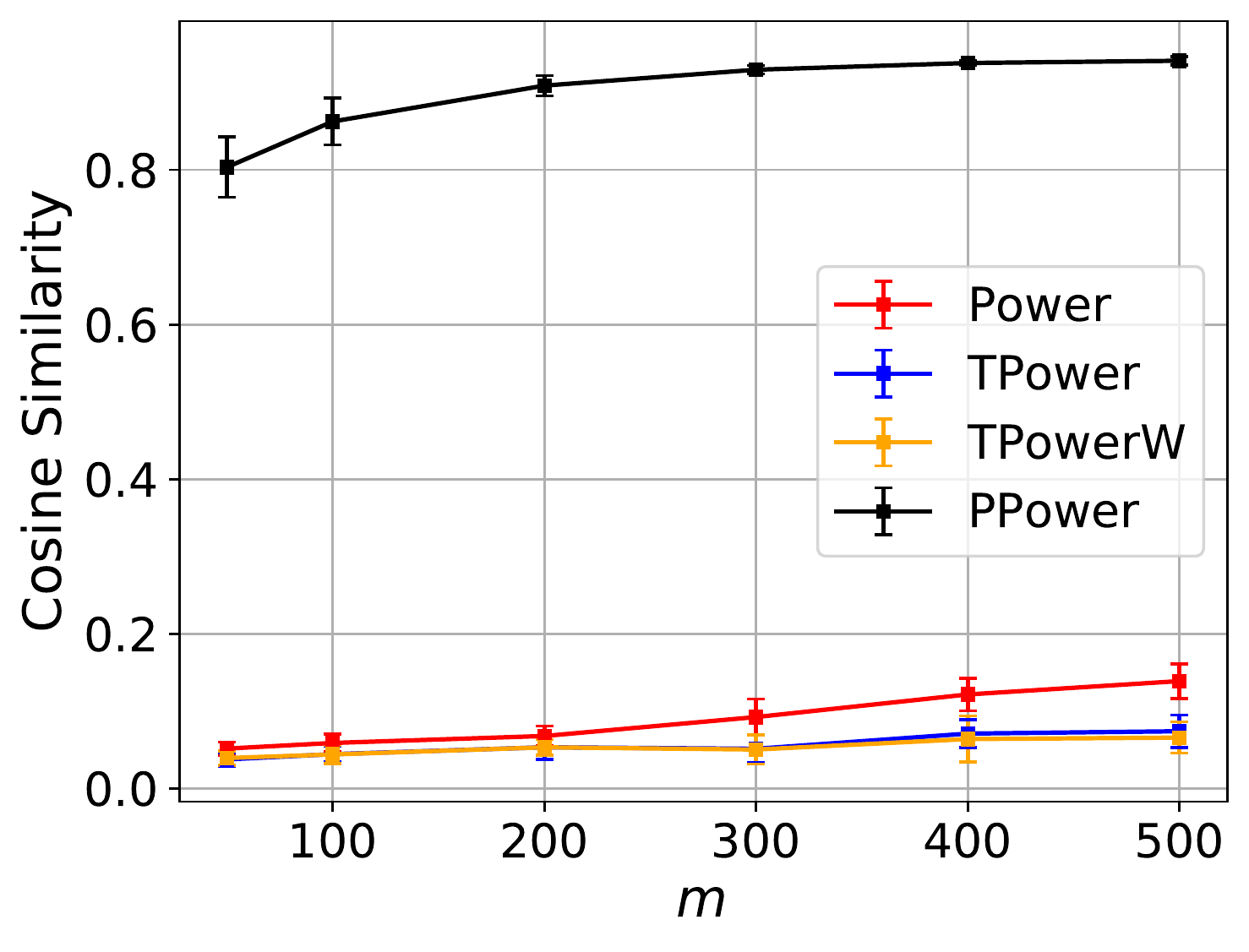} & \hspace{-0.5cm}
\includegraphics[height=0.35\textwidth]{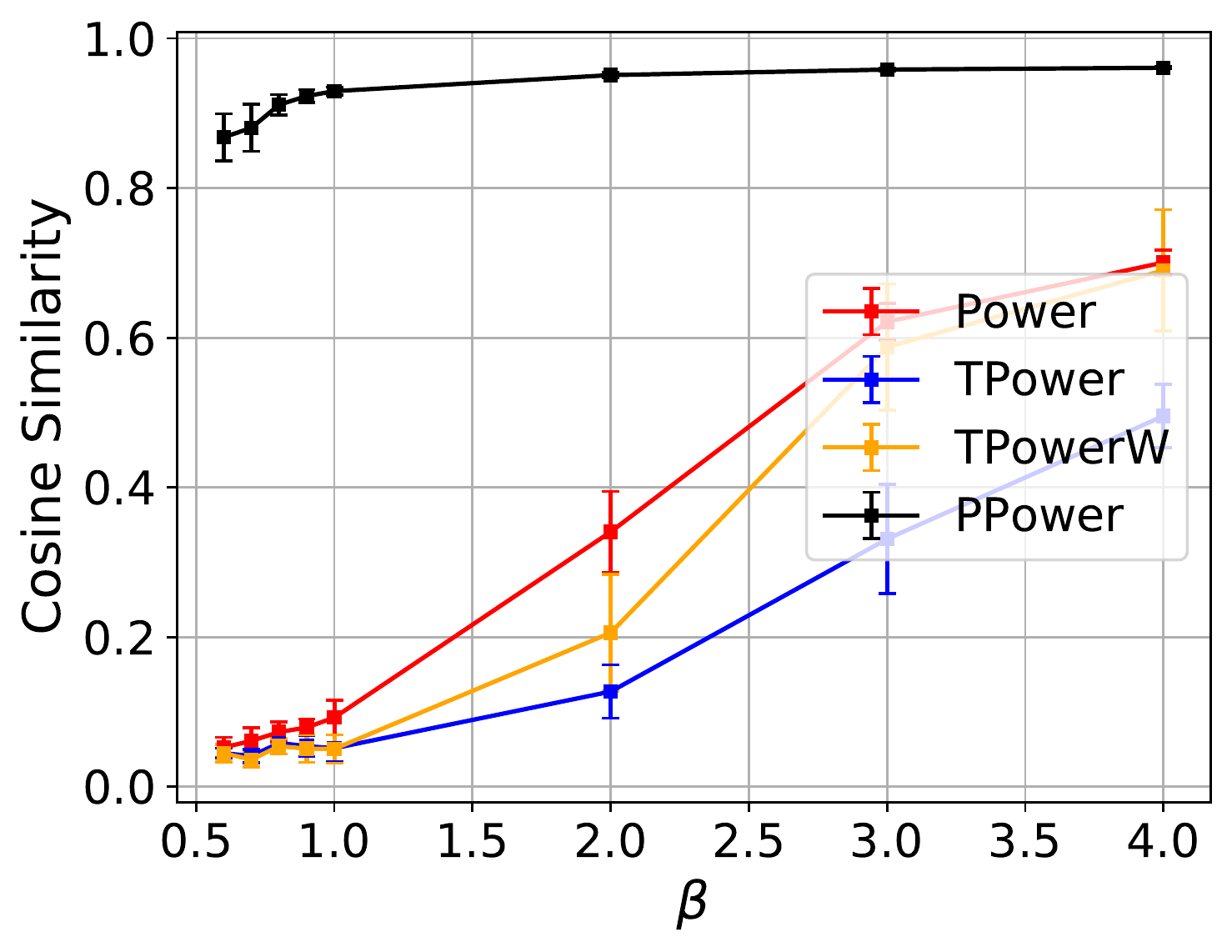} \\
{\small (a) Fixing $\beta = 1$ and varying $m$} & {\small (b) Fixing $m = 300$ and varying $\beta$}
\end{tabular}
\caption{Quantitative comparisons of the performance of~\texttt{Power},~\texttt{TPower} and~\texttt{PPower} according to the Cosine Similarity for the Fashion-MNIST dataset and the spiked covariance model.} \label{fig:quant_fashionMNIST_spikedCov}
\end{center}
\end{figure} 
    \begin{figure}
\begin{center}
\begin{tabular}{cc}
\includegraphics[height=0.19\textwidth]{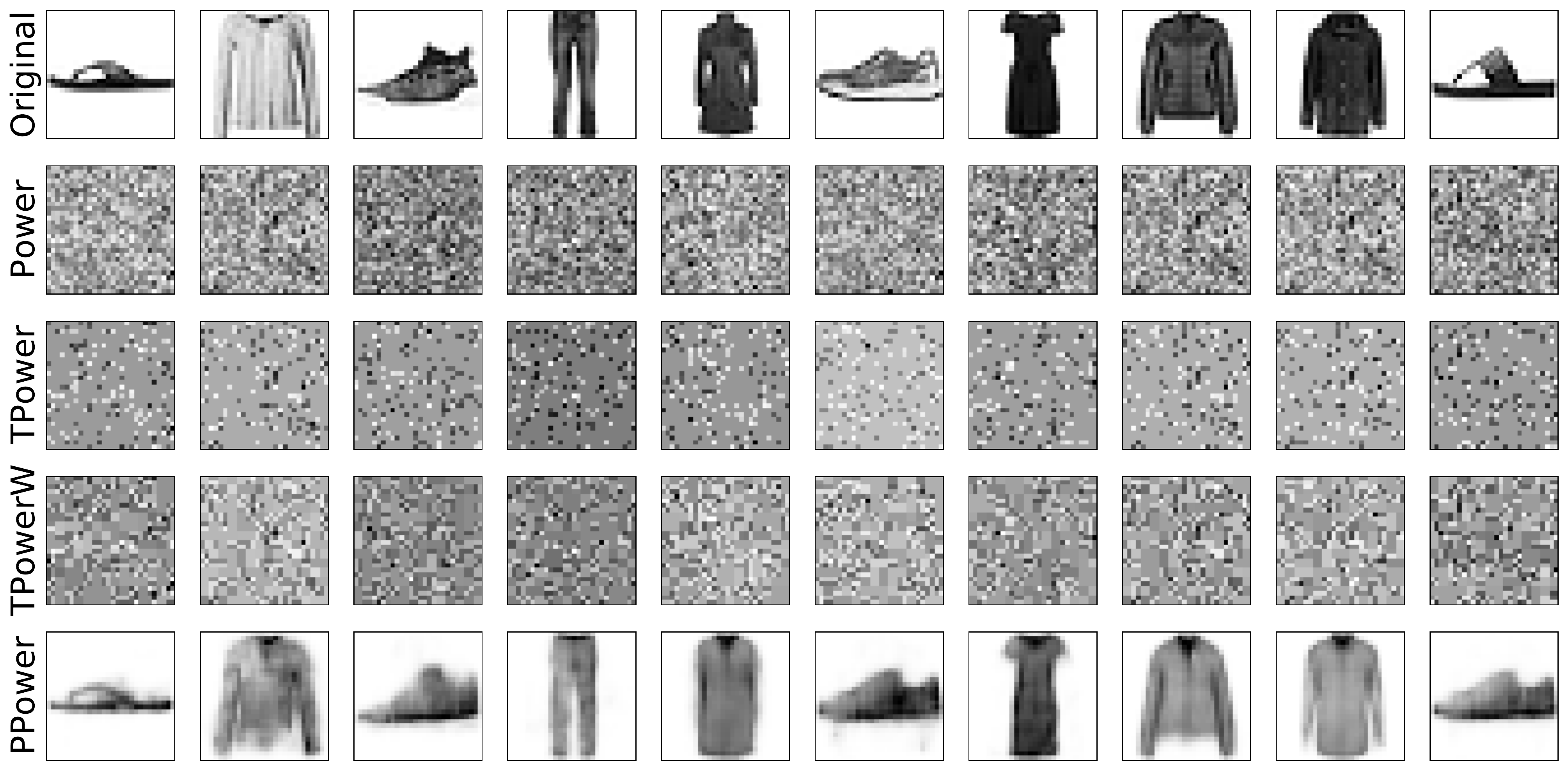} & \hspace{-0.5cm}
\includegraphics[height=0.19\textwidth]{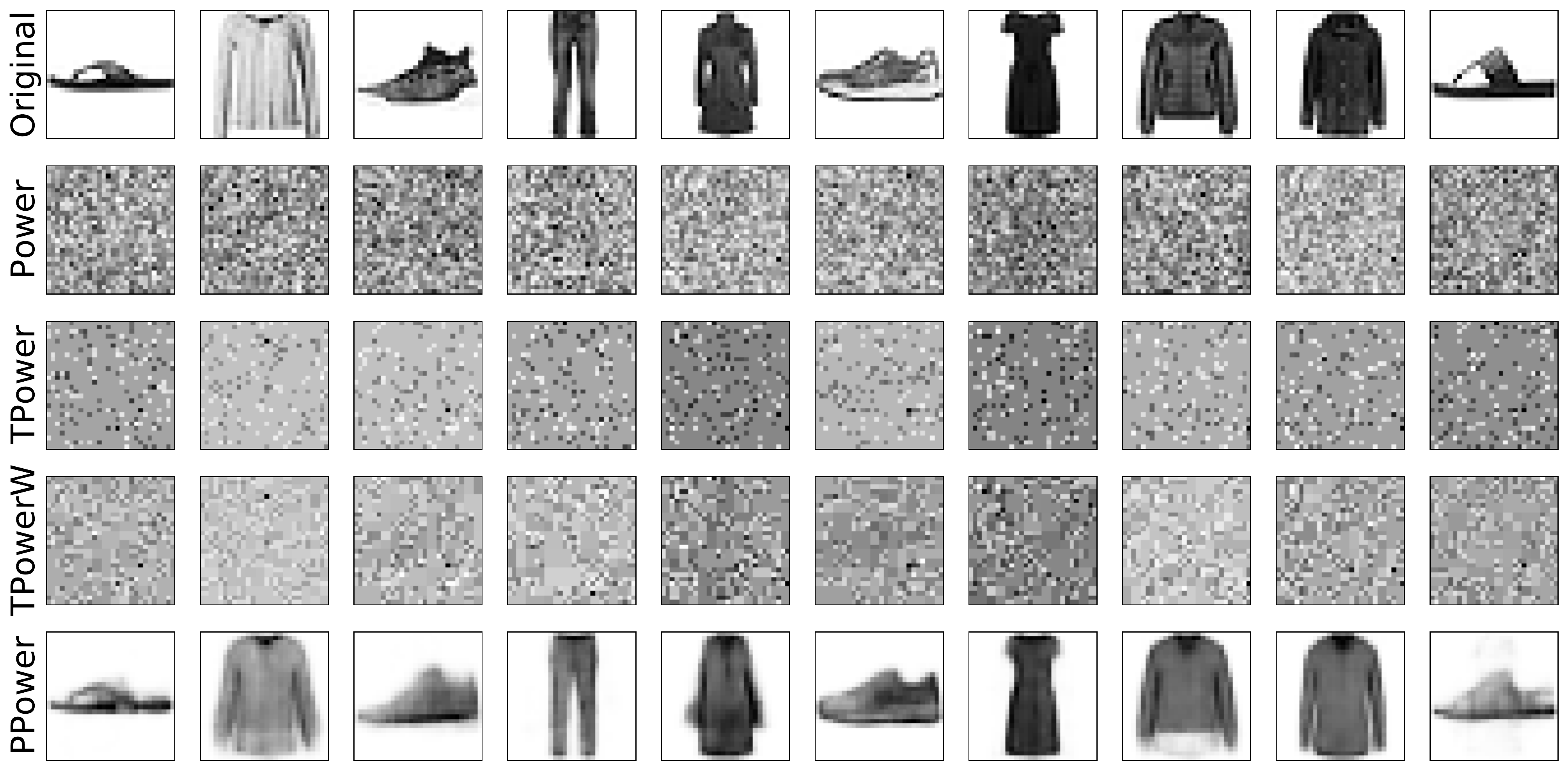} \\
{\small (a) $m = 200$} & {\small (b) $m = 400$}
\end{tabular}
\caption{Examples of reconstructed images of the Fashion-MNIST dataset for phase retrieval.} \label{fig:fashionMNIST_imgs_pr}
\end{center}
\end{figure} 

\begin{figure}
\begin{center}
\includegraphics[height=0.35\columnwidth]{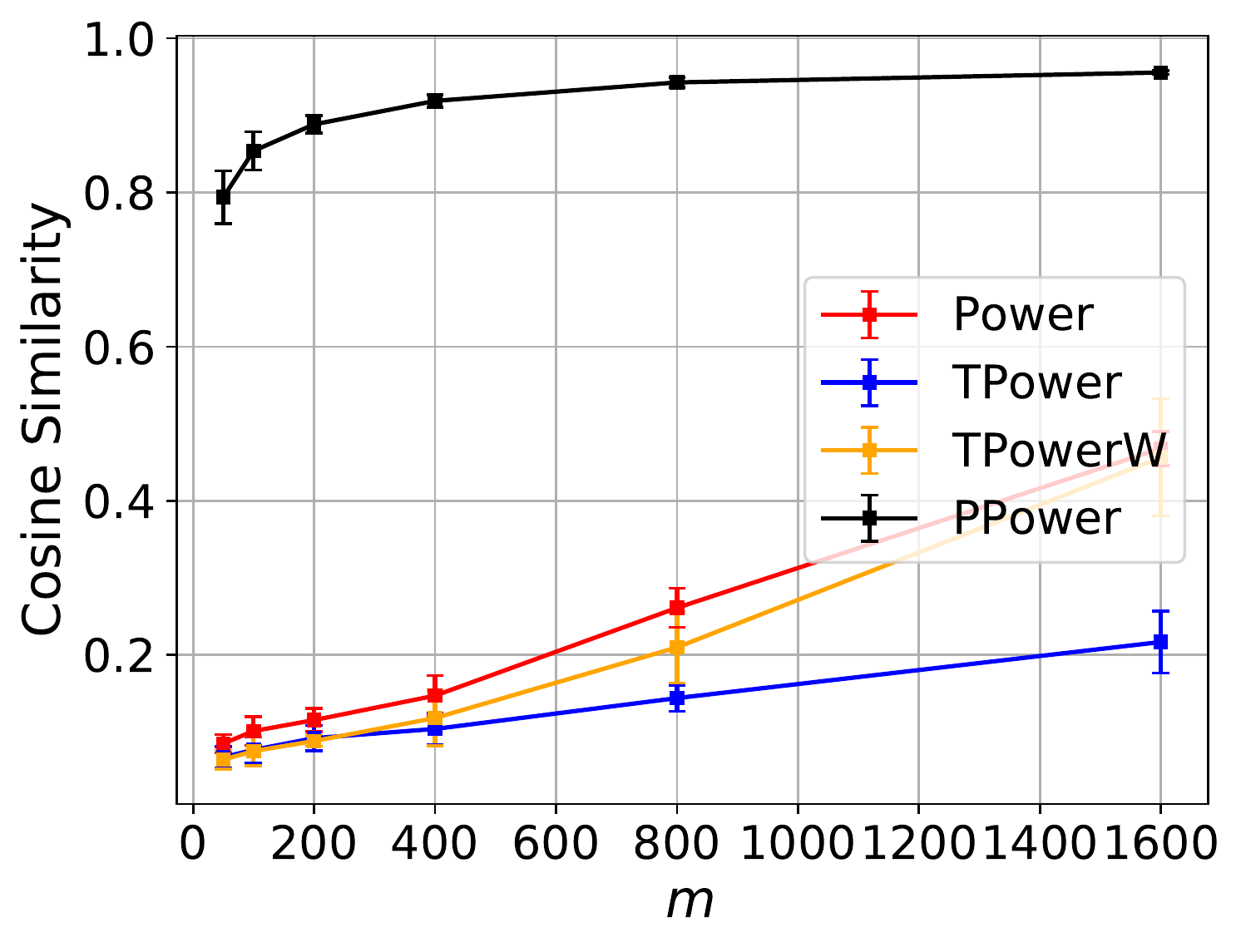}
\caption{Quantitative comparisons of the performance of~\texttt{Power},~\texttt{TPower} and~\texttt{PPower} according to the Cosine Similarity for the Fashion-MNIST dataset and the phase retrieval model.} \label{fig:quant_fashionMNIST_pr} 
\end{center}
\end{figure}

\section{Numerical Results for the CelebA Dataset}
\label{app:exp_CelebA}

The CelebA dataset consists of more than $200, 000$ face images of celebrities, where each input image is cropped to a $64 \times 64$ RGB image with $n = 64 \times 64 \times 3 = 12288$. The generative model $G$ is set to be a
pre-trained Deep Convolutional Generative Adversarial Networks (DCGAN) model with latent dimension $k = 100$. We use the DCGAN model trained by the authors of~\cite{bora2017compressed} directly. We select the best estimate among $2$ random restarts.
The Adam optimizer with $100$ steps and a learning rate of $0.1$
is used for the projection operator. 

 In each iteration of~\texttt{TPower} and~\texttt{TPowerW}, the calculated entries are truncated to zero except for the largest $q$ entries. For CelebA, $q$ is set to be $2000$.  The remaining parameters are the same as those for the MNIST dataset. Here we focus on the spiked covariance model.
The results are reported in Figures~\ref{fig:CelebA_imgs_spikedCov} and~\ref{fig:quant_CelebA_spikedCov}. 
From these figures, we can again observe the superiority of~\texttt{PPower} compare to the baselines.  
While \texttt{TPowerW} does improve over \texttt{TPower}, its performance is still very limited.

 \begin{figure}
\begin{center}
\begin{tabular}{cc}
 \includegraphics[height=0.25\textwidth]{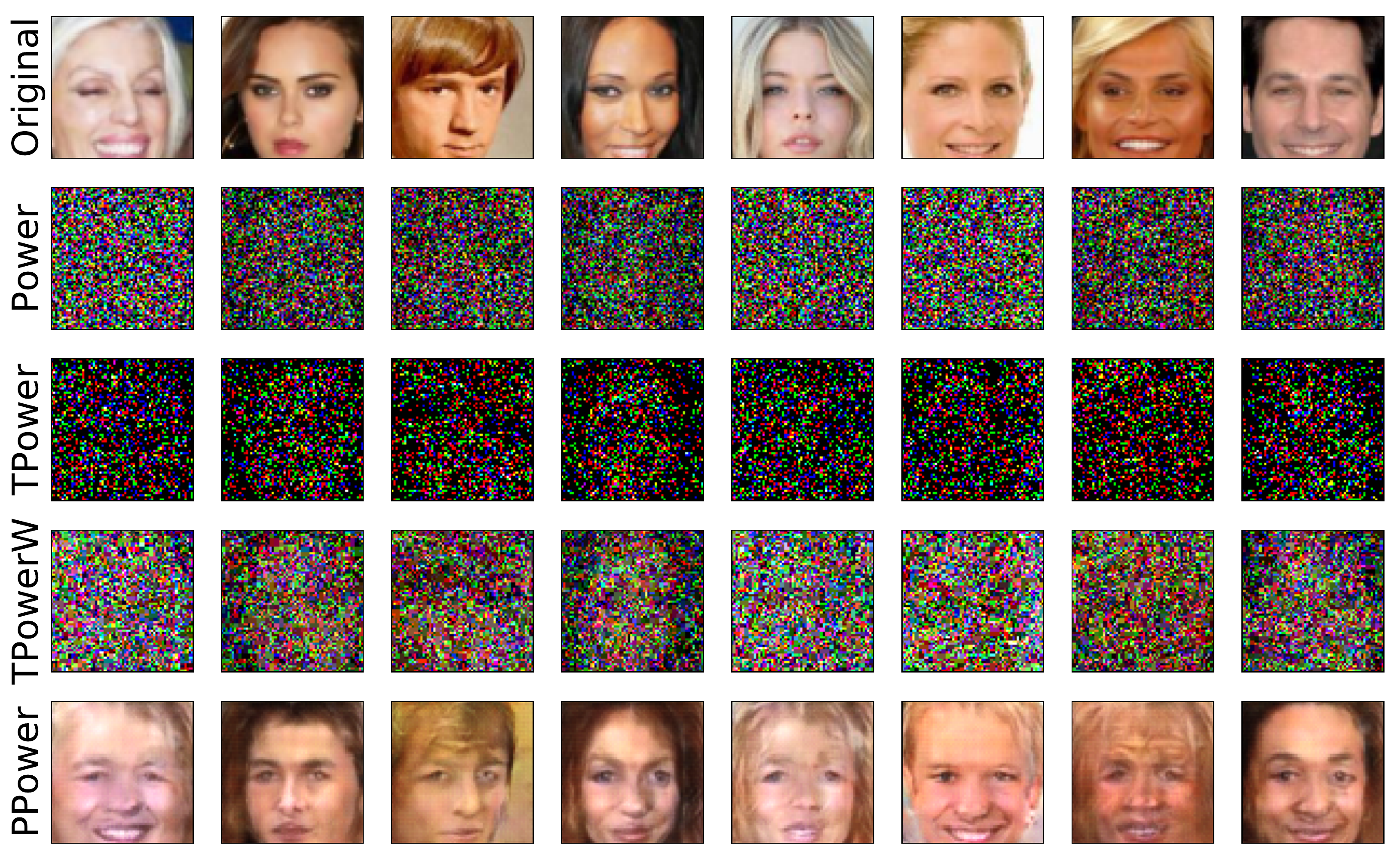} & \hspace{-0.5cm}
\includegraphics[height=0.25\textwidth]{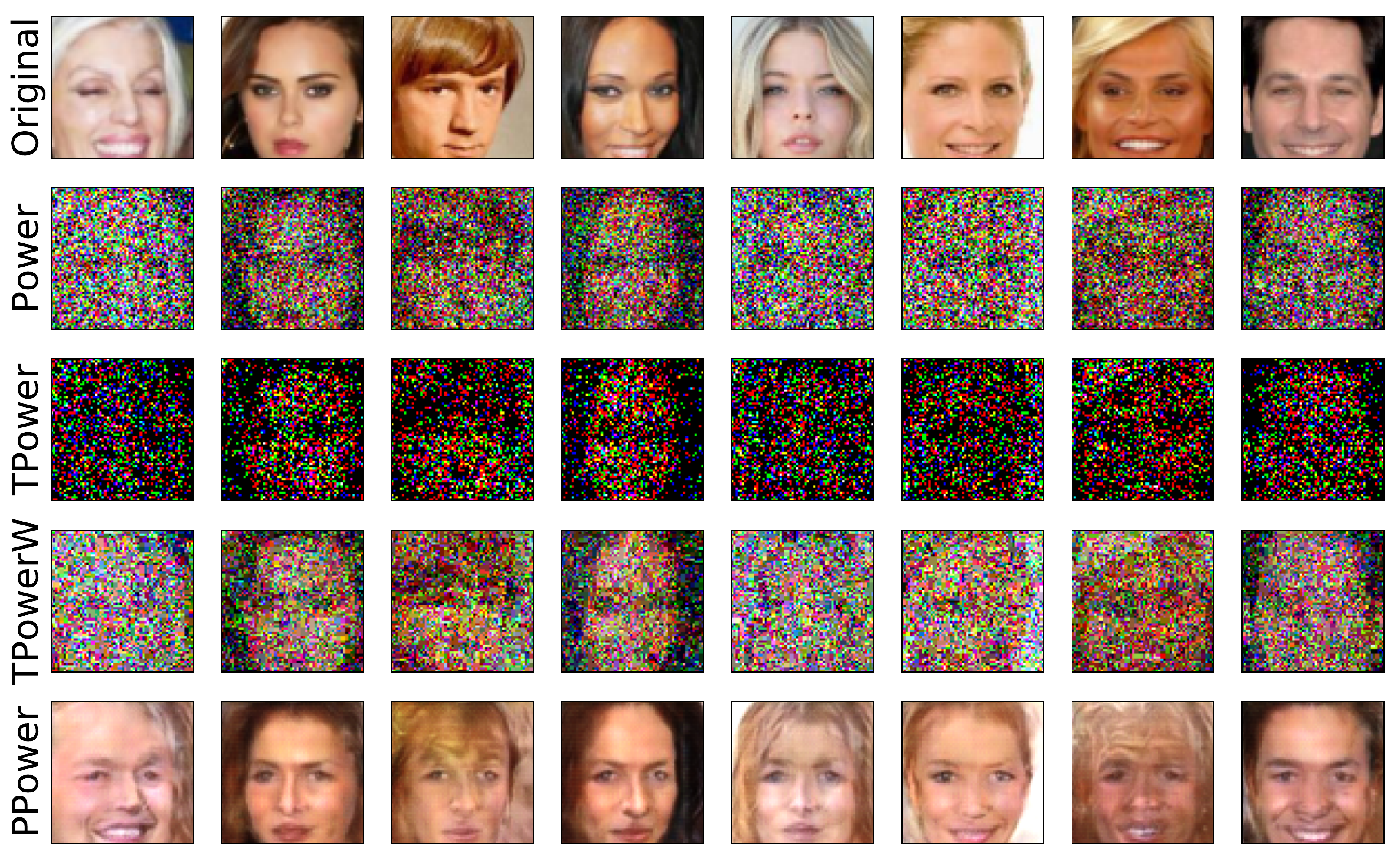} \\
{\small (a) $\beta = 1$ and $m = 6000$} & {\small (b) $\beta = 4$ and $m = 3000$}
\end{tabular}
\caption{Examples of reconstructed CelebA images for the spiked covariance model.}
\label{fig:CelebA_imgs_spikedCov}
\end{center}
\end{figure} 

 \begin{figure}
\begin{center}
\begin{tabular}{cc}
\includegraphics[height=0.35\textwidth]{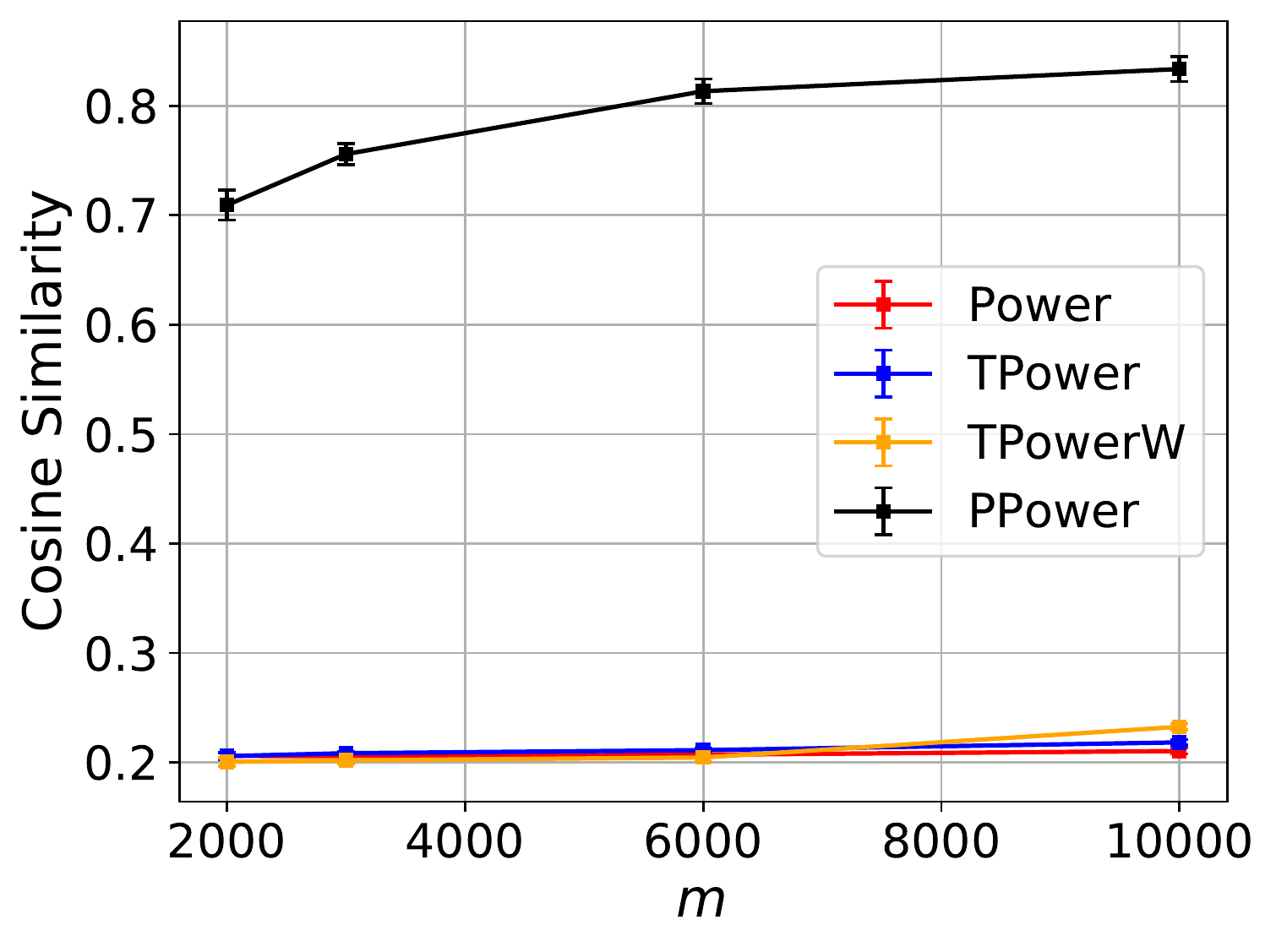} & \hspace{-0.5cm}
\includegraphics[height=0.35\textwidth]{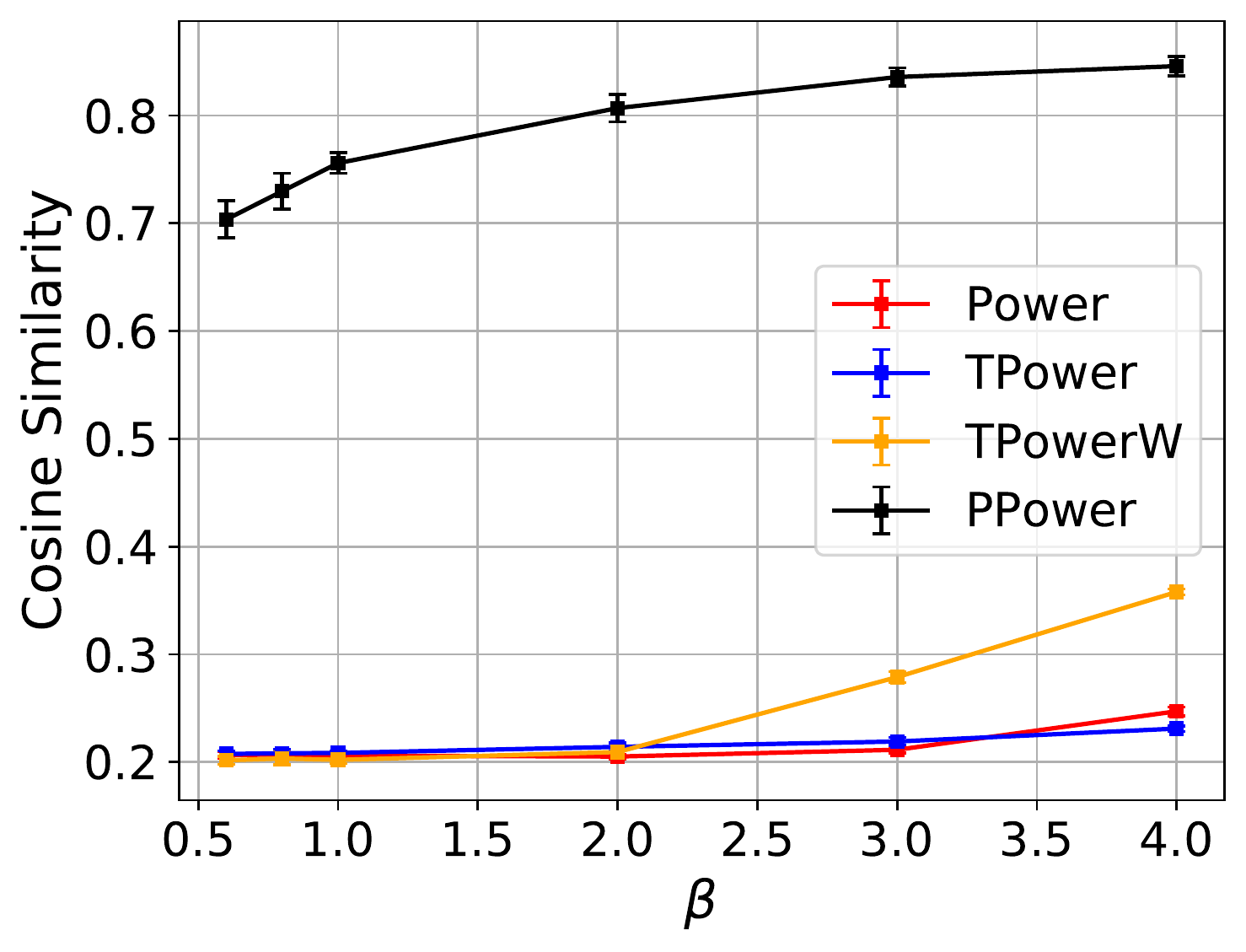} \\
{\small (a) Fixing $\beta = 1$ and varying $m$} & {\small (b) Fixing $m = 3000$ and varying $\beta$}
\end{tabular}
\caption{Quantitative comparisons of the performance of~\texttt{Power},~\texttt{TPower},~\texttt{TPowerW} and~\texttt{PPower} according to the Cosine Similarity for the CelebA dataset and spiked covariance model.} \label{fig:quant_CelebA_spikedCov}
\end{center}
\end{figure} 

\section{Lower Bound for the Reconstruction Error}
\label{app:lb_gpca}

In the following, we provide an algorithm-independent lower bound that serves as a counterpart to the upper bound given in Theorem \ref{thm:barbx_bxG}.  Note that here we consider the case that there is no representation error, which amounts to having $\|\bar{\bx} - \bx_G\|_2 = 0$ in Theorem \ref{thm:barbx_bxG}.  

\begin{theorem}\label{thm:lb_gpca}
  Given positive values of $r$ and $L$ such that $Lr$ is sufficiently large, and positive integer parameters $k$ and $m$, there exists an $L$-Lipschitz continuous generative model $G \,:\, B_2^k(r) \to \mathbb{R}^n$ (for suitably-chosen $n$), and a family of distributions $\{P_{\bV}^{(\bar{\bx},m)}\}_{\bar{\bx} \in \mathrm{Range}(G)}$ on $\bbR^{n \times n}$, such that any estimator $\hat{\bv}$ (depending on $\bV$) must satisfy 
 \begin{equation}\label{eq:lb_gpca_main}
  \sup_{\bar{\bx} \in \mathrm{Range}(G)} \bbE [\|\hat{\bv}\hat{\bv}^T -\bar{\bx}\bar{\bx}^T\|_\rmF] = \Omega\left(\sqrt{\frac{k \log (Lr)}{m}}\right),
 \end{equation}
 where expectation is taken over randomness of $\bV$ (whose distribution depends on $\bar{\bx}$); moreover, the distributions $\{P_{\bV}^{(\bar{\bx},m)}\}$ satisfy the following properties:
 \begin{itemize}
     \item There are $m$ independent samples $\bx_1,\ldots,\bx_m$ drawn independently from $\calN(\mathbf{0},\bar{\bV})$ with $\bar{\bV} := (\bar{\lambda}_1 -\bar{\lambda}_2)\bar{\bx}\bar{\bx}^T +\bar{\lambda}_2 \bI_n$ for some constants $0 < \bar{\lambda}_2 < \bar{\lambda}_1$, and $\bV$ is the corresponding non-centered sample covariance matrix: 
     \begin{equation}
        \bV = \frac{1}{m}\sum_{i=1}^m \bx_i \bx_i^T. \label{eq:lb_V}
     \end{equation}
     \item $\bar{\bV} = \bbE[\bV]$ satisfies Assumption~\ref{assump:barbV} with eigenvalues $(\bar{\lambda}_1,\bar{\lambda}_2,\dotsc,\bar{\lambda}_2)$, and $\bE := \bV - \bar{\bV}$ satisfies Assumption~\ref{assump:E_cond} with probability $1- e^{-\Omega(\log (|S_1|\cdot|S_2|))} - e^{-\Omega(n)}$  (i.e., the same probability scaling as Example~\ref{exam:spikedCov} in Section \ref{sec:examples}).
 \end{itemize}
 
 
\end{theorem}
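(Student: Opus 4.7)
The plan is to establish the bound via Fano's inequality applied to a well-chosen packing inside $\mathrm{Range}(G)$, following the classical template for minimax lower bounds in (sparse) PCA~\cite{vu2012minimax}. Concretely, I will construct unit vectors $\calW = \{\bw_1,\ldots,\bw_M\} \subset \mathrm{Range}(G)$ with $\log M = \Omega(k\log(Lr))$ and pairwise projection-metric distances $\|\bw_i\bw_i^T - \bw_j\bw_j^T\|_\rmF \asymp \epsilon$ for a parameter $\epsilon$ to be chosen, then take the adversarial prior to be uniform on $\calW$. Any estimator $\hat{\bv}$ with $\sup_{\bar{\bx}}\bbE\|\hat{\bv}\hat{\bv}^T - \bar{\bx}\bar{\bx}^T\|_\rmF < c\epsilon$ would, via nearest-neighbor decoding to $\calW$, yield a multi-way test with small error probability, and Fano's inequality will turn this into the lower bound $\epsilon = \Omega(\sqrt{k\log(Lr)/m})$ after I control the KL divergences between the candidate laws.

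For the construction, I fix $\bw_0\in\calS^{n-1}$ and, taking $n$ sufficiently large, use Varshamov--Gilbert to obtain $M = \exp(\Omega(k\log(Lr)))$ unit vectors $\bu_1,\ldots,\bu_M$ in $\bw_0^\perp$ with $\|\bu_i - \bu_j\|_2$ bounded below by an absolute constant. Setting $\bw_i = \sqrt{1-\epsilon^2/2}\,\bw_0 + (\epsilon/\sqrt{2})\bu_i$ keeps $\bw_i\in\calS^{n-1}$ and, via a short expansion of $1-(\bw_i^T\bw_j)^2$, yields $\|\bw_i\bw_i^T - \bw_j\bw_j^T\|_\rmF \asymp \epsilon\|\bu_i - \bu_j\|_2 \asymp \epsilon$. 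I then pick $\{\bz_1,\ldots,\bz_M\} \subset B_2^k(r/2)$ with pairwise separation $\ge 4/L$ (feasible by a standard volume packing bound since $M \le (Lr/C)^k$ for the eventual choice of $\epsilon$), and define the generative model by a partition-of-unity interpolation
\begin{equation}
H(\bz) = \Big(1 - \sum_i \phi_i(\bz)\Big)\bw_0 + \sum_i \phi_i(\bz)\bw_i, \qquad G(\bz) := \frac{H(\bz)}{\|H(\bz)\|_2},
\end{equation}
where $\phi_i$ is an $(L/2)$-Lipschitz tent function supported on $B(\bz_i,2/L)$ with $\phi_i(\bz_i)=1$. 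The $(4/L)$-separation makes the supports of the $\phi_i$ disjoint, and for small $\epsilon$ the normalization keeps the Lipschitz constant of $G$ below $L$ and the range inside $\calS^{n-1}$.

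Given the packing, the law $P_i$ of $\bV$ in~\eqref{eq:lb_V} under spike $\bw_i$ is the $m$-fold product of $\calN(\bzero,\bar{\bV}_i)$ with $\bar{\bV}_i = (\bar{\lambda}_1-\bar{\lambda}_2)\bw_i\bw_i^T + \bar{\lambda}_2\bI_n$. Because $\bar{\bV}_i$ and $\bar{\bV}_j$ share the same spectrum, the log-determinant terms in the Gaussian KL formula cancel, and a direct Sherman--Morrison computation gives
\begin{equation}
D_{\mathrm{KL}}(P_i\,\|\,P_j) \;=\; \frac{m(\bar{\lambda}_1-\bar{\lambda}_2)^2}{4\bar{\lambda}_1\bar{\lambda}_2}\,\|\bw_i\bw_i^T - \bw_j\bw_j^T\|_\rmF^2 \;\le\; C'\,m\epsilon^2,
\end{equation}
with $C'$ depending only on $\bar{\lambda}_1,\bar{\lambda}_2$. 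Choosing $\epsilon = c\sqrt{k\log(Lr)/m}$ with an absolute constant $c$ small enough forces the Fano error probability to exceed $3/4$, delivering the claimed contradiction and hence \eqref{eq:lb_gpca_main}. The distributional requirements are then immediate: $\bar{\bV}_i$ satisfies Assumption~\ref{assump:barbV} by construction, and the sub-exponential Bernstein plus $\epsilon$-net arguments of Example~\ref{exam:spikedCov} carry over verbatim (only the mean matrix changes) to deliver Assumption~\ref{assump:E_cond} with the stated failure probabilities.

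The main obstacle is the second step: arranging $G$ to be simultaneously $L$-Lipschitz, sphere-valued, and rich enough to host a packing of cardinality $\exp(\Omega(k\log(Lr)))$ with controlled pairwise distances. The normalization $H(\bz)/\|H(\bz)\|_2$ could in principle inflate the Lipschitz constant, so I plan to handle this by choosing $\epsilon$ small enough (combined with the disjointness of the $\phi_i$-supports) that $\|H(\bz)\|_2$ stays uniformly bounded away from $0$; one may need to slightly rescale the base Lipschitz constant of the $\phi_i$'s. The remaining ingredients (Sherman--Morrison, Fano, Example~\ref{exam:spikedCov}) are routine.
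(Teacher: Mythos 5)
Your overall strategy — pack $\exp(\Omega(k\log(Lr)))$ well-separated unit vectors inside $\mathrm{Range}(G)$, apply the Gaussian KL formula for spiked covariances, and invoke the generalized Fano lemma of Yu — is exactly the skeleton of the paper's proof (Appendix~\ref{app:lb_gpca}), and your choices of $\epsilon$ and the resulting $\Omega(\sqrt{k\log(Lr)/m})$ rate match. The point of genuine divergence is the construction of the generative model and the packing within it. The paper invokes a pre-built $L$-Lipschitz $G$ from~\cite{liu2020sample} whose range equals a constrained family of $k$-group-sparse unit vectors with $Lr = \Theta(n/k)$, then carries out a Gilbert--Varshamov-style probabilistic argument \emph{inside that fixed range} to obtain the packing $\calC_\epsilon$ (Appendix~\ref{app:claim_lb_gpca}). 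You instead build $G$ from scratch via a partition-of-unity of tent functions on an $\Omega(1/L)$-separated set of latent points, and place the packing on a shell around a fixed north pole $\bw_0$. This is a legitimate alternative and arguably more self-contained: your $H$ is Lipschitz with constant $O(L\epsilon)$ by the disjointness of the tent supports, $\|H(\bz)\|_2 \ge \sqrt{1-\epsilon^2/2}$ uniformly (since $\bu_i \perp \bw_0$), so the normalization inflates the constant by at most a factor of $O(1)$ and $G$ is comfortably $L$-Lipschitz. The tradeoff is that you must verify these Lipschitz facts yourself (which you correctly flag and sketch), whereas the paper defers them to~\cite{liu2020sample}; and your ambient dimension scales as $n = \Theta(k\log(Lr))$ while the paper's is $n = \Theta(k \cdot Lr)$ — both are admissible, since the theorem only asks for a suitably-chosen $n$.

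Two small points worth tightening. First, the exact KL constant from~\cite[Lemma~3.1.3]{vu2012minimax} is $\frac{m(\bar{\lambda}_1-\bar{\lambda}_2)^2}{2\bar{\lambda}_1\bar{\lambda}_2}\|\bw_i\bw_i^T - \bw_j\bw_j^T\|_\rmF^2$, not $\frac{1}{4}$; this does not affect the argument since you only use the $O(m\epsilon^2)$ bound, but the displayed equality sign should be corrected or replaced with an inequality. Second, for the Fano step you need \emph{two-sided} control on $\|\bw_i\bw_i^T - \bw_j\bw_j^T\|_\rmF$ (a lower bound for the testing separation, an upper bound for the KL); you state only the lower bound on $\|\bu_i - \bu_j\|_2$, but the upper bound $\le 2$ for unit vectors supplies the other side, so this is a presentation gap rather than a mathematical one. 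With these patches the argument is complete and yields the same conclusion as the paper.
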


Before proving the theorem, we present some useful lemmas. 

\subsection{Auxiliary Results for Theorem~\ref{thm:lb_gpca}}

We first state a standard lemma from \cite{yu1997assouad} based on Fano's inequality (see also \cite{Tsy09}), using generic notation.

\begin{lemma}{\em \hspace{1sp}\cite[Lemma~3]{yu1997assouad}}\label{lem:gen_fano}
 Let $N\ge 1$ be an integer and $\bs_1,\ldots,\bs_N \subset \Theta$ index a collection of probability measures $\bbP_{\bs_i}$ on a measurable space $(\calX,\calA)$. Let $\rmd$ be a pseudometric\footnote{A pseudometric is a generalization of a metric in which the distance between two distinct points can be zero.} on $\Theta$ and suppose that for all $i \ne j$,
 \begin{equation}
  \rmd(\bs_i,\bs_j) \ge \alpha_N
 \end{equation}
and 
\begin{equation}
 D(\bbP_{\bs_i} || \bbP_{\bs_j}) \le \beta_N,
\end{equation}
where $D(\bbP_1 ||\bbP_2)$ denotes the Kullback-Leibler (KL) divergence between two probability measures $\bbP_1$ and $\bbP_2$. Then, every $\calA$-measurable estimator $\hat{\bs}$ satisfies 
\begin{equation}
 \max_i \bbE_{\bs_i} \left[\rmd(\hat{\bs},\bs_i)\right] \ge \frac{\alpha_N}{2}\left(1-\frac{\beta_N +\log 2}{\log N}\right),
\end{equation}
where $\bbE_{\bs_i}$ is the average with respect to  $\bbP_{\bs_i}$.
\end{lemma}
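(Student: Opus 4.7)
The plan is to prove Lemma~\ref{lem:gen_fano} via the classical reduction from estimation to multiple hypothesis testing, followed by Fano's inequality together with an average-pairwise-KL upper bound on the mutual information.

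First, I would convert the estimator $\hat{\bs}$ into a test $\hat{T} := \argmin_{j\in \{1,\dotsc,N\}} \rmd(\hat{\bs},\bs_j)$, with ties broken arbitrarily. The crucial observation is that on the event $\{\hat{T} \ne i\}$, the triangle inequality for the pseudometric $\rmd$ combined with the separation hypothesis yields
\begin{equation}
    \alpha_N \le \rmd(\bs_i,\bs_{\hat{T}}) \le \rmd(\bs_i,\hat{\bs}) + \rmd(\hat{\bs},\bs_{\hat{T}}) \le 2\,\rmd(\hat{\bs},\bs_i),
\end{equation}
where the last step uses the definition of $\hat{T}$ as a minimizer. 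Hence $\{\hat{T}\ne i\} \subseteq \{\rmd(\hat{\bs},\bs_i) \ge \alpha_N/2\}$, and nonnegativity of $\rmd$ gives
\begin{equation}
    \bbE_{\bs_i}\bigl[\rmd(\hat{\bs},\bs_i)\bigr] \ge \frac{\alpha_N}{2}\,\bbP_{\bs_i}(\hat{T} \ne i).
\end{equation}

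Next, I would introduce an auxiliary index $I$ drawn uniformly on $\{1,\dotsc,N\}$ and let $X$ denote the observation drawn from $\bbP_{\bs_I}$ given $I$. The classical Fano inequality applied to the Markov chain $I \to X \to \hat{T}$ yields
\begin{equation}
    \bbP(\hat{T} \ne I) \ge 1 - \frac{I(I;X) + \log 2}{\log N}.
\end{equation}
To translate the hypothesis $D(\bbP_{\bs_i}\,\|\,\bbP_{\bs_j}) \le \beta_N$ into a bound on $I(I;X)$, I would use the identity $I(I;X) = \frac{1}{N}\sum_{i=1}^N D(\bbP_{\bs_i}\,\|\,\bar{\bbP})$ where $\bar{\bbP} := \frac{1}{N}\sum_{j=1}^N \bbP_{\bs_j}$, together with convexity of the KL divergence in its second argument, which gives
\begin{equation}
    D(\bbP_{\bs_i}\,\|\,\bar{\bbP}) \le \frac{1}{N}\sum_{j=1}^N D(\bbP_{\bs_i}\,\|\,\bbP_{\bs_j}) \le \beta_N,
\end{equation}
and hence $I(I;X) \le \beta_N$.

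Finally, combining the two lower bounds above, observing that the maximum over $i$ dominates the uniform average over $I$, I obtain
\begin{equation}
    \max_i \bbE_{\bs_i}\bigl[\rmd(\hat{\bs},\bs_i)\bigr] \ge \frac{1}{N}\sum_{i=1}^N \bbE_{\bs_i}\bigl[\rmd(\hat{\bs},\bs_i)\bigr] \ge \frac{\alpha_N}{2}\,\bbP(\hat{T}\ne I) \ge \frac{\alpha_N}{2}\left(1 - \frac{\beta_N + \log 2}{\log N}\right),
\end{equation}
which is exactly the claimed inequality. The argument is textbook, so no step is a substantial obstacle; the only mild subtlety is that $\rmd$ is a pseudometric rather than a metric, but since only nonnegativity and the triangle inequality are used, this causes no difficulty.
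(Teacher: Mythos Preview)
Your proof is correct and follows the standard route (reduction to testing via the triangle inequality, then Fano's inequality combined with the convexity bound $I(I;X)\le \frac{1}{N^2}\sum_{i,j}D(\bbP_{\bs_i}\|\bbP_{\bs_j})\le\beta_N$). Note, however, that the paper does not actually prove this lemma: it is quoted verbatim from \cite[Lemma~3]{yu1997assouad} as an auxiliary tool for the lower bound in Appendix~\ref{app:lb_gpca}, so there is no in-paper proof to compare against. Your argument is essentially the classical one found in Yu (1997) and in standard references such as Tsybakov, so nothing further is needed.
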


In addition, we have the following lemma concerning the KL divergence between $m$-fold products of Gaussian probability measures.
\begin{lemma}{\em \hspace{1sp}\cite[Lemma~3.1.3]{vu2012minimax}}\label{lem:kl_gaussian}
 For $i=1,2$, let $\bw_i \in \calS^{n-1}$, $\lambda_1 > \lambda_2 > 0$, and
 \begin{equation}
  \mathbf{\Sigma}_i = (\lambda_1-\lambda_2) \bw_i \bw_i^T + \lambda_2 \bI_n.
 \end{equation}
 Let $\bbP_i^m$ be the $m$-fold product of $\calN(\mathbf{0},\mathbf{\Sigma}_i)$. Then,
 \begin{equation}
  D(\bbP_1^m||\bbP_2^m) = \frac{m}{2\alpha^2} \|\bw_1\bw_1^T -\bw_2\bw_2^T\|_{\rmF}^2,
 \end{equation}
where $\alpha^2 = \frac{\lambda_1\lambda_2}{(\lambda_1 -\lambda_2)^2}$.
\end{lemma}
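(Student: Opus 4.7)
The plan is a classical Fano-type minimax argument over a carefully chosen finite sub-family of signals inside $\mathrm{Range}(G)$. Concretely, I will exhibit unit vectors $\bs_1,\dotsc,\bs_N\in\mathrm{Range}(G)$ with $\log N=\Theta(k\log(Lr))$ that are well-separated in the pseudometric $\rmd(\bs_i,\bs_j):=\|\bs_i\bs_i^T-\bs_j\bs_j^T\|_\rmF$, index the associated spiked-covariance distributions $\bbP_i$ (drawing $m$ i.i.d.\ samples from $\calN(\mathbf{0},\bar{\bV}_i)$ with $\bar{\bV}_i:=(\bar{\lambda}_1-\bar{\lambda}_2)\bs_i\bs_i^T+\bar{\lambda}_2\bI_n$), bound $D(\bbP_i\|\bbP_j)$ via Lemma~\ref{lem:kl_gaussian}, and invoke Lemma~\ref{lem:gen_fano}. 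The resulting rate will match the claimed $\Omega(\sqrt{k\log(Lr)/m})$.

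The first main step is the explicit construction of $G$ and the packing. Fix $\alpha_N:=c\sqrt{k\log(Lr)/m}$ for an absolute constant $c>0$ to be tuned, and set $\rho:=\alpha_N/L$. A standard volume argument in $B_2^k(r)$ supplies centers $\bz_1,\dotsc,\bz_N$ with pairwise distances at least $\rho$ and $N=\Theta((r/\rho)^k)=\exp(\Theta(k\log(Lr)))$; and in $\calS^{n-1}$ for $n$ chosen polynomially large in $k$ one can pick unit vectors $\bs_1,\dotsc,\bs_N$ as small perturbations of a common base direction $\bs_*$ along $N$ mutually orthogonal coordinates so that $\|\bs_i-\bs_j\|_2\in[\alpha_N/4,\alpha_N/2]$ and $\bs_i^T\bs_*>0$. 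I then define $G(\bz):=\bs_*$ outside the (disjoint) balls $B(\bz_i,\rho/2)$, and inside each such ball I interpolate smoothly from $\bs_*$ to $\bs_i$ along the shorter geodesic of $\calS^{n-1}$ using a tent function of slope $2/\rho$; the resulting $G$ maps into $\calS^{n-1}$, is $L$-Lipschitz by construction, and its range contains $\{\bs_1,\dotsc,\bs_N\}$. Lemma~\ref{lem:simple_dist_eq} then converts the Euclidean spacing into $\rmd(\bs_i,\bs_j)\in[c_1\alpha_N,c_2\alpha_N]$ for absolute constants $c_1,c_2$.

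The second main step is the Fano calculation together with the verification of the regularity conditions. Lemma~\ref{lem:kl_gaussian} yields $D(\bbP_i\|\bbP_j)=\tfrac{m}{2\alpha^2}\rmd(\bs_i,\bs_j)^2\le \tfrac{c_2^2 m\alpha_N^2}{2\alpha^2}$, where $\alpha^2=\bar{\lambda}_1\bar{\lambda}_2/(\bar{\lambda}_1-\bar{\lambda}_2)^2=\Theta(1)$. Choosing the constant $c$ in $\alpha_N$ small enough that this KL upper bound is at most $\tfrac{1}{4}\log N$, Lemma~\ref{lem:gen_fano} produces $\sup_i\bbE_i[\rmd(\hat{\bv}\hat{\bv}^T,\bs_i\bs_i^T)]\ge\tfrac{c_1\alpha_N}{4}=\Omega(\sqrt{k\log(Lr)/m})$, which is precisely~\eqref{eq:lb_gpca_main}. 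For the remaining claims, $\bar{\bV}_i$ clearly satisfies Assumption~\ref{assump:barbV} with eigenvalues $(\bar{\lambda}_1,\bar{\lambda}_2,\dotsc,\bar{\lambda}_2)$ and top eigenvector $\bs_i$, while $\bE=\bV-\bar{\bV}_i$ falls into the exact setup treated in Example~\ref{exam:spikedCov} up to a trivial additive shift of the mean (taking $\beta=\bar{\lambda}_1-\bar{\lambda}_2$ with noise variance $\bar{\lambda}_2$ in place of $1$); rerunning Lemma~\ref{lem:prod_subGs}, Lemma~\ref{lem:large_dev}, and the $\epsilon$-net argument via Lemma~\ref{lem:symm_spectral_norm} from Appendix~\ref{app:proof_examples} verifies Assumption~\ref{assump:E_cond} with the stated probability $1-e^{-\Omega(\log(|S_1|\cdot|S_2|))}-e^{-\Omega(n)}$. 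The main obstacle is the explicit Lipschitz realization of the packing by $G$; once that engineering is in place, the Fano/KL calculation and the noise verification are routine.
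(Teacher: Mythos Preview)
Your proposal does not address the stated lemma at all. Lemma~\ref{lem:kl_gaussian} is a closed-form computation of the KL divergence between two $m$-fold Gaussian product measures with spiked covariances; its proof is a direct calculation using $D(\calN(\mathbf{0},\bSigma_1)\|\calN(\mathbf{0},\bSigma_2))=\tfrac{1}{2}(\tr(\bSigma_2^{-1}\bSigma_1)-n+\log\det\bSigma_2-\log\det\bSigma_1)$, the Sherman--Morrison formula for $\bSigma_2^{-1}$, and the identity $\|\bw_1\bw_1^T-\bw_2\bw_2^T\|_\rmF^2=2(1-(\bw_1^T\bw_2)^2)$. What you have written is instead a sketch of Theorem~\ref{thm:lb_gpca}, which \emph{uses} Lemma~\ref{lem:kl_gaussian} as an ingredient. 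So as a proof of the stated lemma, the proposal is simply off-target.

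If the intended target was Theorem~\ref{thm:lb_gpca}, your high-level Fano strategy matches the paper's, but your packing construction has a genuine gap. You require $N$ \emph{mutually orthogonal} perturbation directions in $\bbR^n$ to build $\bs_1,\dotsc,\bs_N$, which forces $n\ge N$. But your own volume count gives $N=\Theta((r/\rho)^k)=\exp(\Theta(k\log(Lr)))$, which is exponential; this is incompatible with your claim that ``$n$ chosen polynomially large in $k$'' suffices. The paper sidesteps this by a completely different, combinatorial construction: it takes $G$ from~\cite{liu2020sample} whose range is a structured set of $k$-group-sparse unit vectors with $Lr=\Theta(n/k)$, and then builds the packing $\calC_\epsilon$ by a random-coding argument on binary group-sparse patterns (Appendix~\ref{app:claim_lb_gpca}), yielding $\log|\calC_\epsilon|\ge ck\log(n/k)$ with $n$ only linear in $kLr$. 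Your tent-function interpolation idea is appealing, but to make it work you would need a packing of $\calS^{n-1}$ at scale $\alpha_N$ of cardinality $\exp(\Theta(k\log(Lr)))$ that lives in an ambient dimension $n$ growing only polynomially --- and that is precisely the nontrivial combinatorial content that the paper's group-sparse construction supplies.
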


\subsection{Proof of Theorem~\ref{thm:lb_gpca}}
\label{app:sub_lb_gpca}

To prove the lower bound, we follow the construction technique from~\cite[Appendix~C]{liu2020sample}, who proved the existence of a generative model $G\,:\, B_2^k(r) \to \calS^{n-1}$ (with $n$ being a suitably-chosen integer multiple of $k$) such that 
\begin{equation}\label{eq:rangeG}
 \mathrm{Range}(G) = \tilde{\calW}_k := \left\{\bw\in \calW_k\,:\, w_n \ge \frac{w_c}{\sqrt{(k-1)w_{\max}^2 + w_c^2}}\right\},
\end{equation}
where $w_{\max}$ and $w_c$ are positive constants, $w_n$ is the $n$-th entry of $\bw$, and $\calW_k$ denotes the set of $k$-group-sparse vectors in $\calS^{n-1}$, i.e., the $n$ coefficients are arranged into $k$ disjoint groups of size $\frac{n}{k}$, and exactly one coefficient in each group is non-zero.  In addition, we have that $G$ is $L$-Lipschitz continuous with the Lipschitz constant $L$ being~\cite{liu2020sample}
\begin{equation}\label{eq:LipschitzConst}
 L = \frac{2n w_{\max}}{\sqrt{k} r w_c}  = \frac{n}{k} \cdot \frac{2\sqrt{k} w_{\max}}{r w_c}. 
\end{equation}
For any $\lambda \in (0,1)$, let 
\begin{equation}
 S(\lambda) := \{\bw \in \calW_k\,:\, w_n \ge \lambda\}.  \label{eq:setS}
\end{equation} 
Then, from \eqref{eq:rangeG}--\eqref{eq:LipschitzConst}, when $(k-1)w_{\max}^2 = 3 w_c^2$, we obtain
\begin{equation}\label{eq:Lr_nk_relation}
 Lr = \Theta\left(\frac{n}{k}\right),
\end{equation}
and
\begin{equation}
 \mathrm{Range}(G) = \left\{\bw\in \calW_k\,:\, w_n \ge \frac{1}{2}\right\} = S\left(\frac{1}{2}\right). \label{eq:following}
\end{equation}
Then, adapting an idea from~\cite{liu2020sample}, we claim that for any $\epsilon \in \big(0,\frac{\sqrt{3}}{2}\big)$, there exists a subset $\calC_{\epsilon} \subseteq S\big(\frac{1}{2}\big)$ such that:
\begin{itemize}
    \item $\log |\calC_{\epsilon}| \ge ck \log\frac{n}{k}$ for some universal constant $c$;
    \item for all distinct pairs $\bw_1,\bw_2 \in \calC_{\epsilon}$, it holds that $\frac{\epsilon}{\sqrt{2}} < \|\bw_1-\bw_2\|_2 \le \sqrt{2}\epsilon$. 
\end{itemize}
For completeness, a proof of this claim is given at Appendix~\ref{app:claim_lb_gpca}. From Lemma~\ref{lem:simple_dist_eq}, we obtain for all distinct pairs $\bw_1,\bw_2 \in \calC_{\epsilon}$ that
\begin{equation}
 \frac{\epsilon^2}{2} \le \|\bw_1\bw_1^T-\bw_2\bw_2^T\|_\rmF^2 \le 4\epsilon^2. \label{eq:Frob_bounds}
\end{equation}
Fix $\bar{\lambda}_1 > \bar{\lambda}_2 > 0$, and for each $\bw \in \calC_{\epsilon}$, let
\begin{equation}\label{eq:Sigma_w}
 \mathbf{\Sigma}_{\bw} = (\bar{\lambda}_1-\bar{\lambda}_2) \bw \bw^T + \bar{\lambda}_2 \bI_n.
\end{equation}
Since $\|\bw\|_2 = 1$, it immediately follows that the eigenvalues of $\mathbf{\Sigma}_{\bw}$ are $(\bar{\lambda}_1,\bar{\lambda}_2,\dotsc,\bar{\lambda}_2)$, in accordance with Assumption~\ref{assump:barbV}.

Since Theorem \ref{thm:lb_gpca} concerns the worst-case $\bar{\bx}$, it suffices to prove hardness within an arbitrary restricted set.  Accordingly, we consider $\bar{\bx} = \bw$ for all possible choices of $\bw \in \calC_{\epsilon}$.  As stated in the theorem, when $\bar{\bx} = \bw$, the samples are drawn from the $m$-fold product of $\calN(\mathbf{0}, \mathbf{\Sigma}_{\bw})$ (i.e., $\bar{\bV} = \mathbf{\Sigma}_{\bw}$), which we denote by $\bbP^m_{\bw}$, and $\bV$ is given by \eqref{eq:lb_V}.  The fact that Assumption \ref{assump:E_cond} holds for any such $\bw$ is shown in a similar manner to Example~\ref{exam:spikedCov}; the details are given in Appendix~\ref{app:claim2_lb_gpca} for completeness.

From Lemma~\ref{lem:kl_gaussian} and \eqref{eq:Frob_bounds}, we have for all distinct pairs $\bw_1,\bw_2 \in \calC_{\epsilon}$ that
\begin{equation}
 D\left(\bbP_{\bw_1}^m ||\bbP_{\bw_2}^m \right) = \frac{m}{2\alpha^2} \|\bw_1\bw_1 -\bw_2\bw_2^T\|_\rmF^2 \le \frac{2m\epsilon^2}{\alpha^2},
\end{equation}
where $\alpha^2 := \frac{\bar{\lambda}_1 \bar{\lambda}_2}{(\bar{\lambda}_1 - \bar{\lambda}_2)^2}$. By the assumption that $\bar{\lambda}_1$ and $\bar{\lambda}_2$ are fixed positive constants with $\bar{\lambda}_2 < \bar{\lambda}_1$, we also have that $\alpha^2$ is a fixed positive constant. Then, from Lemma~\ref{lem:gen_fano} and \eqref{eq:Frob_bounds}, we obtain
\begin{equation}
 \max_{\bw \in \calC_{\epsilon}} \bbE_{\bw} \left[\|\hat{\bv}\hat{\bv}^T -\bw\bw^T\|_\rmF\right] \ge \frac{\epsilon}{2\sqrt{2}}\left(1-\frac{2m\epsilon^2/\alpha^2 +\log 2}{\log |\calC_\epsilon|}\right), \label{eq:Fano_init}
\end{equation}
where $\bbE_{\bw}$ is the average with respect to $\bbP_{\bw}^m$ (leaving the dependence on $m$ implicit).  

Recalling that $\log |\calC_{\epsilon}| \ge ck \log\frac{n}{k}$, we now observe that by choosing $\epsilon$ to be on the order $\alpha \sqrt{\frac{k\log\frac{n}{k}}{m}}$, we can ensure that
\begin{equation}
 \frac{2 m \epsilon^2/\alpha^2}{\log |\calC_\epsilon|} \le \frac{1}{4}.
\end{equation}
Combining this with the fact that $\log |\calC_\epsilon| \to \infty$ and in particular $ \log |\calC_\epsilon| \ge 4\log 2$ for sufficiently large $n$, it follows from \eqref{eq:Fano_init} that
\begin{equation}
 \max_{\bw \in \calC_{\epsilon}} \bbE_{\bw} \left[\|\hat{\bv}\hat{\bv}^T -\bw\bw^T\|_\rmF\right] \ge \frac{\epsilon}{4\sqrt{2}}. 
\end{equation}
Substituting the chosen scaling of $\epsilon$ and the behavior of the Lipschitz constant in \eqref{eq:Lr_nk_relation}, this yields
\begin{equation}\label{eq:maxBw}
    \max_{\bw \in \calC_{\epsilon}} \bbE_{\bw} \left[\|\hat{\bv}\hat{\bv}^T -\bw\bw^T\|_\rmF\right] \ge C \alpha \sqrt{\frac{k\log(Lr)}{m}}.
\end{equation}
for some positive constant $C$.  

The lower bound in \eqref{eq:maxBw} holds even for an algorithm that has access to the samples $\bx_1,\dotsc,\bx_m$, whereas Theorem \ref{thm:lb_gpca} concerns only having access to $\bV$.  Since $\bV$ is deterministic given $\bx_1,\dotsc,\bx_m$ (see \eqref{eq:lb_V}), the latter problem can only be more difficult.  Hence, by identifying $\bar{\bx}$ with $\bw$ in \eqref{eq:maxBw} and further upper bounding the maximum over $\calC_{\epsilon}$ by the maximum over all of ${\rm Range}(G)$, the proof of Theorem \ref{thm:lb_gpca} is complete. 




\subsection{Proof of the Claim Following \eqref{eq:following}}
\label{app:claim_lb_gpca}

Consider the set of length-$(n-1)$ binary-valued signals
 \begin{equation}
  \calU := \Big\{\br \in \{0,1\}^{n-1}\,:\, {\large \big[ \substack{ \br \\ 1 } \big]} \text{ is } k \text{-group-sparse} \Big\},
 \end{equation}
where we recall that $n$ is an integer multiple of $k$. For any fixed $\br \in \calU$, we have 
\begin{equation}
 \left|\left\{\br' \in \calU\,:\, \|\br-\br'\|_0\le \frac{k-1}{2}\right\}\right| \le \binom{k-1}{\frac{k-1}{2}} \left(\frac{n}{k}\right)^{(k-1)/2},
\end{equation}
where the upper bound follows from that $\br'$ can be distinct with $\br$ in at most $\frac{k-1}{2}$ blocks among the $k-1$ blocks, and there are at most $\frac{n}{k} $ choices in each block. We aim to construct a set $\Omega \subseteq \calU$ such that for all distinct $\br,\br' \in \Omega$, 
\begin{equation}\label{eq:eqOmega}
 \|\br-\br'\|_0 > \frac{k-1}{2}.
\end{equation}
Suppose that we construct $\Omega$ by picking elements in $\calU$ uniformly at random. When adding the $j$-th point to $\Omega$, the probability that this point violates~\eqref{eq:eqOmega} with respect to the previously added points is upper bounded by
\begin{equation}
 \frac{(j-1)\binom{k-1}{\frac{k-1}{2}} \left(\frac{n}{k}\right)^{(k-1)/2}}{(\frac{n}{k})^{k-1}}.
\end{equation}
Therefore, taking a union bound, we can bound the total probability that $\Omega$ fails to satisfy~\eqref{eq:eqOmega}, denoted $P_1$, by
\begin{equation}
 P_1 \le \sum_{j=1}^{|\Omega|}\frac{(j-1)\binom{k-1}{\frac{k-1}{2}} \left(\frac{n}{k}\right)^{(k-1)/2}}{(\frac{n}{k})^{k-1}} \le \frac{|\Omega|^2}{2} \frac{\binom{k-1}{\frac{k-1}{2}}}{\left(\frac{n}{k}\right)^{(k-1)/2}} \le \frac{|\Omega|^2}{2} \left(\frac{2ek}{n}\right)^{(k-1)/2}. 
\end{equation}
Then, under the condition 
\begin{equation}\label{eq:cardOmega}
 |\Omega| \le \left(\frac{n}{2ek}\right)^{(k-1)/4},
\end{equation}
it holds that $P_1 \le \frac{1}{2}$. This implies that we can choose a set $\Omega \subseteq \calU$ such that both~\eqref{eq:eqOmega} and~\eqref{eq:cardOmega} are satisfied; moreover, in the latter, we can ensure that equality holds up to insignificant rounding.

For any $\epsilon \in (0,\frac{\sqrt{3}}{2})$ (noting that the upper limit is the value such that $\sqrt{1-\epsilon^2} = \frac{1}{2}$), setting
\begin{equation}
 \calC_{\epsilon}:= \left\{\left(\frac{\epsilon}{\sqrt{k-1}}\br, \sqrt{1-\epsilon^2}\right)\,:\, \br \in \Omega\right\},
\end{equation}
we find that $\calC_{\epsilon} \subseteq S\big(\frac{1}{2}\big)$, where $S(\cdot)$ is defined in \eqref{eq:setS}. In addition, since we are considering binary-valued signals satisfying \eqref{eq:eqOmega}, we have for any $\br,\br' \in \Omega$ that
\begin{equation}
 \|\br-\br'\|_2^2 = \|\br-\br'\|_0, \quad  \frac{k-1}{2}<\|\br-\br'\|_0 \le 2(k-1).
\end{equation}
Combined with the fact that we ensured \eqref{eq:cardOmega} holds with equality (up to rounding), this completes the proof of the claim.

\subsection{Proof that Assumption \ref{assump:E_cond} Holds for $\bw \in \calC_{\epsilon}$}
\label{app:claim2_lb_gpca}
  
Fix $\bw \in \calC_{\epsilon}$, and suppose that the $m$ samples $\bx_1,\ldots,\bx_m$ are generated following the Gaussian distribution $\calN(\mathbf{0},\bSigma_{\bw})$ with $\bSigma_{\bw} = (\bar{\lambda}_1 -\bar{\lambda}_2)\bw\bw^T + \bar{\lambda}_2 \bI_n$. Let $\bV := \frac{1}{m}\sum_{i=1}^m \bx_i \bx_i^T$ be the non-centered sample covariance matrix. Clearly, we have $\bbE[\bV] = \bSigma_{\bw}$, and thus $\bE := \bV - \bSigma_{\bw}$ has mean zero.

We write $\bx_i = \mathbf{\Sigma}_{\bw}^{1/2} \bt_i$, where $\bt_i \sim\calN(\mathbf{0},\bI_n)$. As per Assumption~\ref{assump:E_cond}, fix two finite signal sets $S_1$ and $S_2$. For any $\bs_1 \in S_1$ and $\bs_2 \in S_2$, letting $\tilde{\bs}_1 = \mathbf{\Sigma}_{\bw}^{1/2} \bs_1$ and $\tilde{\bs}_2 = \mathbf{\Sigma}_{\bw}^{1/2} \bs_2$, we have 
\begin{align}
 \bs_1^T\bE\bs_2 &= \bs_1^T(\bV -\bSigma_{\bw})\bs_2 \\
 & = \frac{1}{m}\sum_{i=1}^m (\bx_i^T\bs_1)(\bx_i^T\bs_2) - \bs_1^T\bSigma_{\bw}\bs_2\\
 & = \frac{1}{m}\sum_{i=1}^m  \left((\bt_i^T\tilde{\bs}_1)(\bt_i^T\tilde{\bs}_2) - \left(\tilde{\bs}_1^T\tilde{\bs}_2\right)\right) .\label{eq:claim2_twoTerms}
 \end{align}

We upper bound~\eqref{eq:claim2_twoTerms} using a concentration argument. We observe that $\bbE[(\bt_i^T\tilde{\bs}_1)(\bt_i^T\tilde{\bs}_2)] = \tilde{\bs}_1^T\tilde{\bs}_2$. In addition, we have that for $j \in \{1,2\}$, $\bt_i^T\tilde{\bs}_j$ is sub-Gaussian, and the sub-Gasusian norm is $C\|\tilde{\bs}_j\|_2$ for some constant $C$. Applying Lemma~\ref{lem:prod_subGs}, we deduce that $(\bt_i^T\tilde{\bs}_1)(\bt_i^T\tilde{\bs}_2)$ is sub-exponential, with the sub-exponential norm being upper bounded by $C^2 \|\tilde{\bs}_1\|_2 \|\tilde{\bs}_2\|_2$. From Lemma~\ref{lem:large_dev}, it follows that for any $t>0$ satisfying $m = \Omega(t)$, the following holds with probability $1-e^{-\Omega(t)}$ (recall that $C$ may vary from line to line):
\begin{equation}
 \frac{1}{m}\sum_{i=1}^m  \left((\bt_i^T\tilde{\bs}_1)(\bt_i^T\tilde{\bs}_2) - \left(\tilde{\bs}_1^T\tilde{\bs}_2\right)\right) \le C \|\tilde{\bs}_1\|_2  \|\tilde{\bs}_2\|_2 \cdot \frac{\sqrt{t}}{\sqrt{m}} \le  C\bar{\lambda}_1 \|\bs_1\|_2\|\bs_2\|_2\cdot \frac{\sqrt{t}}{\sqrt{m}},\label{eq:main_bd_claim2}
\end{equation}
where we note that the assumption $m = \Omega(t)$ ensures that the first term is dominant in the minimum in~\eqref{eq:subexp2}, and the last inequality uses $\|\tilde{\bs}_j\|_2 \le \|\mathbf{\Sigma}_{\bw}^{1/2}\|_{2\to 2} \|\bs_j\|_2 = \sqrt{\bar{\lambda}_1}\|\bs_j\|_2$.   Taking a union bound over all $\bs_1 \in S_1$ and $\bs_2 \in S_2$ in \eqref{eq:main_bd_claim2}, and setting $t = \log (|S_1|\cdot|S_2|)$, we obtain with probability $1-e^{-\Omega(\log (|S_1| \cdot |S_2|))}$ that~\eqref{eq:bE_cond} holds. 

Finally, similarly to Example~\ref{exam:spikedCov}, we also have with probability $1-e^{-\Omega(n)}$ that
\begin{equation}\label{eq:ub_bE_claim2}
 \|\bE\|_{2\to 2} = O\left(\frac{n}{m}\right).
\end{equation}

\bibliographystyle{IEEEtran}
\bibliography{writeups}

\end{document}